\def\singlespace{\def\baselinestretch{1}\@normalsize}
\def\singlespace{\def\baselinestretch{1}\@normalsize}
\numberwithin{equation}{section}
\renewcommand{\hat}{\widehat}
\renewcommand{\hat}{\widehat}
\newcommand{\bfm}[1]{\ensuremath{\mathbf{#1}}}
   \def\bA{\bfm A}  
   \def\bB{\bfm B}  
   \def\bC{\bfm C}  
   \def\bD{\bfm D}
\def\bg{\bfm g}     
\def\bh{\bfm h}   \def\bH{\bfm H}  
   \def\bI{\bfm I}
   \def\bM{\bfm M}
\def\bp{\bfm p}     \def\PP{\mathbb{P}}
   \def\bR{\bfm R}  
\def\bs{\bfm s}   \def\bS{\bfm S}  
\def\bu{\bfm u}   \def\bU{\bfm U}  
\def\bv{\bfm v}     
\def\bw{\bfm w}   \def\bW{\bfm W}  
\def\bx{\bfm x}   \def\bX{\bfm X}  
\def\by{\bfm y}   \def\bY{\bfm Y}  
\def\bz{\bfm z}
\newcommand{\bfsym}[1]{\ensuremath{\boldsymbol{#1}}}
           \def\bDelta {\bfsym {\Delta}}
\def\btheta{\bfsym {\theta}}           
          \def\bepsilon{\bfsym \varepsilon}
             \def\bSigma{\bfsym \Sigma}
\def\bxi{\bfsym {\xi}}
\def\bzeta{\bfsym {\zeta}}
\DeclareMathOperator*{\argmin}{argmin}
\DeclareMathOperator{\cov}{cov}
\DeclareMathOperator{\diag}{diag}
\DeclareMathOperator{\E}{E}
\def\newpage{\vfill\eject}
\def\today{\ifcase\month\or
	January\or February\or March\or April\or May\or June\or
	July\or August\or September\or October\or November\or December\fi
	\space\number\day, \number\year}
\newdimen\biblioindent    \biblioindent=30pt
\newcommand{\beq}{\begin{equation}}
\newcommand{\eeq}{\end{equation}}
\newcommand{\beqn}{\begin{eqnarray}}
\newcommand{\eeqn}{\end{eqnarray}}
\newcommand{\beqnn}{\begin{eqnarray*}}
	\newcommand{\eeqnn}{\end{eqnarray*}}
\def\R{{\mathbb R}}
\def\E{{\mathbb E}}
\def\S{{\mathbb S}}
\def\P{{\mathbb P}}
\def\diag{{\rm diag}}
\def\Tr{{\rm Tr}}
\def\cov{{\rm cov}}
\def\y{{\bf y}}
\def\argmin{{\rm argmin}}
\renewcommand{\baselinestretch}{1.66}
\numberwithin{equation}{section}
\theoremstyle{plain}
\newtheorem{thm}{Theorem}[section]
\newtheorem{defn}{Definition}[section]
\newtheorem{lem}{Lemma}[section]
\newtheorem{cor}{Corollary}[section]
\newtheorem{ass}{Assumption}[section]
\theoremstyle{definition}
\newcounter{CondCounter}
\newcommand{\ltwonorm}[1]{\lVert#1\rVert_2}
\begin{document}

\title{Communication-Efficient Accurate Statistical Estimation}

\author{Jianqing Fan\thanks{Department of ORFE, Princeton University, Princeton, NJ 08544, USA. Email: \texttt{jqfan@princeton.edu}.}
	\and Yongyi Guo\thanks{Department of ORFE, Princeton University, Princeton, NJ 08544, USA. Email: \texttt{yongyig@princeton.edu}.}
	\and Kaizheng Wang\thanks{Department of IEOR, Columbia University, New York, NY 10027, USA. Email: \texttt{kaizheng.wang@columbia.edu}.}
}

\date{August 2021}

\maketitle
\onehalfspacing

\begin{abstract}
When the data are stored in a distributed manner, direct applications of traditional statistical inference procedures are often prohibitive due to communication costs and privacy concerns.
This paper develops and investigates two Communication-Efficient Accurate Statistical Estimators (CEASE), implemented through iterative algorithms for distributed optimization. In each iteration, node machines carry out computation in parallel and communicate with the central processor, which then broadcasts aggregated information to node machines for new updates.
The algorithms adapt to the similarity among loss functions on node machines, and converge rapidly when each node machine has large enough sample size. Moreover, they do not require good initialization and enjoy linear converge guarantees under general conditions. The contraction rate of optimization errors is presented explicitly, with dependence on the local sample size unveiled. In addition, the improved statistical accuracy per iteration is derived.  By regarding the proposed method as a multi-step statistical estimator, we show that statistical efficiency can be achieved in finite steps in typical statistical applications.  In addition, we give the conditions under which the one-step CEASE estimator is statistically efficient.  Extensive numerical experiments on both synthetic and real data validate the theoretical results and demonstrate the superior performance of our algorithms.
\end{abstract}

\section{Introduction}\label{sec:intro}

Statistical inference in modern era faces tremendous challenge on computation and storage. The exceedingly large size of data often makes it impossible to store all of them on a single machine. Moreover, many applications have individual agents (e.g. local governments, research labs, hospitals, smart phones) collecting data independently. Communication is prohibitively expensive due to the limited bandwidth, and direct data sharing raises concerns in privacy and loss of ownership. These constraints make it necessary to develop methodologies for distributed systems, solving statistical problems with divide-and-conquer procedures and communicating only certain summary statistics. 

Distributed statistical inference has received considerable attention recently, covering a wide spectrum of topics including $M$-estimation \citep{ZDW13,CXi14,SSZ14,RNa16,WKS17,LLS17,BFL18,WRX18,SLS18,BDS19}, principal component analysis \citep{FWW17,GSS17}, nonparametric regression \citep{SCh17,Sva17,HMO18}, quantile regression \citep{VCC17,CLZ18}, bootstrap \citep{KTS14}, confidence intervals \citep{JLY18,CLZ18}, Bayesian methods \citep{WDu13,JLY18}, etc. In the commonly-used setting, the overall dataset is partitioned and stored on $m$ node machines connected to a central processor. Most of the approaches only require one round of communication: the node machines work in parallel and send their results to the central processor, which then aggregates the information to get a final result. As typical examples, \cite{ZDW13} average the $M$-estimators on node machines; \cite{BFL18} average debiased estimators; and \cite{FWW17} average subspaces via eigen-decomposition. While these one-shot methods are communication-efficient, they only work with a small number of node machines (e.g. $m=o(\sqrt{N})$, where $N$ is the total sample size) and require large sample on each, as their theories heavily rely on asymptotic expansions of estimators. Such conditions are easily violated in practice.

Multi-round procedures come as a remedy, which alternate between local computations and global aggregations. 
It is possible to achieve optimal statistical precision after a few rounds of communications, under broader settings than those for one-shot procedures. \cite{SSZ14} propose a Distributed Approximate NEwton (DANE) algorithm where, in each iteration, each node machine minimizes a modified loss function based on its own samples and the gradient information from all other machines obtained through communication. However, for non-quadratic losses, the analysis in \cite{SSZ14} does not imply any advantage of DANE in terms of communication over distributed implementation of gradient descent. Other approximate Newton algorithms include \cite{ZXi15}, \cite{WRX18}, \cite{CLZ18} and \cite{CRo19}. \cite{JLY18} develop a Communication-efficient Surrogate Likelihood (CSL) framework for estimation and inference in regular parametric models, penalized regression, and Bayesian statistics. A similar method also appears in \cite{WKS17}. These methods no longer have restrictions on the number of machines such as $m=o(\sqrt{N})$.

Due to the nature of Newton-type methods, existing theories for these algorithms heavily rely on good initialization or even self-concordance assumption on loss functions. They essentially focus on improving an initial estimator that is already consistent but not efficient, whose ideas coincide with the classical one-step estimator \citep{Bic75}. Such initialization itself needs additional efforts and assumptions. Moreover, current results still require each machine to have sufficiently many samples so that loss functions on different machines are similar to each other. These all make the proposed methods unreliable in practice.

Aside from distributed statistical inference, there has also been a vast literature in distributed optimization. The ADMM \citep{BPC11} is a celebrated example among the numerous algorithms that handle deterministic optimization problems with minimum structural assumption. Yet, the convergence can be quite slow and it cannot fully utilize the similarity among loss functions on node machines.

In this paper, we develop and study two Communication-Efficient Accurate Statistical Estimators (CEASE) based on multi-round algorithms for distributed statistical estimation. Our new algorithms extend the DANE algorithm \citep{SSZ14} to regularized empirical risk minimization. Moreover, we provide sharp convergence guarantees for general scenarios, even if the local loss functions are dissimilar and regularization is nonsmooth. 

We assume that all the $m$ node machines have the same sample size $n$. Each has a regularized empirical risk function $f_k+g$ defined by the samples stored there, and the goal is to compute the minimizer of the overall regularized risk function $\frac{1}{m} \sum_{k=1}^{m} f_k+g$ to statistical precision. When $n$ is sufficiently large, their rates of convergence are better than or comparable to existing methods designed for this large-sample regime. For moderate or small $n$, they are still guaranteed to converge linearly even without good initialization, while other statistical methods fail. In addition, our algorithms take advantage of the similarity among $\{ f_k \}_{k=1}^m$ and thus improve over general-purpose algorithms like ADMM. They interpolate between distributed algorithms for statistical estimation and general deterministic problems. Theoretical findings are verified by extensive numerical experiments.
From a technical point of view, our algorithms use the proximal point algorithm \citep{Roc76} as the backbone and obtain inexact updates in a distributed manner. This turns out to be crucial for proving convergence under general conditions. Our techniques are potentially useful for studying other distributed algorithms.

The rest of this paper is organized as follows. Section \ref{sec-setup} introduces the algorithms. Section \ref{sec-large-sample} presents deterministic convergence results. Section \ref{sec-general} provides guarantees in statistical problems. Section \ref{sec-numerical} shows numerical results on both synthetic and real data. Section \ref{sec-discussion} concludes the paper and discusses possible future directions.

Here we list the notations used throughout the paper. We denote by $[n]$ the set $\{ 1,2,\cdots,n \}$. We write $a_n =O( b_n)$ or $a_n\lesssim b_n$ if there exists a constant $C>0$ such that $a_n\leq C b_n$ holds for sufficiently large $n$; and $a_n \asymp b_n$ if $a_n=O(b_n)$ and $b_n = O( a_n )$. Given $\bx ,\by \in \R^k$ and $r>0$, we define $B(\bx,r) = \{ \bz \in \R^k : \| \bz - \bx\|_2 \leq r \}$ and $\langle \bx, \by \rangle = \sum_{j=1}^{k} x_j y_j$. For a convex function $h$ on $\R^k$, we let $\partial h(\bx)$ be its sub-differential set at $\bx \in \R^k$, and $\argmin_{\bx \in \R^k} h(\bx)$ be the set of its minimizers if $\inf_{\bx \in \R^k} h(\bx) > -\infty$. We use $\| \cdot \|_2$ to denote the $\ell_2$ norm of a vector or operator norm of a matrix. For two sequences of random variables $\{ X_n \}_{n=1}^{\infty}$ and $\{ Y_n \}_{n=1}^{\infty}$ where $Y_n \geq 0$, we write $X_n=O_{\P}(Y_n)$ if for any $\varepsilon>0$ there exists $C>0$ such that $\P (|X_n|\geq CY_n) \leq \varepsilon$ for sufficiently large $n$.
We use $\| X \|_{\psi_2} = \sup_{p\geq 1}\frac{1}{\sqrt{p}} \E^{1/p} |X|^p$ to refer to the sub-Gaussian norm of random variable $X$, and $\| \bX \|_{\psi_2} = \sup_{\| u \|_2=1} \| \langle u , \bX \rangle \|_{\psi_2} $ to denote the sub-Gaussian norm of random vector $\bX$.

\section{The CEASE algorithm}\label{sec-setup}

\subsection{Problem setup}

Let $\mathcal{P}$ be an unknown probability distribution over some sample space $\mathcal X$. For any parameter $\btheta \in \R^p$, define its population risk $F(\btheta) = \E_{\bX \sim \mathcal{P} } \ell(\btheta;\bX)$ based on a loss function $\ell: \R^p\times \mathcal{X} \to \R$. In parametric inference problems, $\ell$ is often chosen as the negative log-likelihood function of some parametric family. Under mild conditions, $F$ is well-defined and has a unique minimizer $\btheta^*$. A ubiquitous problem in statistics and machine learning is to estimate $\btheta^*$ given i.i.d. samples $\{ \bX_i \}_{i=1}^N$ from $\mathcal{P}$, and the minimizer of the empirical risk
$f(\btheta)=\frac{1}{N} \sum_{i=1}^{N} \ell (\btheta; \bX_i )$ becomes a natural candidate. To achieve desirable precision in high-dimensional problems, it is often necessary to incorporate prior knowledge of $\btheta^*$. A principled approach is the regularized empirical risk minimization
\begin{align}
	\min_{\btheta \in \R^p } \left\{  f (\btheta ) + g(\btheta) \right\},
	\label{eqn-rerm}
\end{align}
where $g(\btheta)$ is a deterministic penalty function. Common choices for $g(\btheta)$ include the $\ell_2$ penalty $\lambda \|\btheta\|_2^2$ \citep{HKe70}, the $\ell_1$ penalty $\lambda \| \btheta\|_1$ \citep{Tib96}, and a family of folded concave penalty functions $\|p_\lambda(|\btheta|)\|_1$ such as SCAD \citep{fan2001variable} and MCP \citep{Zha10}, where $\lambda >0$ is a regularization parameter. Throughout the paper, we assume that both $\ell$ and $g$ are convex in $\btheta$, and $\ell$ is twice continuously differentiable in $\btheta$. We allow $g$ to be non-smooth (e.g. the $\ell_1$ penalty).

Consider the distributed setting where the $N$ samples are stored on $m$ machines connected to a central processor. Denote by $\mathcal{I}_k$ the index set of samples on the $k$th machine and $f_k(\btheta) = \frac{1}{|\mathcal{I}_k|} \sum_{i \in \mathcal{I}_k } \ell ( \btheta;\bX_i)$. For simplicity, we assume that $\{ \mathcal{I}_k \}_{k=1}^m$ are disjoint, $N$ is a multiple of $m$, and $|\mathcal{I}_k|=n=N/m$ for all $k\in[m]$. Then (\ref{eqn-rerm}) can be rewritten as
\begin{align}
	\min_{\btheta \in \R^p } \left\{ f(\btheta) + g(\btheta) \right\}, \qquad
	f(\btheta) = \frac{1}{m} \sum_{k=1}^{m} f_k (\btheta).
	\label{eqn-distributed}
\end{align}
Each machine $k$ only has access to its local data and hence local loss function $f_k$ and the penalty $g$. We aim to solve (\ref{eqn-distributed}) in a distributed manner with both statistical efficiency and communication-efficiency.

\subsection{Adaptive gradient enhancements and distributed algorithms in large-sample regimes}\label{section2-2}

In the large-sample regime, we drop the regularization term for now and consider the empirical risk minimization problem $\min_{\btheta \in \R^p } f(\btheta)$ for estimating $\btheta^* = \argmin_{\btheta \in \R^p} F(\btheta)$. In some problems, direct minimization of $f$ is costly, while it is easy to obtain some rough estimate $\bar\btheta$ that is close to $\btheta^*$ but not as accurate as the global minimimizer $\widehat\btheta = \argmin_{\btheta \in \R^p } f(\btheta)$. \cite{Bic75} proposes the one-step estimator based on the local quadratic approximation
and shows that it is as efficient as  $\widehat\btheta$ if the initial estimator $\bar\btheta$ is {accurate} enough. Iterating this further results in multiple-step estimators that improve the optimization error and hence statistical errors when the initial estimator is not good enough \citep{robinson1988stochastic}. This inspires us to refine an existing estimator using some proxy of $f$.

In the distributed environment, starting from an initial estimator $\bar \btheta$, the gradient vector $\nabla f(\bar\btheta)$ can easily be communicated. Construct a linear function $f^{(1)} (\btheta) = f(\bar\btheta)+\langle \nabla f(\bar\btheta) , \btheta - \bar\btheta \rangle$, the first-order Taylor expansion of $f$ around $\bar\btheta$.  The object function to be minimized can be written as
$$
f(\btheta) = f^{(1)} (\btheta) + R(\btheta), \qquad \mbox{where} \qquad R (\btheta) = f (\btheta) - f^{(1)} (\btheta).
$$
Since the linear function $f^{(1)} (\btheta)$ can easily be communicated to each node machine whereas $R(\cdot)$ can not, the latter is naturally replaced by its subsampled version at node $k$:
$$
R_k (\btheta) = f_k(\btheta) - [ f_k(\bar\btheta)+\langle \nabla f_k(\bar\btheta) , \btheta - \bar\btheta \rangle ],
$$
where $f_k(\btheta)$ is the loss function based on the data at node $k$.  With this replacement, the target of optimization at node $k$ becomes $f^{(1)}(\btheta) + R_k(\btheta)$, which equals to
$$
f_{k}(\btheta) - \langle \nabla f_{k}(\bar\btheta) - \nabla f(\bar\btheta) ,\btheta\rangle
$$
up to an additive constant.  This function will be called {gradient-enhanced loss function}, in which the gradient at point $\bar\btheta$ based on the local data is replaced by the global one.   This function has one very nice fixed point at the global minimum $\widehat \btheta$: the minimizer of the adaptive gradient-enhanced function at $\bar \btheta = \widehat \btheta$ is still $\widehat{\btheta}$.  This can easily be seen by verifying that the gradient at the point $\widehat{\btheta}$ is zero.

The idea of using such an adaptive gradient-enhanced function has been proposed in \cite{SSZ14} and \cite{JLY18}, though the motivations are different.  \cite{JLY18} develop a Commmunication-efficient Surrogate Likelihood (CSL) method using the {gradient-enhanced loss function}  $ f_1(\btheta) - \langle \nabla f_1(\bar\btheta) - \nabla f(\bar\btheta) ,\btheta\rangle$ on the first machine, uses the minimizer on that machine as a new estimate, and iterates these steps until convergence. In the presence of a regularizer $g$ in (\ref{eqn-rerm}), one simply adds $g$ to the gradient-enhanced loss; see the Algorithm \ref{alg-CSL} below.

\begin{algorithm}[t]
	\caption{CSL \citep{JLY18} }	
	\label{alg-CSL}\begin{algorithmic}		
		\STATE \textbf{{Input}}:
		Initial value $\btheta_{0}$, number of iterations $T$.
		\STATE \textbf{For} $t=0,1,2,\cdots,T-1$:
		\begin{itemize}
			\item Each machine evaluates $\nabla f_{k}(\btheta_{t})$ and sends to the
			$1$st machine;
			\item The $1$st machine computes $\nabla f(\btheta_{t})=\frac{1}{m} \sum_{k=1}^{m} \nabla f_{k}(\btheta_{t})$ and
			\begin{align*}
				\btheta_{t+1}=\argmin_{\btheta}\left\{ f_{1}(\btheta) + g(\btheta) - \langle \nabla f_{1}(\btheta_{t}) - \nabla f(\btheta_{t}) ,\btheta\rangle \right\}
			\end{align*}
			and broadcasts to other machines.
		\end{itemize}
		\STATE \textbf{{Output}}: $\btheta_{T}$.
	\end{algorithmic}
\end{algorithm}

Note that in Algorithm \ref{alg-CSL}, only the first machine solves optimization problems and others just evaluate gradients.  These machines are idling while the first one is working hard. To fully utilize the computing power of machines and accelerate convergence, all the machines can optimize their corresponding {gradient-enhanced loss functions} in parallel and the central processor then aggregates the results.
This is motivated by the Distributed Approximate NEwton (DANE) algorithm \citep{SSZ14}. Algorithm \ref{alg-vanilla} describes the procedure in detail. Intuitively, the averaging step requires little computation but helps reduce the variance of estimators on node machines and enhance the accuracy.

\begin{algorithm}[t]
	\caption{{Distributed estimation using gradient-enhanced loss}}	
	\label{alg-vanilla}\begin{algorithmic}		
		\STATE \textbf{{Input}}:
		Initial value $\btheta_{0}$, number of iterations $T$.
		\STATE \textbf{For} $t=0,1,2,\cdots,T-1$:
		\begin{itemize}
			\item Each machine evaluates $\nabla f_{k}(\btheta_{t})$ and sends to the
			central processor;
			\item The central processor computes $\nabla f(\btheta_{t})=\frac{1}{m} \sum_{k=1}^{m} \nabla f_{k}(\btheta_{t})$
			and broadcasts to machines;
			\item Each machine computes
			\begin{align*}
				\btheta_{t,k}=\argmin_{\btheta}\left\{ f_{k}(\btheta) + g(\btheta) - \langle \nabla f_{k}(\btheta_{t}) - \nabla f(\btheta_{t}) ,\btheta\rangle \right\}
			\end{align*}
			and sends to the central processor;
			\item The central processor computes $\btheta_{t+1}=\frac{1}{m}\sum_{k=1}^{m}\btheta_{t,k}$
			and broadcasts to machines.
		\end{itemize}
		\STATE \textbf{{Output}}: $\btheta_{T}$.
	\end{algorithmic}
\end{algorithm}

We now illustrate Algorithm \ref{alg-vanilla} in the context of linear regression. Given samples $\{(\bx_i, y_i)\}_{i\in[N]}$, the $k$th machine defines a quadratic loss function
\[
f_k(\btheta) = \frac{1}{2n} \sum_{i\in\mathcal I_k} (y_{i} - \bx_{i} ^{\top} \btheta )^2 = \frac{1}{2}\btheta^{\top}\widehat{\bSigma}_{k}\btheta-\widehat{\bw}_{k}^{\top}\btheta + \frac{1}{2n} \sum_{i \in \mathcal{I}_k} y_i^2.
\]
Here, $\widehat\bSigma_k = \frac{1}{n} \sum_{i\in\mathcal I_k} \bx_{i} \bx_{i}^{\top}$ and $\widehat\bw_k = \frac{1}{n} \sum_{i\in\mathcal I_k} \bx_{i} y_{i}$. The overall loss function is $f(\btheta)= \frac{1}{m} \sum_{k=1}^m f_k(\btheta)
=\frac{1}{2}\btheta^{\top}\widehat{\bSigma}\btheta-\widehat{\bw}^{\top}\btheta$, where $\widehat \bSigma = \frac{1}{m}\sum_{k=1}^m \widehat\bSigma_k$, $\widehat \bw = \frac{1}{m}\sum_{k=1}^m \widehat\bw_k$. Then the update of Algorithm \ref{alg-vanilla} in one iteration is
\begin{align}
\btheta_{t+1,k} & =(\bI-\widehat{\bSigma}_{k}^{-1}\widehat{\bSigma})\btheta_{t}+\widehat{\bSigma}_{k}^{-1}\widehat{\bw},
\label{eqn-dane1}
\\
\btheta_{t+1} & =\bigg( \bI- \frac{1}{m}\sum_{k=1}^{m}\widehat{\bSigma}_{k}^{-1} \widehat{\bSigma}  \bigg) \btheta_{t}+ \frac{1}{m}\sum_{k=1}^{m}\widehat{\bSigma}_{k}^{-1} \widehat{\bw}.
\label{eqn-dane2}
\end{align}
Intuitively, this is a form of contraction towards the global minimizer $\widehat\btheta$. As for the logistic regression, we can also write out the corresponding enhanced losses and minimize them using Newton's method. Due to space limitations, we refer to Appendix C for details.

\subsection{The CEASE Algorithm in general regimes}\label{section2-3}

Algorithms \ref{alg-CSL} and \ref{alg-vanilla} are built upon large-sample regimes, with sufficiently strong convexity of $\{ f_k+g \}_{k=1}^m$ and small discrepancy between them. This requires the local sample size $n$ to be large enough, which may not be the case in practice. Even worse, the required local sample size depends on structural parameters, making such a condition unverifiable.  In fact, our numerical experiments confirm the instability of Algorithms \ref{alg-CSL} and \ref{alg-vanilla} even for moderate $n$. A naive method of remedy is to add strict convex quadratic regularization $q(\btheta)$. While this remedy can make the algorithm converge rapidly, the nonadaptive nature of $q(\btheta)$ will lead to a wrong target. Instead of using a fixed $q$, we will adjust it according to current solutions. The idea stems from the proximal point algorithm \citep{Roc76}.

\begin{defn}
	For any convex function $h:~\R^p \to \R$, define the proximal mapping $\mathrm{prox}_{h}:~\R^p\to\R^p$, $\bx\mapsto \argmin_{\by \in \R^p } \{ h(\by) + \| \by-\bx\|_2^2/2 \}$.
\end{defn}
For a given $\alpha > 0 $, the proximal point algorithm for minimizing $h$ iteratively computes
\begin{align*}
	\bx_{t+1} = \mathrm{prox}_{\alpha^{-1}h } ( \bx_t ) = \argmin_{\bx \in \R^p} \{ h(\bx)+(\alpha/2)\|\bx-\bx_t\|_2^2 \},\qquad \forall t\geq 0,
\end{align*}
starting from some initial value $\bx_0$. It is a strongly convex optimization, shrinking towards the current value $\bx_t$. Under mild conditions, $\{ \bx_t \}_{t=0}^{\infty}$ converges linearly to some $\widehat\bx \in \argmin_{\R^p} h(\bx)$ \citep{Roc76}.

Now we take $h=f+g$ and write the proximal point iteration for our problem (\ref{eqn-distributed}):
\begin{align}
	\btheta_{t+1} = \mathrm{prox}_{\alpha^{-1}(f+g) } ( \btheta_t ) = \argmin_{\btheta \in \R^p} \left\{ f(\btheta)+g(\btheta)+ \frac{\alpha}{2} \| \btheta - \btheta_t\|_2^2 \right\}.
	\label{eqn-prox-alg}
\end{align}
Each iteration (\ref{eqn-prox-alg}) is a distributed optimization problem, whose object function is not available to node machines.  But it can be solved by Algorithms \ref{alg-CSL} and \ref{alg-vanilla}. Specifically, suppose we have already obtained $\btheta_t$ and aim for $\btheta_{t+1}$ in (\ref{eqn-prox-alg}). Letting $\tilde g (\btheta) = g(\btheta) + (\alpha/2) \|\btheta - \btheta_t\|_2^2$, Algorithm \ref{alg-vanilla} starting from $\tilde{\btheta}_0=\btheta_t$ produces iterations over $s = 0, 1, \cdots$
\begin{align*}
	\tilde\btheta_{s,k}
	&= \argmin_{\btheta \in \R^p} \left\{  f_k(\btheta) + \tilde g(\btheta) + \langle \nabla f_{k}(\tilde\btheta_{s}) - \nabla f(\tilde\btheta_{s}) ,\btheta\rangle \right\},\qquad k\in[m],\\
	\tilde{\btheta}_{s+1}&= \frac{1}{m} \sum_{k=1}^{m} \tilde\btheta_{s,k}.
\end{align*}
When $\alpha+\rho_0>\delta$, $\{ \tilde{\btheta}_s \}_{s=0}^{\infty}$ converges $Q$-linearly \footnote{According to \cite{NWr06}, a sequence $\{ \bx_n \}_{n=1}^\infty$ in $\R^p$ is said to converge $Q$-linearly to $\bx^* \in \R^p$ if there exists $r\in(0,1)$ such that $\|\bx_{n+1} - \bx^*\|_2\leq r \| \bx_n - \bx^*\|_2$ for $n$ sufficiently large.} to $\btheta_{t+1}$. On the other hand, there is no need to solve (\ref{eqn-prox-alg}) exactly, as $\mathrm{prox}_{\alpha^{-1}(f+g) } ( \btheta_t )$ is merely an intermediate quantity for computing $\widehat\btheta$. We therefore only run one iteration of Algorithm \ref{alg-vanilla} and use the resulting approximate solution as $\btheta_{t+1}$. This considerably simplifies the algorithm, reducing double loops to a single loop, and enhances statistical interpretation of the method as a multi-step estimator.  However, it makes technical analysis more challenging.  Similarly, we may also use one step of  Algorithm \ref{alg-CSL} to compute the inexact proximal update.

The above discussions lead us to propose two Communication-Efficient Accurate Statistical Estimators (CEASE) in Algorithms \ref{alg-new-single} and \ref{alg-new}, which use the proximal point algorithm as the backbone and obtain inexact updates in a distributed manner. They are regularized versions of Algorithms \ref{alg-CSL} and \ref{alg-vanilla}, with an additional proximal term in the objective functions. That term reduces relative differences of the local loss functions on individual machines, and is crucial for convergence when $\{ f_k \}_{k=1}^m$ are not similar enough. In Appendix A we introduce a variant of Algorithm \ref{alg-new} which also stablizes Algorithm \ref{alg-vanilla}.

\begin{algorithm}[t]
	\caption{ Communication-Efficient Accurate Statistical Estimators (CEASE) }	
	\label{alg-new-single}\begin{algorithmic}		
		\STATE \textbf{{Input}}:
		Initial value $\btheta_{0}$, regularizer $\alpha \geq 0$, number of iterations $T$.
		
		\STATE \textbf{For} $t=0,1,2,\cdots,T-1$:
		\begin{itemize}
			\item Each machine evaluates $\nabla f_{k}(\btheta_{t})$ and sends to the
			$1^{st}$ machine;
			\item The $1^{st}$ machine computes $\nabla f(\btheta_{t})=\frac{1}{m} \sum_{k=1}^{m} \nabla f_{k}(\btheta_{t})$ and
			\[
			\btheta_{t+1}=\argmin_{\btheta}\left\{ f_{1}(\btheta) + g(\btheta) - \langle \nabla f_{k}(\btheta_{t}) - \nabla f(\btheta_{t}) ,\btheta\rangle+\frac{\alpha}{2}\ltwonorm{\btheta-\btheta_{t}}^{2}\right\},
			\]
			and broadcasts to other machines.
		\end{itemize}
		\STATE \textbf{{Output}}: $\btheta_{T}$.	
	\end{algorithmic}
\end{algorithm}

\begin{algorithm}[t]
	\caption{CEASE with averaging}	
	\label{alg-new}\begin{algorithmic}		
		\STATE \textbf{{Input}}:
		Initial value $\btheta_{0}$, regularizer $\alpha \geq 0$, number of iterations $T$.
		
		\STATE \textbf{For} $t=0,1,2,\cdots,T-1$:
		\begin{itemize}
			\item Each machine evaluates $\nabla f_{k}(\btheta_{t})$ and sends to the
			central processor;
			\item The central processor computes $\nabla f(\btheta_{t})=\frac{1}{m} \sum_{k=1}^{m} \nabla f_{k}(\btheta_{t})$
			and broadcasts to machines;
			\item Each machine computes			
			\[
			\btheta_{t,k}=\argmin_{\btheta}\left\{ f_{k}(\btheta)+ g(\btheta) - \langle \nabla f_{k}(\btheta_{t}) - \nabla f(\btheta_{t}) ,\btheta\rangle+\frac{\alpha}{2}\ltwonorm{\btheta-\btheta_{t}}^{2}\right\}
			\]
			and sends to the central processor;
			\item The central processor computes $\btheta_{t+1}=\frac{1}{m}\sum_{k=1}^{m}\btheta_{t,k}$
			and broadcasts to machines.
		\end{itemize}
		\STATE \textbf{{Output}}: $\btheta_{T}$.	
	\end{algorithmic}
\end{algorithm}

In each iteration, Algorithm \ref{alg-new-single} has one round of communication and one optimization problem to solve. Although Algorithm \ref{alg-new} has two rounds of communication per iteration, only one round involves parallel optimization and the other is simply averaging. We will compare their theoretical guarantees as well as practical performances in the sequel. 

Algorithm \ref{alg-new} is an extension of the DANE algorithm in \cite{SSZ14} to regularized empirical risk minimization. While DANE is originally motivated by mirror descent, we view it as a \emph{distributed implementation of the proximal point algorithm}. The new perspective helps us obtain stronger convergence guarantees.
Ideas from the proximal point algorithm have appeared in the literature of distributed stochastic optimization for different purposes such as accelerating first-order algorithms \citep{LLM17} and regularizing sizes of updates \citep{WWS17}.

\section{Deterministic analysis}\label{sec-large-sample}

We first present in Section \ref{section-new-deterministic} the deterministic (almost sure) results for Algorithms \ref{alg-new-single} and \ref{alg-new} based on high-level structural assumptions. As special cases of these algorithms with $\alpha = 0$, Algorithms \ref{alg-CSL} and \ref{alg-vanilla} will be analyzed in Section \ref{sec-vanilla-deterministic}.

\subsection{Deterministic analysis of the CEASE algorithm}\label{section-new-deterministic}

\begin{defn}
	Let $h:\R^p \to \R$ be a convex function, $\Omega \subseteq \R^p$ be a convex set, and $\rho\geq 0$. $h$ is $\rho$-strongly convex in $\Omega$ if $h(\by) \geq h(\bx) + \langle \bg , \by - \bx \rangle + \frac{\rho}{2} \| \by - \bx \|_2^2 $, $\forall \bx, \by \in \Omega$ and $ \bg \in \partial h(\bx) $.
\end{defn}

\begin{ass}[Strong convexity]\label{assump-cvxity}
	$f+g$ has a unique minimizer $\widehat\btheta \in \R^p$, and is $\rho$-strongly convex in $B(\widehat\btheta,R)$ for some $R>0$ and $\rho >0$.
\end{ass}
\begin{ass}[Homogeneity]\label{assump-similarity}
	$\| \nabla^2 f_k(\btheta) - \nabla^2 f(\btheta) \|_2 \leq \delta$, $\forall k\in[m], \btheta \in B(\widehat\btheta,R)$.
\end{ass}

We will refer to $\delta$ as a homogeneity parameter.  Based on both assumptions, we define
\begin{align}
	\rho_0 = \sup \left\{ c \in [0, \rho] : \{ f_k + g \}_{k=1}^m \text{ are } c\text{-strongly convex in } B(\widehat\btheta,R) \right\}.
	\label{eqn-rho0}
\end{align}
A simple but useful fact is $\max\{ \rho - \delta , 0 \} \leq \rho_0 \leq \rho$.
In most interesting problems, the population risk $F$ is smooth and strongly convex on any compact set. When $\{ \bX_i \}_{i=1}^N$ are i.i.d. and the total sample size $N$ is large, the empirical risk $f$ concentrates around $F$ and inherits nice properties from the latter, making Assumption \ref{assump-cvxity} hold easily.

On the other hand, since the empirical risk functions $\{ f_k \}_{k=1}^m$ are i.i.d. stochastic approximations of the population risk $F$, they should not be too far away from their average $f$ provided that $n$ is not too small. Assumption \ref{assump-similarity} is a natural way of characterizing this similarity. It is a generalization of the concept ``$\delta$-related functions" for quadratic losses in \cite{ASh15}. With high probability, it holds with reasonably small $\delta$ and large $R$ under general conditions. Large $n$ implies small homogeneity parameter $\delta$ and thus similar $\{ f_k \}_{k=1}^m$.
Assumption \ref{assump-similarity} always holds with $\delta = \max_{k\in[m]} \{\sup_{\btheta \in B(\widehat\btheta,R)}\| \nabla^2 f_k(\btheta) - \E \nabla^2 f_k(\btheta) \|_2 + \sup_{\btheta \in B(\widehat\btheta,R)} \| \nabla^2 f(\btheta) - \E \nabla^2 f_k(\btheta) \|_2\}$.

The following additional assumption on smoothness of the Hessian matrix of $f+g$ is not necessary for contraction, but it helps us obtain a much stronger result on the contraction rate of Algorithm \ref{alg-vanilla}, justifying the power of the simple averaging step.

\begin{ass}[Smoothness of Hessian]\label{assump-hessian-lip}
	$g\in C^2(\R^p)$, and there exists $M \geq 0$ such that $\| [ \nabla^2 f(\btheta') + \nabla^2 g(\btheta') ] - [ \nabla^2 f(\btheta'') + \nabla^2 g(\btheta'') ] \|_2 \leq M \| \btheta'-\btheta''\|_2$, $\forall \btheta',\btheta'' \in B(\widehat\btheta,R)$.
\end{ass}

Theorem \ref{thm-main} gives contraction guarantees for Algorithms \ref{alg-new-single} and \ref{alg-new}. It is deterministic and non-asymptotic by nature. 

\begin{thm}\label{thm-main}
	Let Assumptions \ref{assump-cvxity} and \ref{assump-similarity} hold. Consider the  multi-step estimators $\{ \btheta_t \}_{t=0}^{T}$ generated by Algorithm \ref{alg-new-single} or \ref{alg-new}. Suppose that $\btheta_0 \in B( \widehat\btheta , R/2 )$ and $[ \delta / ( \rho_0+\alpha ) ]^{2} < \rho / ( \rho+2\alpha )$.
	\begin{enumerate}
		\item [(a)]For both Algorithms \ref{alg-new-single} and \ref{alg-new}, we have
		\begin{align}
			\| \btheta_{t+1} - \widehat\btheta \|_2 \leq \| \btheta_{t} - \widehat\btheta\|_2 \cdot  \frac{ \frac{\delta}{\rho_0+\alpha} \sqrt{\rho^2 + 2\alpha\rho} + \alpha }{\rho+\alpha}, \qquad 0\leq t \leq T-1;
			\label{ineq-thm-main-1}
		\end{align}
		\item  [(b)] If Assumption \ref{assump-hessian-lip} also holds, then for Algorithm \ref{alg-new} we have
		\begin{align}
			\| \btheta_{t+1} - \widehat\btheta \|_2 \leq \| \btheta_t - \widehat\btheta\|_2 \cdot  \frac{ \gamma_t \sqrt{\rho^2 + 2\alpha\rho} + \alpha }{\rho+\alpha},\qquad 0\leq t\leq T-1,
			\label{ineq-thm-main-2}
		\end{align}
		where we define $\gamma_t =
		\frac{\delta}{\rho_0+\alpha} \cdot \min \{ 1 , \frac{\delta}{\rho+\alpha} (1+ \frac{M}{\rho_0+\alpha} \| \btheta_t - \widehat\btheta\|_2 ) \}$;
		\item  [(c)] Both multiplicative factors in (\ref{ineq-thm-main-1}) and (\ref{ineq-thm-main-2}) are strictly less than 1.
	\end{enumerate}
\end{thm}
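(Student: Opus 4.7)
The plan is to view each CEASE update as an inexact proximal point step for $f+g$ and to reduce the contraction estimate to a two-dimensional geometric optimization whose feasible set comes from firm non-expansivity of the proximal operator. Define $T_\alpha(\bxi)=\argmin_{\btheta}\{(f+g)(\btheta)+(\alpha/2)\|\btheta-\bxi\|_2^2\}$ and the auxiliary iterate $\bz_t=T_\alpha(\btheta_t)$; note $T_\alpha(\hat\btheta)=\hat\btheta$. I would first establish three ingredients. (a) The contraction $\|\bz_t-\hat\btheta\|_2\le\alpha\|\btheta_t-\hat\btheta\|_2/(\rho+\alpha)$, from $\rho$-strong monotonicity of $\partial(f+g)$ applied to $\alpha(\btheta_t-\bz_t)\in\partial(f+g)(\bz_t)$ and $0\in\partial(f+g)(\hat\btheta)$. (b) The firm non-expansivity identity $\|\bz_t-\btheta_t\|_2^2+\|\bz_t-\hat\btheta\|_2^2\le\|\btheta_t-\hat\btheta\|_2^2$, derived by expanding $\|\bz_t-\btheta_t\|_2^2=\|(\bz_t-\hat\btheta)-(\btheta_t-\hat\btheta)\|_2^2$ and plugging in $\|T_\alpha\bx-T_\alpha\by\|_2^2\le\langle T_\alpha\bx-T_\alpha\by,\bx-\by\rangle$. (c) The per-machine error $\|\btheta_{t,k}-\bz_t\|_2\le\delta\|\bz_t-\btheta_t\|_2/(\rho_0+\alpha)$: the key algebraic step is that by optimality of $\bz_t$ we have $-\nabla f(\bz_t)-\alpha(\bz_t-\btheta_t)\in\partial g(\bz_t)$, and substituting this element into the subdifferential of the local subproblem objective $F_k$ at $\bz_t$ collapses everything to $\bE_k:=[\nabla f_k-\nabla f](\bz_t)-[\nabla f_k-\nabla f](\btheta_t)\in\partial F_k(\bz_t)$; combined with $(\rho_0+\alpha)$-strong monotonicity of $\partial F_k$ and $\|\bE_k\|_2\le\delta\|\bz_t-\btheta_t\|_2$ from Assumption~\ref{assump-similarity}, the bound follows.

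Combining (c) with the triangle inequality gives $\|\btheta_{t+1}-\hat\btheta\|_2\le ap+q$ with $a=\delta/(\rho_0+\alpha)$, $p=\|\bz_t-\btheta_t\|_2$, $q=\|\bz_t-\hat\btheta\|_2$: directly for Algorithm~\ref{alg-new-single}, and for Algorithm~\ref{alg-new} via $\|\btheta_{t+1}-\bz_t\|_2\le\frac{1}{m}\sum_k\|\btheta_{t,k}-\bz_t\|_2\le ap$. Setting $r=\|\btheta_t-\hat\btheta\|_2$, ingredients (a) and (b) reduce the problem to maximizing the linear functional $ap+q$ over $\{(p,q):p,q\ge0,\ p^2+q^2\le r^2,\ q\le\alpha r/(\rho+\alpha)\}$. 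A short check shows the hypothesis $[\delta/(\rho_0+\alpha)]^2<\rho/(\rho+2\alpha)$ is exactly what forces the unconstrained Cauchy-Schwarz maximizer on the circle $p^2+q^2=r^2$ to overshoot the cap $q\le\alpha r/(\rho+\alpha)$, so the constrained maximum lies at $q=\alpha r/(\rho+\alpha)$, $p=r\sqrt{\rho(\rho+2\alpha)}/(\rho+\alpha)$, with value $r[a\sqrt{\rho^2+2\alpha\rho}+\alpha]/(\rho+\alpha)$, which is precisely~(\ref{ineq-thm-main-1}). The same algebra shows this rate is strictly below $1$ iff $a^2<\rho/(\rho+2\alpha)$, establishing the last bullet.

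For the sharper bound~(\ref{ineq-thm-main-2}) under Assumption~\ref{assump-hessian-lip}, I would strengthen (c) by exploiting cancellation in the averaging step. Subtracting the first-order conditions of $F_k$ at $\btheta_{t,k}$ and $\bz_t$ and applying the Taylor-integral representation for $\nabla f_k$ and $\nabla g$ gives $\bA_k\bd_k=-\bE_k$, with $\bd_k=\btheta_{t,k}-\bz_t$ and $\bA_k=\int_0^1[\nabla^2 f_k+\nabla^2 g](\bz_t+s\bd_k)\,ds+\alpha I$. Because $\sum_k(\nabla f_k-\nabla f)\equiv 0$ pointwise, one has $\sum_k\bE_k\equiv 0$, so applying any common $k$-independent matrix $\bA^{-1}$ inside the sum annihilates the average, giving $\frac{1}{m}\sum_k\bd_k=-\frac{1}{m}\sum_k(\bA_k^{-1}-\bA^{-1})\bE_k$. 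Choosing $\bA=\nabla^2(f+g)(\bz_t)+\alpha I$, Assumption~\ref{assump-hessian-lip} controls $\|\bA_k-\bA\|_2\lesssim\delta+M\|\bd_k\|_2$; combining $\|\bA_k^{-1}-\bA^{-1}\|_2\le\|\bA_k-\bA\|_2/[(\rho_0+\alpha)(\rho+\alpha)]$ with $\|\bd_k\|_2\le\delta\|\bz_t-\btheta_t\|_2/(\rho_0+\alpha)$ from (c) and $\|\bz_t-\btheta_t\|_2\le\|\btheta_t-\hat\btheta\|_2$ from (b) yields $\|\btheta_{t+1}-\bz_t\|_2\le\gamma_t\|\bz_t-\btheta_t\|_2$. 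Feeding $\gamma_t$ in place of $a$ into the 2D optimization above proves~(\ref{ineq-thm-main-2}), and the rate remains below $1$ since $\gamma_t\le a$. The main technical difficulty lies in this last linearization: separating cleanly the $\delta$- and $M$-contributions to recover $\gamma_t$ exactly as stated, and verifying by induction that all iterates $\btheta_{t,k}$ and $\bz_t$ stay within $B(\hat\btheta,R)$ so that Assumptions~\ref{assump-cvxity}--\ref{assump-hessian-lip} remain in force throughout.
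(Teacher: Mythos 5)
Your proposal is correct and follows essentially the same route as the paper's proof: the paper likewise treats each update as an inexact proximal point step, bounds $\| \btheta_{t+1} - \btheta_t^+ \|_2$ by invoking Theorem \ref{thm-vanilla} on the modified objective $f + g + (\alpha/2)\| \cdot - \btheta_t \|_2^2$ (your ingredient (c) and your averaging cancellation with $\bA_k$, $\bA$ reproduce exactly the arguments of Lemmas \ref{lem-vanilla-contraction} and \ref{lem-vanilla-averaging}), and combines this with the firm non-expansiveness and prox-contraction facts of Lemma \ref{lem-prox-nonexpansiveness}. Your circle-plus-cap maximization of $ap+q$ is the same final computation the paper carries out via monotonicity of $h(x)=\gamma_t\sqrt{1-x^2}+x$ on $[0,1/\sqrt{1+\gamma_t^2}]$, and the localization issue you flag (keeping $\btheta_{t,k}$, $\bz_t$ inside $B(\hat\btheta,R)$) is exactly what the paper resolves with Lemma \ref{lem-cvx-inverse} and induction.
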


In the contraction factor in (\ref{ineq-thm-main-1}), the two summands $ \frac{\delta \sqrt{\rho^2 + 2\alpha\rho} }{(\rho_0+\alpha)^2} $ and $\frac{\alpha }{\rho+\alpha}$ come from bounding the inexact proximal update $\| \btheta_{t+1} - \mathrm{prox}_{\alpha^{-1} (f+g) } (\btheta_t) \|_2$ and the residual $\| \mathrm{prox}_{\alpha^{-1} (f+g) } (\btheta_t) - \widehat\btheta \|_2$, respectively. Similar results hold for (\ref{ineq-thm-main-2}). The condition $[ \delta / ( \rho_0+\alpha ) ]^{2} < \rho / ( \rho+2\alpha )$ ensures that both contraction factors are less than 1. Note that \eqref{ineq-thm-main-2} requires Assumption \ref{assump-hessian-lip}, which forces $g$ to be smooth.

Theorem \ref{thm-main} shows the $Q$-linear convergence of the sequence $\{ \btheta_t \}_{t=0}^{\infty}$ generated by both Algorithms \ref{alg-new-single} and \ref{alg-new} under quite general settings. The contraction rate depends explicitly on the structural parameters and the choice of $\alpha$. The local loss functions $\{ f_k \}_{k=1}^m$ just need to be convex and smooth, and the convex penalty $g$ is allowed to be non-smooth, e.g. the $\ell_1$ norm. On the contrary, most algorithms for distributed statistical estimation are only designed for smooth problems, and many of them are only rigorously studied when the loss functions are quadratic or self-concordant \citep{SSZ14,ZXi15,WWS17}.  This is another important aspect of our contributions.

We immediately see from Theorem \ref{thm-main} that Algorithms \ref{alg-new-single} and \ref{alg-new} converge linearly as long as $[ \delta / ( \rho_0+\alpha ) ]^{2} < \rho / ( \rho+2\alpha )$, which is guaranteed to hold by choosing sufficiently large $\alpha$. By contrast, however, we'll show in Section \ref{sec-vanilla-deterministic} that Algorithms \ref{alg-CSL} and \ref{alg-vanilla} (corresponding to $\alpha = 0$) hinges on the homogeneity assumption $\rho_0 > \delta$ in Theorem \ref{thm-vanilla}, i.e. the functions $\{ f_k \}_{k=1}^m$ must be similar enough. In the statistical setting, this requires the local sample size $n$ to be large. Therefore, proper regularization provides a safety net for the algorithms under general regimes with potentially insufficient local sample size. Corollary \ref{cor-main-alpha} below gives a guideline for choosing $\alpha$ to make Algorithms \ref{alg-new-single} and \ref{alg-new} converge in general.

\begin{cor}\label{cor-main-alpha}
	Let Assumptions \ref{assump-cvxity} and \ref{assump-similarity} hold, $\btheta_0 \in B(\widehat\btheta,R/2)$, and $\{ \btheta_t \}_{t=0}^{T}$ be the iterates of Algorithm \ref{alg-new-single} or \ref{alg-new}. With any $\alpha \geq 4 \delta^2/\rho$, both algorithms converge with contraction factors in (\ref{ineq-thm-main-1}) and (\ref{ineq-thm-main-2}) bounded by $( 1 - \frac{\rho}{ 10 ( \alpha + \rho ) } )$. 
\end{cor}

On the other hand, consider the case where the local loss functions have small relative difference $\delta/\rho$. In this case, Theorem \ref{thm-vanilla} states that the contraction factors for unregularized versions ($\alpha=0$) of Algorithms \ref{alg-new-single} and \ref{alg-new} are in the same order of $\delta/\rho$ and $(\delta/ \rho)^2$, respectively, which are smaller than the contracting factors with $\alpha > 0$.  The following corollary characterizes the upper bound for $\alpha$ so that the contraction factors remain at these small orders.

\begin{cor}\label{cor-main-similar}
	Let Assumptions \ref{assump-cvxity} and \ref{assump-similarity} hold, $\btheta_0 \in B(\widehat\btheta,R/2)$, and suppose $\alpha \leq C\delta^2/\rho$ for some constant $C$. There exist constants $C_1$ and $C_2$ such that the followings hold when $\delta / \rho$ is sufficiently small:
	\begin{itemize}
		\item [(a)]Algorithms \ref{alg-new-single} and \ref{alg-new} have the contraction property
		\begin{align*}
			\| \btheta_{t+1} - \widehat\btheta \|_2 \leq C_1 (\delta/\rho) \| \btheta_{t} - \widehat\btheta\|_2,\qquad 0 \leq t \leq T-1;
		\end{align*}
		\item [(b)] If Assumption \ref{assump-hessian-lip} also holds and $\|\btheta_t - \widehat\btheta\|_2 \leq \rho/M$, then
		for Algorithm \ref{alg-new}
		\begin{align*}
			\| \btheta_{t+1} - \widehat\btheta \|_2 \leq C_2 (\delta/\rho)^2 \| \btheta_{t} - \widehat\btheta\|_2.
		\end{align*}
		
	\end{itemize}
\end{cor}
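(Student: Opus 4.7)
The plan is to plug the assumption $\alpha \leq C\delta^2/\rho$ into the explicit contraction bounds (\ref{ineq-thm-main-1}) and (\ref{ineq-thm-main-2}) from Theorem \ref{thm-main} and track the dominant terms as $\delta/\rho \to 0$. First, I would verify that the hypothesis $[\delta/(\rho_0+\alpha)]^2 < \rho/(\rho+2\alpha)$ of Theorem \ref{thm-main} holds: since $\alpha \leq C\delta^2/\rho$, we have $\alpha/\rho = O((\delta/\rho)^2)$, and using $\rho_0 \geq \max\{\rho-\delta,0\}$, the LHS is $(1+o(1))(\delta/\rho)^2$ while the RHS is $1+o(1)$, so the condition is satisfied once $\delta/\rho$ is sufficiently small.

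For the first bullet, I would decompose the contraction factor in (\ref{ineq-thm-main-1}) into the two summands $\frac{\delta}{\rho_0+\alpha}\cdot\frac{\sqrt{\rho^2+2\alpha\rho}}{\rho+\alpha}$ and $\frac{\alpha}{\rho+\alpha}$. Writing $\sqrt{\rho^2+2\alpha\rho} = \rho\sqrt{1+2\alpha/\rho}$, the first factor equals $\frac{\delta}{\rho_0+\alpha}\cdot\frac{\rho\sqrt{1+2\alpha/\rho}}{\rho+\alpha}$, which under $\alpha/\rho = O((\delta/\rho)^2)$ and $\rho_0 \geq \rho-\delta$ simplifies to $(1+o(1))\delta/\rho$. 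The second summand $\alpha/(\rho+\alpha) \leq C(\delta/\rho)^2$ is of smaller order and is absorbed into the constant $C_1$. This gives the desired $C_1\delta/\rho$ bound.

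For the second bullet, under Assumption \ref{assump-hessian-lip} and $\|\btheta_t-\hat\btheta\|_2 \leq \rho/M$, the term $\frac{M}{\rho_0+\alpha}\|\btheta_t-\hat\btheta\|_2$ is bounded by $\rho/(\rho_0+\alpha) = 1+o(1)$, so $\gamma_t \leq \frac{\delta}{\rho_0+\alpha}\cdot\frac{\delta}{\rho+\alpha}(2+o(1)) = O((\delta/\rho)^2)$. Plugging into (\ref{ineq-thm-main-2}), the first summand $\gamma_t\sqrt{\rho^2+2\alpha\rho}/(\rho+\alpha)$ becomes $O((\delta/\rho)^2)$, and the second summand $\alpha/(\rho+\alpha)$ is $O((\delta/\rho)^2)$ as before, yielding the $C_2(\delta/\rho)^2$ contraction. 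An inductive verification is needed to ensure the iterates remain in $B(\hat\btheta,R/2)$, but since the per-step contraction is less than $1$, this follows from $\btheta_0\in B(\hat\btheta,R/2)$; similarly, the condition $\|\btheta_t-\hat\btheta\|_2 \leq \rho/M$ propagates once established.

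Finally, the iteration count follows by iterating the linear contraction: to reach accuracy $O((p/N)^{1/2+\epsilon_0})$ starting from $\|\btheta_0-\hat\btheta\|_2$ with per-step factor $C_1\delta/\rho$, we need $T$ such that $(C_1\delta/\rho)^T\|\btheta_0-\hat\btheta\|_2 \lesssim (p/N)^{1/2+\epsilon_0}$, i.e.\ $T \geq \log\bigl(\|\btheta_0-\hat\btheta\|_2 \cdot (N/p)^{1/2+\epsilon_0}\bigr)/\log(\rho/(C_1\delta))$, which matches the stated $O(\log(\|\btheta_0-\hat\btheta\|_2\cdot N/p)/\log(\rho/\delta))$. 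The main obstacle is bookkeeping the asymptotic expansions precisely enough to extract the universal constants $C_1,C_2$ uniformly in $t$; this should be manageable because $\rho_0$ is fixed and the contraction only tightens as iterates approach $\hat\btheta$, so all the $o(1)$ terms can be absorbed once $\delta/\rho$ is below a suitable threshold.
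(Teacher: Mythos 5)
Your proposal is correct and follows essentially the same route as the paper's proof: verify the hypothesis of Theorem \ref{thm-main} using $\rho_0 \geq \rho-\delta$ and $\alpha/\rho = O((\delta/\rho)^2)$, then bound the two summands of each contraction factor, with the $\alpha/(\rho+\alpha)$ term absorbed as a higher-order contribution. The only cosmetic difference is that the paper works with explicit constants (e.g.\ $\rho_0+\alpha \geq \rho/2$, $\delta/(\rho_0+\alpha)\leq 2\delta/\rho$, $\gamma_t \leq (2\delta/\rho)^2$) rather than $1+o(1)$ bookkeeping, and it leaves the iterate-localization induction implicit.
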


Note that the second result above only holds given Assumption \ref{assump-hessian-lip}, which requires a smooth regularization $g$. Corollary \ref{cor-main-similar} reveals that  by choosing  $\alpha \asymp  \delta^2/\rho$, the contraction factors are essentially the same as those of the unregularized ($\alpha=0$) algorithms when  $\delta/\rho$ is small. By combining Corollaries \ref{cor-main-alpha} and \ref{cor-main-similar}, we use $\alpha \asymp \delta^2/\rho$ as a default choice for Algorithms \ref{alg-new-single} and \ref{alg-new} to become both fast and robust. They are reliable in general cases (Corollary \ref{cor-main-alpha}) and efficient in nice cases (Corolary \ref{cor-main-similar} and Theorem~\ref{thm-main} with $\alpha = 0$)

Algorithms \ref{alg-new-single} and \ref{alg-new} attain communication efficiency by utilizing similarity among local loss functions: The contraction factors in Corollary \ref{cor-main-similar} go to zero if $\delta/\rho$ does. In fact, both algorithms achieve $\varepsilon$-accuracy within
$
O( \max \{ 1, (\delta/\rho)^2 \} \log(\frac{\| \btheta_0 - \widehat\btheta\|_2}{\varepsilon}) )
$
rounds of communication. In contrast, the distributed accelerated gradient descent requires $O( \sqrt{ \kappa_0 } \log ( \frac{\| \btheta_0 - \widehat\btheta\|_2}{\varepsilon} ) )$ rounds of communication to achieve $\varepsilon$-accuracy \citep{SSZ14}, with $\kappa_0$ being the condition number of $(f+g)$, which does not take advantage of sample size $n$. As long as $ \delta /\rho \ll \kappa_0^{1/4} $, Algorithms \ref{alg-new-single} and \ref{alg-new} communicate less than the distributed accelerated gradient descent. And again, our general results for Algorithms \ref{alg-new-single} and \ref{alg-new} also apply to the case with nonsmooth penalty functions while those for distributed accelerated gradient descent do not.

Moreover, if $(f+g)$ is smooth and $\btheta_t$ is reasonably close to $\widehat\btheta$, Corollary \ref{cor-main-similar} shows that each iteration of Algorithm \ref{alg-new} is roughly equivalent to two iterations of Algorithm \ref{alg-new-single}, although the former only has one round of optimization. The averaging step in Algorithm \ref{alg-new} reduces the error as much as the optimization step, while taking much less time. In this case, Algorithm \ref{alg-new} is preferable, and our numerical experiments also confirm this.

For unregularized empirical risk minimization, i.e. $g=0$ in (\ref{eqn-distributed}), Algorithm \ref{alg-new} reduces to an extension or a useful case of the DANE algorithm \citep{SSZ14}. 
In this case, Theorem \ref{thm-main} and its corollaries refine the analysis of DANE \citep{SSZ14} in several aspects. On the one hand, our analysis handle both smooth and nonsmooth problems, while in \cite{SSZ14}, the theoretical analysis beyond quadratic loss requires extremal choice of tuning parameters and does not show any advantage over distributed implementation of the gradient descent. On the other hand, as mentioned in Section \ref{sec-setup}, we derive Algorithm \ref{alg-new} from the proximal point algorithm with a new prospective, which leads to sharp convergence analysis along with suggestions on choosing the tuning parameter $\alpha$. As a by-product, we close a gap in the theory of DANE in non-quadratic settings. Our analysis techniques are potentially useful for other distributed optimization algorithms, especially when the loss is not quadratic.

\subsection{Deterministic analysis in large-sample regimes}\label{sec-vanilla-deterministic}


In this section, we restrict ourselves to large-sample regimes where the local sample size $n$ is sufficiently large such that $\rho_0 > \delta \geq 0$, where $\rho_0$ is the strong convexity parameter in (\ref{eqn-rho0}). The following theorem gives deterministic results for Algorithms \ref{alg-CSL} and \ref{alg-vanilla}.

\begin{thm}\label{thm-vanilla}
	Let Assumptions \ref{assump-cvxity} and \ref{assump-similarity} hold, and $\rho_0 > \delta \geq 0$. Consider the iterates $\{ \btheta_t \}_{t=0}^{\infty}$ produced by Algorithm \ref{alg-CSL} or \ref{alg-vanilla}, with $\btheta_0 \in B( \widehat\btheta, R)$. Then
	\[
	\| \btheta_{t+1}-\widehat\btheta\|_2 \leq (\delta/\rho_0) \|\btheta_{t}-\widehat\btheta\|_2,\qquad\forall t\geq 0.
	\]
	In addition, if Assumption \ref{assump-hessian-lip} also holds, then for Algorithm \ref{alg-vanilla} we have
	\begin{align*}
		\| \btheta_{t+1}-\widehat\btheta\|_2 \leq \frac{\delta}{\rho_0} \|\btheta_{t}-\widehat\btheta\|_2 \cdot \min\left\{ 1 , \frac{\delta}{\rho} \left(1+ \frac{M}{\rho_0} \| \btheta_t - \widehat\btheta\|_2 \right) \right\},\qquad \forall t \geq 0.
	\end{align*}
\end{thm}

Note that the last inequality requires Assumption \ref{assump-hessian-lip} and thus a smooth regularization $g$. The first part of Theorem \ref{thm-vanilla} is a refinement of the analysis in \cite{JLY18}, since we allow the initial estimator to be inaccurate and we have more explicit rates of contraction of optimization errors. This will be further demonstrated in Section \ref{section-vanilla-glm}. 

The second part points out benefits of the averaging step, which is a novel result. Similar to the results on Algorithm \ref{alg-new}, Theorem \ref{thm-vanilla} shows that when $\btheta_t$ is close to $\hat \btheta$, with an additional standard assumption on Hessian smoothness, the averaging step alone in Algorithm \ref{alg-vanilla} is almost as powerful as an optimization step in terms of contraction: The contracting constant will eventually be $\frac{\delta}{\rho_0} \frac{\delta}{\rho}$. With negligible computational cost, averaging significantly improves upon individual solutions $\{ \btheta_{t,k} \}_{k=1}^m$ by doubling the speed of convergence. 

\section{Statistical analysis}\label{sec-general}

We further analyze the statistical properties of the above algorithms under a generalized linear model. Essentially, both the CSL methods and the CEASE algorithm are $T$-step estimators, starting from the initial estimator $\btheta_0$. The question here is the effect of iterations in the multiple step estimators and the role of the initial estimator. We start with statistical analysis of Algorithms \ref{alg-new-single} and \ref{alg-new} in Section \ref{sec-CEASE-stat}, and then study Algorithms \ref{alg-CSL} and \ref{alg-vanilla} in Section \ref{section-vanilla-glm}. In Section \ref{sec-practice}, we provide practical guidance when implementing the CEASE algorithm based on these analysis.

\subsection{Multi-step estimators in general regimes}\label{sec-CEASE-stat}

The deterministic analysis in Section \ref{sec-vanilla-deterministic} applies to a wide range of statistical models. Here we consider the generalized linear model with canonical link, where our samples are i.i.d. pairs $\{(\bx_i,y_i)\}_{i=1}^{N}$ of covariates and responses and the conditional density of $y_i$ given $\bx_i$ is given by
$$
h(y_i;\bx_i,\btheta^*)=c(\bx_i, y_i)\exp\left(\y_i(\bx_{i}^\top\btheta^*)-b(\bx_{i}^\top\btheta^*)\right).
$$
For simplicity, we let the dispersion parameter to be 1 as we do not consider the issue of over-dispersion;  $b(\cdot)$ is some known convex function, and $c$ is a known function such that $h$ is a valid probability density function.
The negative log-likelihood of the whole data is an affine transformation of $f(\btheta)=\frac{1}{m}\sum_{k=1}^mf_k(\btheta)$ with
$$f_k(\btheta)=\frac{1}{n}\sum_{i\in\mathcal I_k} \left[b(\bx_{i}^\top\btheta)-y_i(\bx_{i}^\top\btheta)\right].$$
It is easy to verify that
$$
\nabla f_k(\btheta)=\frac{1}{n}\sum_{i\in\mathcal I_k}[b'(\bx_{i}^\top\btheta) - y_i ] \bx_{i}
\quad \mbox{and} \quad
\nabla^2 f_k(\btheta)=\frac{1}{n}\sum_{i\in\mathcal I_k}b''(\bx_{i}^\top\btheta)\bx_{i}\bx_{i}^\top.
$$

Assume that $\bx_{i}=(1,\bu_{i}^{\top})^{\top} \in \R^p$, where $\{\bu_i\}_{i=1}^{N} \subseteq \R^{p-1}$ are i.i.d. random covariate vectors with zero mean and covariance matrix $\bSigma$. Suppose there exist universal positive constants $A_1$, $A_2$ and $A_3$ such that $A_1\leq \ltwonorm{\bSigma }\leq A_2 p^{A_3}$. Let $\bSigma^*=\E ( \bx_{i}\bx_{i}^\top ) = \begin{pmatrix}
1 & \mathbf{0} \\
\mathbf{0} & \bSigma
\end{pmatrix}
$, $g$ be a deterministic penalty function, and $F(\btheta) = \E f(\btheta)$ be the population risk function. Below we impose some standard regularity assumptions.

\begin{ass}\label{assump-glm-subg-b2-b3}
	
	\begin{itemize}
		\item  $\{ \bSigma^{-1/2} \bu_{i} \}_{i=1}^{N}$ are i.i.d. sub-Gaussian random vectors.
		\item  For all $x\in\R$, $|b''(x)|$ and $|b'''(x)|$ are bounded by some constant.
		\item  $\| \btheta^* \|_2$ is bounded by some constant.
	\end{itemize}
\end{ass}

As in Assumptions \ref{assump-cvxity} and \ref{assump-hessian-lip}, the following general assumptions are also needed for our analysis. Here $R$ is some positive quantity that satisfies $R<A_4 p^{A_5}$ for some universal constants $A_4$ and $A_5$.

\begin{ass}\label{assump-cvxity-stat}
	There exists a universal constant $\rho>0$ such that  $(F+g)$ is $\rho$-strongly convex in $B(\btheta^*, 2R)$.
\end{ass}

The following smoothness assumption is only needed for a part of our theory; it is used to show that the averaging step in Algorithm \ref{alg-new} can significantly enhance the accuracy.

\begin{ass}\label{assump-hlip-stat}
	$g\in C^2(\R^p)$, and there exists a universal constant $M\geq 0$ such that  $\| [ \nabla^2 F(\btheta') + \nabla^2 g(\btheta') ] - [ \nabla^2 F(\btheta'') + \nabla^2 g(\btheta'') ] \|_2 \leq M \| \btheta'-\btheta''\|_2$, $\forall \btheta',\btheta'' \in B(\btheta^*,2R)$.
\end{ass}

Under the model assumptions above, we can explicitly determine the rate of $\delta$ in  Assumption \ref{assump-similarity}. In particular, we will show in Lemma E.5 in Appendix E that
$$
\max_{k\in[m]}\max_{\btheta\in{B(\widehat\btheta, R)}}\| \nabla^2 f_k(\btheta) - \nabla^2 f (\btheta) \|_2 =O_{\mathbb{P}} \left( \ltwonorm{\bSigma}\sqrt{\frac{ p(\log p+\log N)}{n}}\right),
$$
provided that $n\geq c p$ for an arbitrary positive constant $c$.
Therefore, with high probability,  $\delta \asymp \ltwonorm{\bSigma}\sqrt{ p(\log p+\log N)/{n}}$. Omitting the logarithmic terms, we see that the contraction factor is approximately $\kappa\sqrt{p/n}$, where $\kappa\triangleq \ltwonorm{\bSigma}/\rho$ can be viewed as a condition number.  This rate is more explicit on $p$ and $\kappa$ than that in \cite{JLY18}, where finite $p$ and $\kappa$ are assumed.
In addition, with a smooth regularization, Algorithm \ref{alg-new} benefits from the averaging step in that it improves the contraction rate to approximately $\kappa^2p/n$.

Let $\btheta_t$ be the $t$-th iterate of one of the proposed algorithms.
It is clear that the statistical error of the estimator $\btheta_t$ is bounded by its optimization error and the statistical error of $\widehat\btheta$:
$$
\| \btheta_t - \btheta^*\|_2 \leq  \| \btheta_t - \widehat \btheta\|_2 +  \| \widehat \btheta - \btheta^*\|_2.
$$
The second term is well-studied in statistics, which is of order $O_{\P}( \sqrt{p/N})$ under mild conditions. In the following theorem, we show that for the first term (i.e. the optimization error), with proper choice of $\alpha$, each iteration of Algorithm \ref{alg-new-single} or \ref{alg-new} makes $\btheta_t$ closer to the global minimum $\widehat \btheta$ by some order depending on local sample size. Thus, through finite steps, the optimization errors are eventually negligible in comparison with statistical errors (assuming $N$ is of order $(n/p)^a$ for a finite $a$ in typical applications), and the distributed multi-step estimator will work as well as the global minimum as if the data were aggregated in the central server.

\begin{thm}\label{thm-glm-alpha}
	Suppose that Assumptions \ref{assump-glm-subg-b2-b3} and \ref{assump-cvxity-stat} hold, and with high probability the initial value $\btheta_0\in B(\widehat\btheta, R/2)$.
	Let $\eta=\kappa^2(\log N)p/n$ and $\kappa=\ltwonorm{\bSigma}/\rho$.
	For any $c_1,c_2>0$, there exists $C>0$ such that the followings hold with high probability:
	\begin{itemize}
		\item [(a)]  If $n\geq c_1 p$ and $\alpha \geq C \rho\eta$, then both Algorithms \ref{alg-new-single} and \ref{alg-new} have linear convergence
		$$
		\| \btheta_{t} - \widehat\btheta \|_2 \leq \left[ 1 - \frac{\rho}{ 10 ( \alpha + \rho ) } \right]^t \| \btheta_0 - \widehat \btheta \|_2,\qquad \forall t \geq 0;
		$$
		
		\item [(b)]If $\eta$ is sufficiently small and $\alpha \leq c_2 \rho\eta$, then for both algorithms
		$$
		\| \btheta_{t} - \widehat\btheta \|_2 = O_{\P}( \eta^{t/2} \| \btheta_0 - \widehat \btheta \|_2 ),\qquad \forall t \geq 0;
		$$
		in addition, if Assumption \ref{assump-hlip-stat} also holds, then for Algorithm \ref{alg-new}  we have
		$$
		\| \btheta_{t} - \widehat\btheta \|_2 = O_{\P} ( \eta^{t - t_0} \| \btheta_{t_0} - \widehat\btheta\|_2 ), \qquad \forall t \geq t_0,
		$$
		where $t_0 = \lceil \frac{2 \log (CMR/\rho) }{ \log(1/\eta) } \rceil $.
	\end{itemize}
\end{thm}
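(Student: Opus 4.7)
The plan is to derive Theorem~\ref{thm-glm-alpha} from the general deterministic contraction results (Theorem~\ref{thm-main} and Corollaries~\ref{cor-main-alpha}, \ref{cor-main-similar}) by quantifying, under the GLM assumptions, the abstract structural parameters $\rho$, $\rho_0$, $\delta$ on a high-probability event. The central probabilistic task is to pin down the homogeneity parameter $\delta$; once that is done, the two bullets of the theorem come essentially for free by matching the scale of $\alpha$ to $\delta^{2}/\rho$.

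I would first establish two good events. For strong convexity, Assumption~\ref{assump-cvxity-stat} gives $\rho$-strong convexity of $F+g$ on $B(\btheta^{*},2R)$; a uniform concentration argument for $\nabla^{2}f$ around $\nabla^{2}F$ on an $\epsilon$-net of this ball (using sub-Gaussianity of $\bSigma^{-1/2}\bu_i$, boundedness of $b''$ and $b'''$, and $n\ge c_{1}p$) transfers the strong convexity to $f+g$ on $B(\hat\btheta,R)$ with the same $\rho$ up to a $1+o(1)$ factor, verifying Assumption~\ref{assump-cvxity}. For homogeneity, a matrix Bernstein / sub-exponential bound for $\nabla^{2}f_k(\btheta)-\nabla^{2}f(\btheta)$ at each fixed $\btheta$, combined with an $\epsilon$-net of $B(\hat\btheta,R)$ of cardinality $(R/\epsilon)^{O(p)}$, a Lipschitz-in-$\btheta$ bound using $\|b'''\|_\infty$, and a union bound over $k\in[m]$ (with $m\le N$, so absorbed into $\log N$), yields
\[
\delta \;\lesssim\; \|\bSigma\|_2\sqrt{\tfrac{p(\log p+\log N)}{n}}
\]
uniformly, so that $\delta/\rho\lesssim\sqrt{\eta}$ and $\delta^{2}/\rho\lesssim\rho\eta$ (up to the absorbed logarithmic factors). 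This is the step already outlined as Section~\ref{section-pf-glm}, and it is also where most of the technical work lies.

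With these two events in hand, the first bullet follows by choosing the constant $C$ in $\alpha\ge C\rho\eta$ large enough that $\alpha\ge 4\delta^{2}/\rho$ on the good event; Corollary~\ref{cor-main-alpha} then applies directly and gives the advertised factor $1-\rho/(10(\alpha+\rho))$. For the first part of the second bullet, the condition $\alpha\le c_{2}\rho\eta$ combined with $\delta^{2}/\rho\asymp\rho\eta$ yields $\alpha\le C'\delta^{2}/\rho$, so the first part of Corollary~\ref{cor-main-similar} applies and gives a per-step factor of order $\delta/\rho\asymp\sqrt{\eta}$, which iterates to $\|\btheta_t-\hat\btheta\|_2=O_{\P}(\eta^{t/2}\|\btheta_0-\hat\btheta\|_2)$.

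For the improved rate under Assumption~\ref{assump-hlip-stat}, I would invoke the second part of Corollary~\ref{cor-main-similar}, which gives the factor $C_{2}(\delta/\rho)^{2}\asymp\eta$ for Algorithm~\ref{alg-new} once $\|\btheta_{t}-\hat\btheta\|_{2}\le \rho/M$. By the already-established $\sqrt{\eta}$ contraction, starting from $\|\btheta_{0}-\hat\btheta\|_{2}\le R/2$ this smallness condition is reached after at most $t_{0}=\lceil 2\log(CMR/\rho)/\log(1/\eta)\rceil$ iterations, after which the $\eta$-rate holds and gives the claimed bound. The hard part will be the uniform Hessian-concentration argument underlying the bound on $\delta$, since the summands $b''(\bx_{i}^{\top}\btheta)\bx_{i}\bx_{i}^{\top}$ are only sub-exponential and the control must be uniform in $\btheta$; every subsequent step is essentially algebraic bookkeeping tying $\alpha$ to $\delta^{2}/\rho$.
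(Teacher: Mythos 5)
Your overall strategy --- bound the Hessian homogeneity parameter by a uniform concentration argument (whitening, an $\epsilon$-net over the ball with a Lipschitz bound from $\|b'''\|_\infty$, Bernstein's inequality, and a union bound over machines), then feed $\delta\asymp\rho\sqrt{\eta}$ into the deterministic machinery of Theorem~\ref{thm-main} and Corollaries~\ref{cor-main-alpha} and~\ref{cor-main-similar} with $\alpha$ calibrated to $\delta^{2}/\rho$ --- is exactly the paper's route (Lemma~\ref{lem-glm-delta} combined with Lemma~\ref{lem-main-general}), and your treatment of the second bullet, including the burn-in count $t_0$ before the faster rate kicks in, matches the paper's argument.

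The gap is in the first bullet. You propose to verify Assumption~\ref{assump-cvxity} by transferring the $\rho$-strong convexity of $F+g$ to $f+g$ ``with the same $\rho$ up to a $1+o(1)$ factor'' and then invoke Corollary~\ref{cor-main-alpha}. That transfer needs $\sup_{\btheta}\|\nabla^2 f(\btheta)-\nabla^2 F(\btheta)\|_2=o(\rho)$, i.e.\ essentially $\eta=o(1)$; but the first bullet only assumes $n\ge c_1 p$ for an arbitrary $c_1>0$, so $\eta$ may be large and concentration at the local rate $\|\bSigma\|_2\sqrt{p(\log N)/n}$ gives no useful lower bound on the strong convexity modulus of $f+g$ near $\hat\btheta$. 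Corollary~\ref{cor-main-alpha} would then only deliver a factor $1-\rho'/(10(\alpha+\rho'))$ for whatever empirical modulus $\rho'$ survives, not the claimed factor with the population $\rho$. The paper resolves this not by verifying Assumption~\ref{assump-cvxity} but by proving a modified deterministic result, Lemma~\ref{lem-main-general}, in which the homogeneity parameter is $\hat\delta=2\sup_{k}\sup_{\btheta}\|\nabla^2 f_k(\btheta)-\nabla^2 F(\btheta)\|_2$, measured against the \emph{population} Hessian: this single quantity simultaneously controls $\|\nabla^2 f_k-\nabla^2 f\|_2$ and the empirical-versus-population discrepancy, so the contraction factor can be stated with the population $\rho$ and the discrepancy is absorbed by the regularizer once $\alpha\gtrsim\hat\delta^{2}/\rho$. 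To close the first bullet you need either this reorganization or a separate argument that $\nabla^2 f$ concentrates at the global rate $\sqrt{p/N}$ (which would impose an additional hypothesis on $N$ rather than on $n$).
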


In contrast to a fixed contraction derived by \cite{SSZ14}, Theorem \ref{thm-glm-alpha} explains the significant benefits of large local sample size even in the presence of a non-smooth penalty: the optimization error shrinks by a factor that converges to zero explicitly in $n$. As a brief illustration, let us consider the case with smooth loss functions, sufficient local sample size and no regularization. Let $\btheta_0$ be the average of individual estimators on node machines. By Corollary 2 in \cite{ZDW13}, this simple divide-and-conquer estimator has accuracy $\ltwonorm{ \btheta_0 -\btheta^*}=O_{\P}( \max \{\sqrt{\frac{p}{N}},\frac{p}{n},\frac{\kappa \sqrt{p\log p}}{n} \} )$.  Using the explicit expression of $\eta$, we can easily deduce from Theorem~\ref{thm-glm-alpha} (b)  that the one-step estimator $\btheta_1$ obtained by Algorithm \ref{alg-new-single} behaves the same as the global minimizer $\widehat \btheta$ if the local sample size satisfies
$
	n^3 \gg N (\kappa^2 p \log N) (p + \kappa^2 \log p).
$	
In this case, the local optimization in Algorithm~\ref{alg-new-single} can further be replaced by using the explicit one-step estimator as in \cite{Bic75} and \cite{JLY18}, since the initial estimator is in a consistent neighborhood. More generally, the $t$-step estimator $\btheta_t$ has negligible optimization error under even weaker local sample size requirement:
$ 
  n^{t+2} \gg N( \kappa^2 p \log N)^t (p + \kappa^2 \log p). \label{eq2.6}
$
 A similar remark applies to Algorithm \ref{alg-new}.


Similar to the deterministic results, the averaging step is about as effective as the optimization step when $g$ is smooth and $\btheta_t$ is sufficiently close to $\hat\btheta$ in that after a finite $t_0$ iterations. See a simplified example in Section \ref{section-vanilla-glm} with $\alpha = 0$.

As for the initialization, the condition $\btheta_0 \in B(\widehat\btheta,R/2)$ is mild, since $\widehat\btheta$ is usually a consistent estimate and  $\| \btheta^* \|_2$ is bounded (Assumption \ref{assump-glm-subg-b2-b3}). In contrast with \cite{JLY18}, we allow inaccurate initial value such as $\btheta_0= \mathbf{0}$ and give more explicit rates of contraction even when $p$ and $\kappa$ diverge. On the other hand, the accuracy of the initial estimator $\btheta_0$ does help reduce the number of iterations.  

Combining the results to be presented in Theorem \ref{thm-vanilla-glm}, we'll see that by choosing $\alpha\asymp \rho\eta$, Algorithms \ref{alg-new-single} and \ref{alg-new} inherit all the merits of Algorithms \ref{alg-CSL} and \ref{alg-vanilla} in the large-$n$ regime -- fast linear contraction of rate $\sqrt{\eta}=\kappa\sqrt{p(\log N)/n}$, and for Algorithm \ref{alg-new}, a even faster rate of $\eta=\kappa^2p(\log N)/n$ to $\widehat\btheta$ when the loss  and the penalty functions are smooth. These facts also guarantee that Algorithms \ref{alg-new-single} and \ref{alg-new} reach the statistical efficiency in $O(\frac{\log \ltwonorm{\btheta_0-\widehat\btheta} + \log (N/p)}{\log (1/\eta)})$ iterations. On the other hand, compared to Algorithms \ref{alg-CSL} and \ref{alg-vanilla}, Algorithms \ref{alg-new-single} and \ref{alg-new} overcome the difficulties with a small local sample size $n$ in that as long as $n/p$ is bounded away from some small constant (which is reasonable for many big-data problems of interest), shrinkage of optimization error is guaranteed. Moreover, while it is hard to check whether $n$ is sufficiently large in practice, proper choice of $\alpha$ always guarantees linear convergence, and the contraction rates adapt to the sample size $n$. In this way, Algorithms \ref{alg-new-single} and \ref{alg-new} perfectly resolve the main issue of their vanilla versions.

We can get stronger results in the specific case of distributed linear regression, where the contraction rate has nearly no dependence on the conditional number $\kappa$. Due to space constraints, we put all the details in Appendix B.

\subsection{Multi-step estimators in large-sample regimes}\label{section-vanilla-glm}

We now present the contraction of optimization error of Algorithms \ref{alg-CSL} and \ref{alg-vanilla}. 

\begin{thm}\label{thm-vanilla-glm}
	Suppose that Assumptions \ref{assump-glm-subg-b2-b3} and \ref{assump-cvxity-stat} hold, and with probability tending to one, $\btheta_0 \in B(\widehat\btheta,R)$ for some $R>\ltwonorm{\widehat\btheta-\btheta^*}$. For Algorithms \ref{alg-CSL} and \ref{alg-vanilla}, we have
	$$
	\| \btheta_{t} - \widehat\btheta \|_2   = O_{\P} ( \eta^{t/2} \| \btheta_0 - \widehat \btheta \|_2 ),\qquad \forall t \geq 0,
	$$
	where $\eta = \kappa^2 p (\log N)/n$. In addition, let Assumption \ref{assump-hlip-stat} also hold. There exists some constant $C$ such that for Algorithm \ref{alg-vanilla} we have
	\begin{align}
		\| \btheta_{t} - \widehat\btheta \|_2 = O_{\P} ( \eta^{t - t_0} \| \btheta_{t_0} - \widehat\btheta \|_2 ),\qquad \forall t \geq t_0,\label{eq-vanilla-stat-2}
	\end{align}
	where $t_0 = \lceil \frac{ 2 \log ( C M R / \rho ) }{ \log (1/\eta) } \rceil$.
\end{thm}

The strengthened result \eqref{eq-vanilla-stat-2} requires Assumption \ref{assump-hlip-stat} and thus smooth $g$. Theorem \ref{thm-vanilla-glm} shows that when $n$ is sufficiently large, Algorithms \ref{alg-CSL} and \ref{alg-vanilla} behave similarly as Algorithms \ref{alg-new-single} and \ref{alg-new} -- faster convergence with the larger $n$, mild restrictions on initialization, and averaging speeds up contraction given smooth loss. However, there is no convergence guarantee in general regimes. Section \ref{sec-numerical} further shows that with insufficient local sample size, the practical performance Algorithms \ref{alg-CSL} and \ref{alg-vanilla} is less satisfactory. 

Finally, to see how the averaging step reduces the statistical error (i.e. the distance between the estimator and $\btheta^*$), 
we continue to look at the linear regression example mentioned at the end of Section \ref{section2-2}. For simplicity, assume $\{\bx_i\}_{i\in[N]}$ are i.i.d. standard normal random vectors. \eqref{eqn-dane1} and \eqref{eqn-dane2} can be expressed as
\begin{align*}
\widehat\bSigma^{1/2}(\btheta_{t+1, k}  - \widehat\btheta )&= \left(\bI - \widehat\bSigma^{1/2}\widehat\bSigma_k^{-1}\widehat\bSigma^{1/2}\right)\cdot \widehat\bSigma^{1/2}(\btheta_{t}  - \widehat\btheta ),\\
\widehat\bSigma^{1/2}(\btheta_{t+1}  - \widehat\btheta )&= \left[\bI - \widehat\bSigma^{1/2}\bigg(\frac{1}{m}\sum_{k=1}^m \widehat\bSigma_k^{-1}\bigg)\widehat\bSigma^{1/2}\right]\cdot \widehat\bSigma^{1/2}(\btheta_{t}  - \widehat\btheta ).
\end{align*}
If further $n \gg p$, then the two contraction factors satisfy
\begin{align}
&\| \bI - \widehat\bSigma^{1/2}\widehat\bSigma_k^{-1}\widehat\bSigma^{1/2}\|_2 = O_{\mathbb P}(\sqrt{p/n}),
\label{eqn-contraction-1}
\\
&\bigg\|\bI - \widehat\bSigma^{1/2}\bigg(\frac{1}{m}\sum_{k=1}^m \widehat\bSigma_k^{-1}\bigg)\widehat\bSigma^{1/2}\bigg\|_2  = O_{\mathbb P}(p/n).
\label{eqn-contraction-2}
\end{align}
When the Algorithm~\ref{alg-CSL} applies to this problem, the expression changes slightly to
$$
\widehat\bSigma^{1/2}(\btheta_{t+1, 1}  - \widehat\btheta ) = \left(\bI - \widehat\bSigma^{1/2}\widehat\bSigma_1^{-1}\widehat\bSigma^{1/2}\right)\cdot \widehat\bSigma^{1/2}(\btheta_{t, 1}  - \widehat\btheta ).
$$
The smaller magnitude in \eqref{eqn-contraction-2} is due to the averaging.

Suppose that we initialize the algorithm using the one-shot average $\btheta_0 = \frac{1}{m} \sum_{k=1}^{m} \widehat\btheta_{k}$, where $\widehat\btheta_{k}$ is the least squares solution on the $k$th machine. When $n = O(\sqrt{Np})$, we have $ \sqrt{p/N} = O(p/n )$. \cite{ZDW13} assert that $\| \widehat\btheta - \btheta^* \|_2 = O_{\mathbb P}(\sqrt{\frac{p}{N}}) $, $\| \btheta_0 - \widehat\btheta \|_2 = O_{\PP} (\frac{p}{n})$. Thus, the initial statistical error is $\| \btheta_0 - \btheta^* \|_2 = O_{\PP} (
\max\{
\sqrt{\frac{p}{N}}, \frac{p}{n}
\}
 ) = O_{\PP} (\frac{p}{n})$.
By the contraction properties in (\ref{eqn-contraction-1}) and (\ref{eqn-contraction-2}), the optimization errors are
\begin{align*}
& \|\btheta_{1,k} - \widehat\btheta \|_2 = O_{\mathbb P}(\sqrt{p / n}) \|\btheta_{0} - \widehat\btheta \|_2 = O_{\mathbb P}( p^{3/2} / n^{3/2} ), \\
& \|\btheta_{1} - \widehat\btheta \|_2 = O_{\mathbb P}(p / n) \|\btheta_{0} - \widehat\btheta \|_2 = O_{\mathbb P}( p^2 / n^2 ).
\end{align*}
We can see that when $p^2 / n^2 \ll \sqrt{ p / N} \ll p^{3/2} / n^{3/2}$ (or equivalently, $ p^{3/4} N^{1/4}  \ll  n \ll p^{3/2} N^{1/3}$), the optimization error is negligible for $\btheta_{1}$, but is not negligible for $\btheta_{1, k}$. 
When $n$ is smaller, even more iterations are needed. A refined analysis of distributed least squares is in Appendix B.

\subsection{Guidance on practice}\label{sec-practice}

We now provide some general guidance on how to implement the CEASE algorithm in practice. First, we recommend choosing an initialization depending on the magnitude of the local sample size $n$. In particular, when $n$ is not very large compared to the dimension $p$, a zero initialization would be more robust. On the other hand, given a moderate or large $n$, the one-shot average estimator will lead to extremely fast convergence.

Second, according to Theorem \ref{thm-glm-alpha}, it suffices to take $\alpha$ of the order of $\rho \kappa^2 p\log N / n$. In practice, setting $\alpha$ to be a small multiple of $p/ n$ seems suitable in many occasions. 

Finally, as is already shown, both Algorithm \ref{alg-new-single} and \ref{alg-new} reach statistical efficiency in $O(\frac{\log \ltwonorm{\btheta_0-\widehat\btheta} + \log (N/p)}{\log (1/\eta)})$ iterations. In both our simulations and real data examples, the CEASE algorithms with a properly chosen $\alpha$ converge to the centralized estimator within 10 iterations. With a moderate $n$, a warm start further boosts the convergence speed.

\section{Numerical experiments}\label{sec-numerical}
\subsection{Synthetic data}\label{sec-syn}

We first conduct distributed logistic regression to illustrate the effect of local sample size and initialization on convergence. We keep the total sample size $N = 10000$ and the dimensionality $p = 101$ fixed, and generate the i.i.d. data $\{ ( \bx_i, y_i ) \}_{i=1}^N $ as follows: $\bx_i = (1,\bu_i^{\top})^{\top}$ with $\bu_i \sim N( \mathbf{0}_{p-1} , \bSigma )$ and $\bSigma = \diag ( 10,5,2,1\cdots 1 ) \in \R^{(p-1)\times (p-1)}$; $\P(y_i=1) = 1 - \P(y_i = 0) = 1 / (1 + e^{ -\bx_i^{\top}\btheta^* }) $ where $\btheta^* \in \R^{p}$ is a random vector with norm 3 whose direction is chosen uniformly at random from the sphere. We use the natural logarithm of the estimation error $\| \btheta_t - \btheta^* \|_2$ to measure the performance of different algorithms, including multiple versions of the CEASE algorithms, GIANT \citep{WRX18}, ADMM \citep{BPC11} and accelerated gradient descent \citep{Nes83}.

Figure \ref{fig-log} shows how the estimation errors evolve with iterations. The curves show the average values over 100 independent runs; the error bands correspond to one standard deviation. The regimes ``large $n$", ``moderate $n$" and ``small $n$" refer to $(n,m)=(2000,5)$, $(1000,10)$ and $(250,40)$; ``zero initialization" and ``good initialization" refer to $\btheta_0 = \mathbf{0}$ (bottom panel) and $\bar{\btheta}$ (top panel), respectively. Here $\bar{\btheta}$ is the one-shot distributed estimator \citep{ZDW13} that averages the individual estimators on node machines.
	According to Figure \ref{fig-log}, the standard deviation of each iterate is around 0.1. For the ``large $n$, good initialization'' regime, all of the iterates are unsurprisingly very close to the optimal solution. Their error bands will cover up the curves. So we omit the bands in that case for the sake of clarity.

With proper regularization, the two CEASE algorithms are the only ones that converge rapidly in all scenarios. The purely deterministic methods ADMM \citep{BPC11} and accelerated gradient descent \citep{Nes83} are also reliable but slow. Other distributed algorithms like unregularized CEASE and GIANT \citep{WRX18} easily fail when the local sample size is small or the initialization is uninformative. In addition, the CEASE with averaging (Algorithm \ref{alg-new}) is superior to the one without averaging (Algorithm \ref{alg-new-single}). For example, when $(n,m)=(1000,10)$, the averaged CEASE with $\alpha = 0$ converges while the one without averaging does not. Hence the averaging step leads to better performance.

\begin{figure}[ht]
	\centering
	\subfigure{
		\includegraphics[width=1\linewidth]{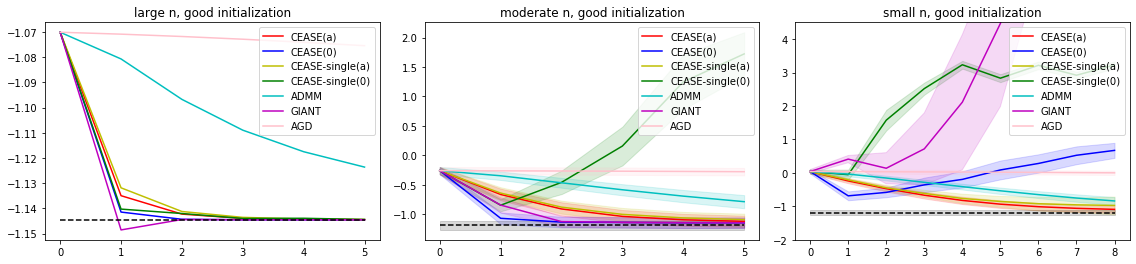}  
	}
	\\
	\subfigure{
		\includegraphics[width=1\linewidth]{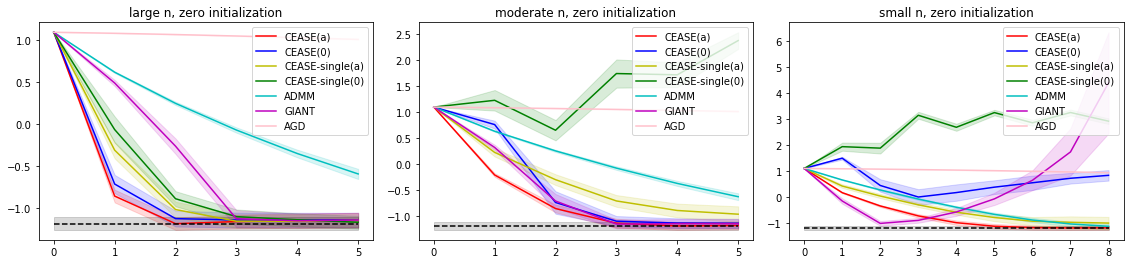}    
	}
	\caption{Impacts of local sample size and initialization on convergence. The $x$-axis and $y$-axis are the number of iterations and $\log \| \btheta_t - \btheta^* \|_2$. The dashed lines show the error of the minimizer of the overall loss function. The top and bottom panels use $\bar {\btheta}$ and $\mathbf{0}$ for initialization, respectively. CEASE(a) and CEASE(0) refer to Algorithm \ref{alg-new} with $\alpha=0.15 p/n$ and $0$; CEASE-single(a) and CEASE-single(0) refer to Algorithm \ref{alg-new-single} with $\alpha=0.15 p/n$ and $0$, respectively. In particular, CEASE-single(0) is equivalent to the CSL algorithm in \cite{JLY18}.}\label{fig-log}
\end{figure}

We also test the efficacy of our algorithms in the distributed $\ell_1$-regularized logistic regression, where the penalty $g$ is nonsmooth (See Appendix D for details). To summarize, our simulations demonstrate several important properties of the CEASE Algorithms:

\begin{itemize}
	\item In all scenarios, the CEASE Algorithms converge rapidly, usually within several steps, which is consistent with our theory;
	\item The CEASE Algorithms efficiently utilize statistical structures and similarities among local losses, and benefit from the averaging step with smooth loss functions;
	\item The CEASE Algorithms are also able to handle the most general situations (e.g. small local sample size, uninformative initialization) with convergence guarantees.
	
\end{itemize}

\subsection{Real data}

	As a real data example, we choose the Fashion-MNIST dataset \citep{XRV17} as a testbed for comparison of algorithms. The whole dataset consists of 70000 grayscale images of fashion products in 10 classes, each of which has 6000 training samples and 1000 testing samples. We choose the 7th and 9th classes (Sneakers and Ankle boots) and the goal is to train a classifier that distinguishes them. Each image has $28\times 28 = 784$ pixels, represented by a feature vector in $[0, 1]^{784}$. The number of training (or testing) samples is $6000 \times 2 = 12000$ (or $1000 \times 2 = 2000$). We randomly partition the training set and conduct logistic regression in a distributed manner. The performance metric is the classification error on the testing set. Figure \ref{fig-spam} shows the average performance of the CEASE algorithms, ADMM, GIANT and AGD based on 100 independent runs, together with error bars showing one standard deviation. Here ``large $n$", ``moderate $n$" and ``small $n$" refer to $(n,m)=(1200,10)$, $(480, 25)$ and $(240,50)$, respectively. All of the iterations are initialized with the one-shot average \citep{ZDW13}. The experiments on this real data example also support our theoretical findings.

	\begin{figure}[ht]
		\centering
		\includegraphics[width=1\linewidth]{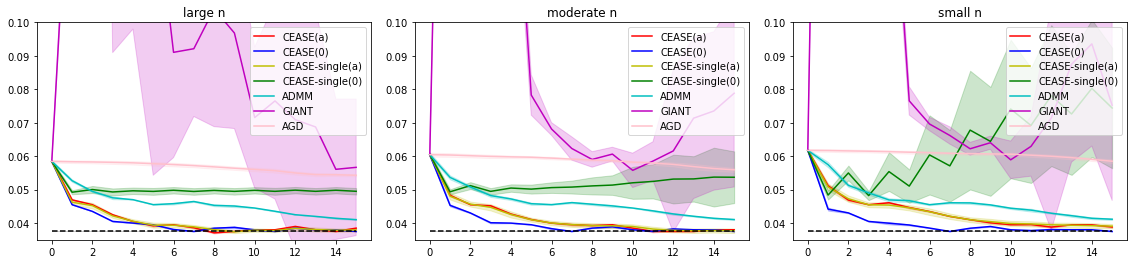}  
		\caption{Fashion-MNIST dataset. The $x$-axis and $y$-axis are the number of iterations and the testing error.  The dashed lines show the error of the classifier based on all of the training samples.   All of the iterations are initialized with the one-shot average $\bar{\btheta}$.
			CEASE(a) and CEASE(0) refer to Algorithm \ref{alg-new} with $\alpha=0.15 p/n$ and $0$; CEASE-single(a) and CEASE-single(0) refer to Algorithm \ref{alg-new-single} with $\alpha=0.15 p/n$ and $0$, respectively. In particular, CEASE-single(0) is equivalent to the CSL algorithm in \cite{JLY18}. GIANT and CEASE-single(0) do not converge to the optimal solution. }\label{fig-spam}
	\end{figure}

\section{Discussions}\label{sec-discussion}
We have developed two CEASE distributed estimators (Algorithms \ref{alg-new-single} and \ref{alg-new}) for statistical estimation, with theoretical guarantees and superior performance on real data. Several new directions are worth exploring. First, while we assumed exact computation for simplicity, finer analysis should allow for inexact updates in practice. Second, we hope to extend the algorithms to decentralized and asynchronous settings. Third, distributed versions of confidence regions and hypothesis tests are of great importance, and our point estimation strategies may serve as a starting point. Finally, it will be interesting to explore non-convex statistical optimization problems such as mixture models and deep learning.  We believe that the idea of gradient-enhanced loss function still plays an important role.

\section*{Acknowledgement}

We gratefully acknowledge NSF grants DMS-1662139 and  DMS-1712591, NIH grant 2R01-GM072611-14, and ONR grant N00014-19-1-2120. We acknowledge computing resources from Columbia University's Shared Research Computing Facility project, which is supported by NIH Research Facility Improvement Grant 1G20-RR030893-01, and associated funds from the New York State Empire State Development, Division of Science Technology and Innovation (NYSTAR) Contract C090171, both awarded April 15, 2010.

\newpage 

\appendix

\section*{Appendices}

Section \ref{appendix-stepsize} introduces a variant of Algorithm 4. Section \ref{appendix-quadratic} outlines the deferred results for distributed linear regression. Section \ref{appendix-logistic} shows the CEASE iterates for distributed logistic regression.
Section \ref{appendix-l1_logistic_simulations} presents the numerical results on the distributed $\ell_1$-regularized logistic regression. Section \ref{appendix-proofs} presents the  proofs of the main results. Section \ref{appendix-lemmas} lists a few technical lemmas that are used throughout the proofs.

\section{A variant of Algorithm 4}\label{appendix-stepsize}

As mentioned in the main text, Algorithm 2 is unstable when the local sample size $n$ is not sufficiently large.  The proximal gradient method Algorithm \ref{alg-vanilla-variant} was introduced to stabilize the solution path by shrinking towards the solution in the previous step.  A variant of CEASE that stablizes Algorithm 2 is to take smaller step-sizes, which we now present.  The idea is applicable to stabilize Algorithm 1 too, resulting a variant to Algorithm 3.

\begin{algorithm*}[t]
	\caption{{Distributed estimation using gradient-enhanced loss (small step-sizes)}}	
	\label{alg-vanilla-variant}\begin{algorithmic}[1]		
		\STATE \textbf{{Input}}:
		Initial value $\btheta_{0}$, number of iterations $T$, step-sizes $\{ \alpha_t \}_{t=0}^{T - 1}$.
		\STATE \textbf{For} $t=0,1,2,\cdots,T-1$:
		\begin{itemize}
			\item Each machine evaluates $\nabla f_{k}(\btheta_{t})$ and sends to the
			central processor;
			\item The central processor computes $\nabla f(\btheta_{t})=\frac{1}{m} \sum_{k=1}^{m} \nabla f_{k}(\btheta_{t})$
			and broadcasts to machines;
			\item Each machine computes
			\begin{align*}
			\btheta_{t,k}=\argmin_{\btheta}\left\{ f_{k}(\btheta) + g(\btheta) - \langle \nabla f_{k}(\btheta_{t}) - \nabla f(\btheta_{t}) ,\btheta\rangle \right\}
			\end{align*}
			and sends to the central processor;
			\item The central processor computes $\btheta_{t+1} =
			(1 - \alpha_t) \btheta_t +
			\frac{ \alpha_t }{m}\sum_{k=1}^{m}\btheta_{t,k}$
			and broadcasts to machines.
		\end{itemize}
		\STATE \textbf{{Output}}: $\btheta_{T}$.
	\end{algorithmic}
\end{algorithm*}

The only difference between Algorithms 2 and \ref{alg-vanilla-variant} lies in the aggregation step. From $\btheta_t$ the former directly jumps to the average of new individual estimators $\{ \btheta_{t,k} \}_{k=1}^m$, while the latter proceeds more cautiously in that direction. Algorithm 2 is a special case of Algorithm \ref{alg-vanilla-variant} with $\alpha_t = 1$ for all $t$. Choosing $\alpha_t \in (0, 1)$ helps stablize the iterates especially when $n$ is small.

Algorithm \ref{alg-vanilla-variant} is conceptually simple and easy to implement. To see its performance, we conduct distributed logistic regression on the first set of synthetic data in Section 5.1. The numerical results there show that Algorithm 2 fails to converge when $(n, m) = (250, 40)$. We run Algorithm \ref{alg-vanilla-variant} with constant step-size $\alpha_0 = \alpha_1 = \cdots$ under this setting, where $\alpha_0 \in \{ 1, 1/2, 1/4, 1/8 \}$. We also run CEASE for comparison. Figure \ref{fig-stepsize} summarizes all the results. Again, the curves show the average values over 100 independent runs; the error bands correspond to one standard deviation. In this experiment, the performance of Algorithm \ref{alg-vanilla-variant} with $\alpha_t = 1/4$ is similar to that of CEASE; $\alpha_t = 1/2$ leads to even faster convergence; and $\alpha_t = 1/8$ slows it down. For simplicity, we take the step-size to be constant over time. It would be interesting to explore decaying schemes such as $\alpha_t \asymp t^{-\beta}$ for some $\beta > 0$.

	\begin{figure}[ht]
	\centering
	\includegraphics[width=0.9\linewidth]{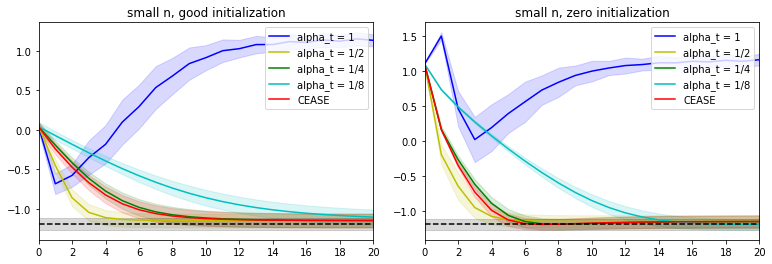}  
	\caption{Impacts of step-size $\alpha_t$ on convergence. The $x$-axis and $y$-axis are the number of iterations and $\log \| \btheta_t - \btheta^* \|_2$. The dashed lines show the error of the minimizer of the overall loss function. The left and right plots use $\bar {\btheta}$ and $\mathbf{0}$ for initialization. $\alpha_t = 1, 1/2, 1/4$ and $1/8$ refer to Algorithm \ref{alg-vanilla-variant} with the corresponding step-sizes. In particular, $\alpha_t = 1$ is equivalent to Algorithm 2. CEASE refers to Algorithm 4 with $\alpha=0.15 p/n$. }\label{fig-stepsize}
\end{figure}

The new algorithm also has some drawbacks. For instance, it implicitly assumes that the local sample size is large enough so that each local machine can solve its optimization problems reliably.
Consider the example of the distributed logistic regression with $N$ samples and $p$ variables in total. When $m$ is large so that $n = N/m < p$, the loss functions $\{ f_k \}_{k=1}^m$ on local machines are no longer strongly convex and thus do not have unique minima. Then Algorithm \ref{alg-vanilla-variant} needs to be modified because
\[
\btheta_{t,k}=\argmin_{\btheta}\left\{ f_{k}(\btheta) + g(\btheta) - \langle \nabla f_{k}(\btheta_{t}) - \nabla f(\btheta_{t}) ,\btheta\rangle \right\}
\]
is not uniquely defined. We need some pivoting rule to choose one optimal solution, such as the one with the minimum Euclidean norm. That complicates the algorithm and it is not clear how to establish theoretical guarantees then. In contrast, the quadratic proximity term in CEASE always make the objective function strongly convex. It ensures the uniqueness of $\{\btheta_{t,k} \}_{k=1}^m $ and facilitates computation.

\section{Distributed linear regression}\label{appendix-quadratic}
In distributed linear regression, recall that the $k^{th}$ machine defines a quadratic loss function
\[
\frac{1}{2n} \sum_{i\in\mathcal I_k} (y_{i} - \bx_{i} ^{\top} \btheta )^2 = \frac{1}{2}\btheta^{\top}\hat{\bSigma}_{k}\btheta-\hat{\bw}_{k}^{\top}\btheta + \frac{1}{2n} \sum_{i \in \mathcal{I}_k} y_i^2,
\]
where $\hat\bSigma_k = \frac{1}{n} \sum_{i\in\mathcal I_k} \bx_{i} \bx_{i}^{\top}$ and $\hat\bw_k = \frac{1}{n} \sum_{i\in\mathcal I_k} \bx_{i} y_{i}$. Let $f(\btheta)=\frac{1}{2}\btheta^{\top}\hat{\bSigma}\btheta-\hat{\bw}^{\top}\btheta$. Without loss of geneality we write $\bx_{i}=( 1,\bu_{i}^{\top} )^{\top} \in \R^{p}$.  

\begin{ass}\label{assump-subg-cov-samplesize}
	\begin{itemize}
		\item
		$\E \bu_i = 0$ and $\E (\bu_i \bu_i^{\top} ) = \bSigma \succ 0$. $\{ \bSigma^{-1/2 }\bu_{i} \}_{i=1}^{N}$ are i.i.d.  sub-Gaussian random vectors with bounded $\| \bSigma^{-1/2 } \bu_i \|_{\psi_2}$.
		\item The minimum eigenvalue $\lambda_{\min}(\bSigma)$ is bounded away from zero.
		
		\item
		$N / \Tr(\bSigma) \geq C >0$ and $n/\log m \geq c>0$ where $C$ and $c$ are constants.
	\end{itemize}
\end{ass}

For the least-squares, Algorithm 3 admits a close-form:
\begin{align*}
\btheta_{t+1} & =[\bI-(\hat{\bSigma}_{1}+\alpha\bI)^{-1}\hat{\bSigma}]\btheta_{t}+(\hat{\bSigma}_{1}+\alpha\bI)^{-1}\hat{\bw},
\end{align*}
and so does Algorithm 4:
\begin{align*}
\btheta_{t+1,k} & =[\bI-(\hat{\bSigma}_{k}+\alpha\bI)^{-1}\hat{\bSigma}]\btheta_{t}+(\hat{\bSigma}_{k}+\alpha\bI)^{-1}\hat{\bw},
\\
\btheta_{t+1} & =\left( \bI- \frac{1}{m}\sum_{k=1}^{m}(\hat{\bSigma}_{k}+\alpha\bI)^{-1} \hat{\bSigma}  \right) \btheta_{t}+ \frac{1}{m}\sum_{k=1}^{m}(\hat{\bSigma}_{k}+\alpha\bI)^{-1} \hat{\bw}.
\end{align*}
Intuitively, the averaging step in Algorithm 4 reduces variance and accelerates convergence. Below we study Algorithm 4 with the help of these analytical expressions. In the large sample regime, we achieve a contraction factor of $O(p/n)$ without any condition number; in the general regime, linear convergence is still guaranteed.

\begin{thm}\label{thm-dane-lm}
	Suppose Assumption \ref{assump-subg-cov-samplesize} holds and $n / p$ is bounded away from zero.  Then, there exist positive constants $C_1, C_2$ and $C_3$ such that when (i) $n \geq C_1 p$ and $\alpha\geq 0$ or (ii) $\alpha\geq C_1 \Tr(\bSigma)/n$, with probability tending to 1,
	\begin{align}
	\|  \btheta_{t} - \hat{\btheta}  \|_2 \leq
	2\sqrt{\kappa} \thickspace\eta^t \| \btheta_0- \hat{\btheta}  \|_2,\qquad \forall t\geq 0,
	\label{ineq-contraction-q}
	\end{align}
	where $\kappa=\lambda_{\max} (\bSigma) / \lambda_{\min}(\bSigma) $ and
	$\eta = 1 - \frac{ 1 - \min\{ 1/2, C_2 p/n \} }{1+ C_3\alpha}$.
	
\end{thm}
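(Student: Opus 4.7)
\emph{Step 1 (error recursion and symmetrization).} Because $\hat{\btheta}=\hat{\bSigma}^{-1}\hat{\bw}$ is a fixed point of the closed-form update given in the paper, subtracting $\hat{\btheta}$ from the displayed formula for Algorithm~\ref{alg-new} on the quadratic loss yields the linear recursion
$$
\btheta_{t+1}-\hat{\btheta}=(\bI-\bB\hat{\bSigma})(\btheta_{t}-\hat{\btheta}),\qquad \bB:=\frac{1}{m}\sum_{k=1}^{m}(\hat{\bSigma}_{k}+\alpha\bI)^{-1}.
$$
The matrix $\bI-\bB\hat{\bSigma}$ is not symmetric, but under the change of variable $\br_t := \hat{\bSigma}^{1/2}(\btheta_t-\hat{\btheta})$ it becomes $\br_{t+1}=(\bI-\bW)\br_t$ with $\bW:=\hat{\bSigma}^{1/2}\bB\hat{\bSigma}^{1/2}$ symmetric PSD. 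Hence $\|\bI-\bW\|_{2}=\max_i|1-\lambda_i(\bW)|$ governs the contraction.

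\emph{Step 2 (norm conversion and conditioning of $\hat{\bSigma}$).} Iterating and converting the $\hat{\bSigma}$-norm bound to the Euclidean norm gives
$$
\|\btheta_{t}-\hat{\btheta}\|_{2}\leq\sqrt{\hat{\kappa}}\cdot \|\bI-\bW\|_{2}^{t}\cdot \|\btheta_{0}-\hat{\btheta}\|_{2},\qquad \hat{\kappa}:=\lambda_{\max}(\hat{\bSigma})/\lambda_{\min}(\hat{\bSigma}).
$$
Sub-Gaussian concentration of $\bSigma^{-1/2}\hat{\bSigma}\bSigma^{-1/2}$ around $\bI$, together with $N/\Tr(\bSigma)\geq C$ and $\lambda_{\min}(\bSigma)\geq c$ in Assumption \ref{assump-subg-cov-samplesize}, gives $\hat{\kappa}\leq 4\kappa$ with probability tending to one, absorbing into the factor $2\sqrt{\kappa}$ in \eqref{ineq-contraction-q}.

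\emph{Step 3 (bounding $\|\bI-\bW\|_{2}$).} Because $\hat{\bSigma}^{1/2}$ commutes with $(\hat{\bSigma}+\alpha\bI)^{-1}$, write
$$
\bI-\bW=\alpha(\hat{\bSigma}+\alpha\bI)^{-1}-\hat{\bSigma}^{1/2}\bigl[\bB-(\hat{\bSigma}+\alpha\bI)^{-1}\bigr]\hat{\bSigma}^{1/2}.
$$
The first summand is the proximal bias, with spectral norm $\alpha/(\lambda_{\min}(\hat{\bSigma})+\alpha)$. By operator convexity of $\bA\mapsto\bA^{-1}$, the bracket is PSD, and expanding $(\hat{\bSigma}_k+\alpha\bI)^{-1}=[(\hat{\bSigma}+\alpha\bI)+\bE_k]^{-1}$ as a Neumann series in $\bE_k:=\hat{\bSigma}_k-\hat{\bSigma}$ kills the linear-in-$\bE_k$ term since $\sum_k\bE_k=0$, leaving
$$
\bB-(\hat{\bSigma}+\alpha\bI)^{-1}=(\hat{\bSigma}+\alpha\bI)^{-1}\Bigl[\tfrac{1}{m}\textstyle\sum_k\bE_k(\hat{\bSigma}+\alpha\bI)^{-1}\bE_k\Bigr](\hat{\bSigma}+\alpha\bI)^{-1}+\text{higher order}.
$$
Sub-Gaussian matrix concentration for $\|\bE_k\|_{2}$, sharpened via the union bound over $m$ machines thanks to $n/\log m\geq c$, combined with either $n\geq C_1 p$ or $\alpha\geq C_1 \Tr(\bSigma)/n$, yields
$$
\bigl\|\hat{\bSigma}^{1/2}[\bB-(\hat{\bSigma}+\alpha\bI)^{-1}]\hat{\bSigma}^{1/2}\bigr\|_{2}\lesssim \frac{\min\{1,\,p/n\}}{1+\alpha},
$$
and summing with the bias term produces $\|\bI-\bW\|_{2}\leq 1-\frac{1-\min\{1/2,C_2 p/n\}}{1+C_3\alpha}$.

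\emph{Main obstacle.} The crux is Step~3 in the small-$n$ regime: when $p/n$ is not small, $\|\bE_k\|_{2}$ does not shrink, and the Neumann expansion diverges in plain operator norm. The remedy is to exploit $\alpha\geq C_1\Tr(\bSigma)/n$, which, via sub-Gaussian concentration applied uniformly in $k$ (using $n/\log m\geq c$), furnishes a multiplicative sandwich $c_{1}(\bSigma+\alpha\bI)\preceq\hat{\bSigma}_{k}+\alpha\bI\preceq c_{2}(\bSigma+\alpha\bI)$ for every $k$ with high probability. This lets us perform the Neumann expansion in the rescaled variables $(\bSigma+\alpha\bI)^{-1/2}\bE_{k}(\bSigma+\alpha\bI)^{-1/2}$, whose averaged squared magnitude contracts at the $p/n$-rate even when $p\gg n$. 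Carefully tracking the resulting constants to match $C_2$ and $C_3$, and verifying the quadratic term dominates the series, completes the proof.
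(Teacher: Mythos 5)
Your plan follows essentially the same route as the paper's proof: the same linear error recursion symmetrized via $\br_t=\hat\bSigma^{1/2}(\btheta_t-\hat\btheta)$, the same split of $\bI-\bW$ into the proximal bias $\alpha(\hat\bSigma+\alpha\bI)^{-1}$ plus the deviation of $\frac{1}{m}\sum_k(\hat\bSigma_k+\alpha\bI)^{-1}$ from $(\hat\bSigma+\alpha\bI)^{-1}$, the same cancellation of the linear term in $\bE_k=\hat\bSigma_k-\hat\bSigma$ upon averaging (leaving a quadratic term of order $p/n$), and the same remedy for the small-$n$ regime of performing the expansion in the whitened variables $(\hat\bSigma+\alpha\bI)^{-1/2}\bE_k(\hat\bSigma+\alpha\bI)^{-1/2}$, which is exactly the quantity $\bDelta_k$ controlled in Lemmas \ref{lem-dane-quadratic-1}--\ref{lem-dane-quadratic-regularity-N}. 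The steps you defer (uniform-in-$k$ sub-Gaussian concentration using $n/\log m\geq c$, the $\hat\kappa\lesssim\kappa$ bound, and the final eigenvalue bookkeeping yielding the factor $\frac{2\Delta^2+\alpha/\lambda_{\min}(\hat\bSigma)}{1+\alpha/\lambda_{\min}(\hat\bSigma)}$) are precisely what those lemmas supply, so the plan is sound.
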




Theorem \ref{thm-dane-lm} reveals the following remarkable facts about Algorithm 4: No matter what relationship $n$ and $p$ have, proper regularization always guarantees linear convergence, and the rate exhibits a smooth transition as $p/n$ grows. Hence we can handle the distributed statistical estimation problem without assuming large enough $n$, overcoming the difficulty of other algorithms in literature \citep{ZDW13,BFL18,JLY18}.

If $n/p$ is large enough, the regularization is not necessary, but choosing $\alpha \asymp p/n$ does not hurt much. This is because we can control the contraction factor as:
\[
1 - \frac{1- C_2 p/n }{1+ C_3 \alpha } = \frac{C_3 \alpha + C_2 p/n}{1+ C_3\alpha} = O(p/n).
\]
When $n/p$ is not that large, most distributed statistical estimation procedures fail. By choosing $\alpha = \tilde{C} \Tr(\bSigma)/n$ for $\tilde{C}>C_1$ (see Condition (ii) of Theorem~\ref{thm-dane-lm}) we still have linear convergence with contraction factor at most
\[
1 - \frac{1- 1/2 }{1+ C_3 \alpha }  = 1 - \frac{1}{2+ 2C_3\tilde{C} \Tr(\bSigma)/n } < 1.
\]
In most situations of interest
we have $\Tr(\bSigma)\asymp p$ (even for pervasive factor models). Therefore we see that $\alpha \asymp p/n$ is a universal and adaptive choice of regularization over all the possible relation between $n$ and $p$.

Another benefit of the Algorithms is that the condition number $\kappa$ has only logarithmic effect on the iteration complexity, and the contraction factor in Theorem \ref{thm-dane-lm} does not depend on $\kappa$ at all. This is in stark contrast to the analysis under the same setting in \cite{SSZ14}, and helps relax the commonly used boundedness assumption on the  condition number in \cite{ZDW13}, \cite{BFL18}, \cite{JLY18}, among others. 
It is worth mentioning that \cite{WRX18} derive similar results for distributed linear regression when the local sample size $n$ is sufficiently large.

\section{Distributed logistic regression}\label{appendix-logistic}

In this section, we demonstrate the iterates of CEASE in distributed logistic regression. For illustration purposes, we use Newton's method on each local machine to solve the optimization problem. It is worth pointing out that Newton's method is not the only choice. First-order methods such gradient descent can also efficiently do the job.

Given samples $\{(\bx_i, y_i)\}_{i\in[N]}$, the loss on the $k$th machine is
\[
f_k(\btheta) = \frac{1}{n} \sum_{i\in\mathcal I_k} \bigg[\log(1+e^{\btheta^\top \bx_i})-y_i(\btheta^\top \bx_i)\bigg].
\]
Thus for Algorithm 4, we have for each iteration
\begin{align*}
\btheta_{t, k} &= \argmin_{\btheta} \bigg\{ f_k(\btheta) - \langle \nabla f_k(\btheta_t) - \nabla f(\btheta_t), \btheta\rangle + \frac{\alpha}{2} \|\btheta - \btheta_t\|_2^2\bigg\}\\
& = \argmin_{\btheta} \bigg\{\frac{1}{n}\sum_{i\in \mathcal I_k}\log(1+e^{\btheta^\top \bx_i}) - \langle \hat \bw_k +\bfsym {\eta}_{t, k}+\alpha\btheta_t, \btheta\rangle + \frac{\alpha}{2}\|\btheta\|_2^2\bigg\}.
\end{align*}
Here $\hat \bw_k = \frac{1}{m}\sum_{i\in \mathcal I_k}y_i\bx_i$, $\bfsym {\eta}_{t, k} = \nabla f_k(\btheta_t) - \nabla f(\btheta_t)$. Assume that we start with $\btheta_{t, k}^{(0)} = \btheta_t$. Then one Newton iteration of the above minimization problem is
$$
\btheta_{t, k}^{(j+1)} = \btheta_{t, k}^{(j)} - \bigg(\frac{1}{n}\bX_k^\top \bW_k^{(j)}\bX_k + \alpha \bI\bigg)^{-1}\bigg[\bX_k^\top(\bp_k^{(j)} - \bY_k)- \bfsym {\eta}_{t, k} + \alpha(\btheta_{t, k}^{(j)}-\btheta_t)\bigg].
$$
Here, $\bX_k\in \mathbb R^{n\times p}$ is a matrix with rows consisting of $\{\bx_i\}_{i\in \mathcal I_k}$, $\bY_k\in \mathbb R^{n}$ denotes the vector of $\{y_i\}_{i\in \mathcal I_k}$, $\bp_k^{(j)}\in \mathbb R^n$ is the vector of fitted probabilities with entry $i$ equal to $e^{(\btheta_{t, k}^{(j)})^\top \bx_i} / (1 + e^{(\btheta_{t, k}^{(j)})^\top \bx_i})$, and $\bW_k^{(j)}\in \mathbb R^{n\times n}$ is a diagonal matrix with entries $ \{ e^{(\btheta_{t, k}^{(j)})^\top \bx_i} / (1 + e^{(\btheta_{t, k}^{(j)})^\top \bx_i})^2\}_{i\in \mathcal I_k}$. Running one-step or multi-step Newton's iteration will result in some $\btheta_{t, k}^{(j)}$ sufficiently close to $\btheta_{t, k}$. After that, we simply average them across all the machines and obtain $\btheta_{t+1}$.

For Algorithm 2, in each iteration, we simply set $\alpha = 0$ in the above procedure. For Algorithm 3 and Algorithm 1, we simply omit the averaging step in Algorithm 4 and Algorithm 2 respectively.

\section{Numerical results on distributed $\ell_1$-regularized logistic regression}\label{appendix-l1_logistic_simulations}

In this section, we present the performance of our algorithms in the distributed $\ell_1$-regularized logistic regression problem where nonsmooth penalty is present. We fix the total sample size $N=5000$ and the dimensionality $p=1001$, and generate the i.i.d. data $\{ ( \bx_i, y_i ) \}_{i=1}^N $ as follows: $\bx_i = (1,\bu_i^{\top})^{\top}$ with $\bu_i \sim N( \mathbf{0}_{p-1} , \bI_{p-1} )$; $\P(y_i=1) = 1 - \P(y_i = 0) = \frac{e^{ \bx_i^{\top}\btheta^* }}{ 1 + e^{ \bx_i^{\top}\btheta^* } }$ where $\btheta^* = ( \mathbf{1}_{10}^{\top} , \mathbf{0}_{991}^{\top} )^{\top}  /\sqrt{2} \in \R^{p}$. We define the penalty function $g(\btheta) = \lambda \| \btheta \|_1$ with $\lambda = 0.5 \sqrt{ \frac{\log p}{N} }$, such that the regularized MLE over the whole dataset recovers the nonzeros of $\btheta^*$ accurately. Figure \ref{fig-L1} shows the performance of CEASE algorithms and ADMM, where ``large $n$", ``moderate $n$" and ``small $n$" refer to $(n,m)=(1000,5)$, $(500,10)$ and $(250,20)$, and ``zero initialization" and ``good initialization" refer to $\btheta_0 = \mathbf{0}$ and $\bar{\btheta}$, respectivley. Again, $\bar{\btheta}$ is the one-shot distributed estimator \citep{ZDW13}. All the results are average values of 100 independent runs.

Similar to the distributed logistic regression case, the CEASE algorithms with proper regularization (Algorithms 3 and 4) work well in general; without regularization, the CEASE algorithm fails to converge when the local sample size $n$ is small and the initialization in uninformative. For this nonsmooth problem, the CEASE algorithm with averaging (Algorithm 4) does not seem to have advantage over the single version (Algorithm 3). The ADMM converges quickly to a region near the minimizer but then proceeds quite slowly, which appears to be a common phenomenon in many distributed optimization problems \citep{BPC11}.

\begin{figure}[ht]
	\centering
	\subfigure{
		\includegraphics[width=1\linewidth]{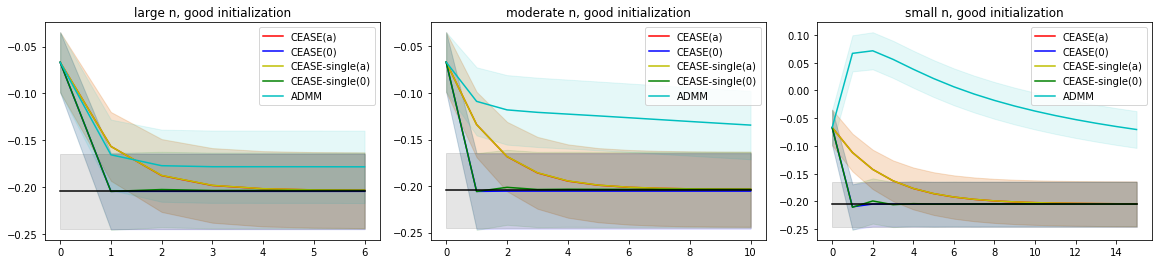}  
	}
	\\
	\subfigure{
		\includegraphics[width=1\linewidth]{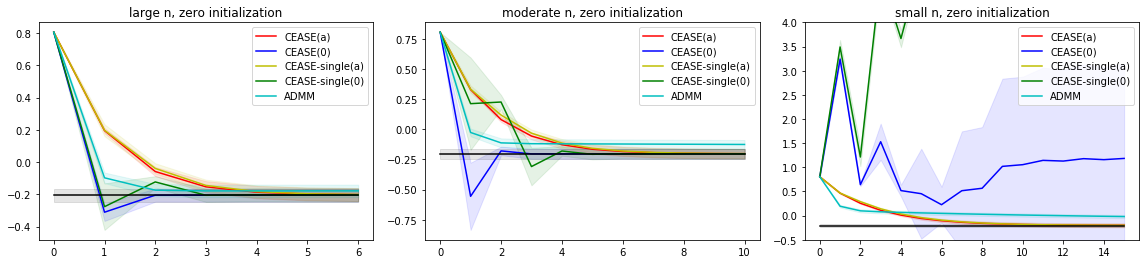}    
	}
	\caption{Nonsmooth minimization. The $x$-axis and $y$-axis are the number of iterations and $\log \| \btheta_t - \btheta^* \|_2$. The dashed lines show the error of the minimizer of the overall regularized loss function. The top and bottom panels use $\bar {\btheta}$ and $\mathbf{0}$ for initialization, respectively. CEASE(a) and CEASE(0) refer to Algorithm 4 with $\alpha=0.05 p/n$ and $0$; CEASE-single(a) and CEASE-single(0) refer to Algorithm 3 with $\alpha=0.05 p/n$ and $0$, respectively. In particular, CEASE-single(0) is equivalent to the CSL algorithm in \citep{JLY18}.}\label{fig-L1}
\end{figure}

\section{Proofs}\label{appendix-proofs}

\subsection{Proof of Theorem 3.1}
\begin{lem}\label{lem-thm-main}
	Let Assumptions 3.1 and 3.2 hold. Consider the iterates $\{ \btheta_t \}_{t=0}^{T}$ generated by Algorithm 4. Define
	\begin{align*}
		\gamma_t =
		\begin{cases}
			\frac{\delta}{\rho_0+\alpha} \cdot \min \{ 1 , \frac{\delta}{\rho+\alpha} (1+ \frac{M}{\rho_0+\alpha} \| \btheta_t - \hat\btheta\|_2 ) \} &, \text{ if Assumption 3.3 holds}\\
			\frac{\delta}{\rho_0 + \alpha} &, \text{ otherwise }
		\end{cases},\qquad 0\leq t\leq T-1.
	\end{align*}
	If $0<\| \btheta_t - \hat\btheta \|_2< R/2$, $\delta < \rho_0+\alpha$ and $\gamma_t^{2} < \rho / ( \rho+2\alpha )$, then
	\begin{align*}
		\frac{ \|\btheta_{t+1} - \hat\btheta \|_2 }{ \|\btheta_t - \hat\btheta \|_2 } \leq \frac{ \gamma_t \sqrt{\rho^2 + 2\alpha\rho} + \alpha }{\rho+\alpha} < 1.
	\end{align*}	
\end{lem}
Theorem 3.1 directly follows from Lemma \ref{lem-thm-main} and induction. Below we only prove Lemma \ref{lem-thm-main} with Assumption 3.3. The other part in Lemma \ref{lem-thm-main} without Assumption 3.3 can be derived by slightly modifying this proof.

\begin{proof}[\bf Proof of Lemma \ref{lem-thm-main} with Assumption 3.3]
	Let $\btheta_t^+ =  \mathrm{prox}_{\alpha^{-1} (f+g) } (\btheta_t) $. By the triangle inequality,
	\begin{align}
		\| \btheta_{t+1} - \hat\btheta \|_2
		\leq \| \btheta_{t+1} -\btheta_t^+\|_2  + \| \btheta_t^+ - \hat\btheta\|_2.
		\label{ineq-proof-thm-main-0}
	\end{align}
	
	We first invoke Theorem 3.2 to bound the first term $\| \btheta_{t+1} -\btheta_t^+\|_2$ in (\ref{ineq-proof-thm-main-0}). Define $\tilde g(\btheta) = g(\btheta) + (\alpha/2) \|\btheta - \btheta_t\|_2^2$ for $\btheta \in \R^p$. Then $\btheta_{t+1}$ is the first iterate of Algorithm 2 initialized at $\btheta_t$ for computing $\btheta^+_t = \argmin_{\btheta\in\R^p} \{ \frac{1}{m} \sum_{k=1}^{m} f_k(\btheta) + \tilde{g}(\btheta) \}$.
	
	From $\hat\btheta = \mathrm{prox}_{\alpha^{-1} (f+g)} (\hat\btheta)$ and Lemma \ref{lem-prox-nonexpansiveness} we obtain that
	$\|\btheta_t^+ - \hat\btheta\|_2 \leq \| \btheta_t - \hat\btheta\|_2$. Then the condition $\|\btheta_t -\hat\btheta \|_2<R/2$ leads to $B(\btheta_t^+ , R/2) \subseteq B(\hat\btheta,R)$. By Assumptions 3.1 and 3.2,
	\begin{itemize}
		\item in $B(\btheta^+_t,R/2)$, $\{ f_k+\tilde g \}_{k=1}^{m}$ are $(\rho_0+\alpha)$-strongly convex and $(f+\tilde g)$ is $( \rho+ \alpha)$-strongly convex;
		\item $\| \nabla^2 f_k(\btheta) - \nabla^2 f(\btheta) \|_2 \leq \delta$ holds for all $k\in[m]$ and $\btheta \in B(\btheta^+_t,R/2)$.
	\end{itemize}
	Furthermore, Lemma \ref{lem-prox-nonexpansiveness} also yields
	\begin{align}
		\| \btheta_t - \btheta_t^+ \|_2^2
		\leq \|\btheta_t - \hat \btheta \|_2^2 - \| \btheta_t^+ - \hat\btheta\|_2^2 =\|\btheta_t - \hat \btheta \|_2^2 \left( 1 - \| \btheta_t^+ - \hat\btheta\|_2^2/ \|\btheta_t - \hat \btheta \|_2^2 \right).
		\label{ineq-proof-thm-main-1}
	\end{align}
	Then $\| \btheta_t - \btheta_t^+ \|_2 \leq \| \btheta_t - \hat\btheta \|_2 < R/2$ and $\tilde{\btheta}_0 \in B(\btheta^+_t, R/2) $. Based on these conditions and $\alpha+\rho_0>\delta$, we use Theorem 3.2 to get
	\begin{align*}
		\| \btheta_{t+1} -\btheta_t^+\|_2  \leq \gamma_t \| \btheta_{t} -\btheta_t^+\|_2.
	\end{align*}
	
	From here, (\ref{ineq-proof-thm-main-0}) and (\ref{ineq-proof-thm-main-1}) we obtain that
	\begin{align*}
		\| \btheta_{t+1} - \hat\btheta \|_2
		&\leq \gamma_t \| \btheta_{t} -\btheta_t^+\|_2 + \| \btheta_t^+ - \hat\btheta\|_2 \notag \\
		&\leq \gamma_t \|\btheta_t - \hat \btheta \|_2 \left( 1 - \| \btheta_t^+ - \hat\btheta\|_2^2/ \|\btheta_t - \hat \btheta \|_2^2 \right)^{1/2} + \| \btheta_t^+ - \hat\btheta\|_2 \notag \\
		&= \|\btheta_t - \hat \btheta \|_2 \cdot h( \| \btheta_t^+ - \hat\btheta\|_2 / \|\btheta_t - \hat \btheta \|_2 ),
	\end{align*}
	where $h(x) = \gamma_t \sqrt{1-x^2}+x$, $\forall x\in[0,1]$. From $h'(x)=1-\gamma_t x/\sqrt{1-x^2}$ we see that $h' \geq 0$ on $[0,1/\sqrt{1+\gamma_t^{2} } ]$.
	
	On the one hand, Lemma \ref{lem-prox-nonexpansiveness} asserts that $\|\btheta^+ - \hat\btheta \|_2/\|\btheta_t-\hat\btheta\|_2 \leq \alpha/(\rho + \alpha)$. On the other hand, the assumption $\gamma_t^{2} < \rho/(\rho+2\alpha)$ forces
	\begin{align*}
		\frac{1}{\sqrt{1+\gamma_t^{2}}} > \frac{1}{\sqrt{1+ \rho/(\rho+2\alpha) }} = \frac{\sqrt{\rho/2+\alpha}}{\sqrt{\rho+\alpha }} \geq \frac{\rho/2+\alpha}{\rho+\alpha} \geq \alpha/(\rho + \alpha).
	\end{align*}
	The proof is completed by computation:
	\begin{align*}
		\frac{ \| \btheta_{t+1} - \hat\btheta \|_2 }{ \|\btheta_t - \hat \btheta \|_2 }
		&\leq h\left( \frac{ \| \btheta_t^+ - \hat\btheta\|_2 }{ \|\btheta_t - \hat \btheta \|_2 } \right) \leq h\left( \frac{\alpha}{\rho+\alpha} \right) = \gamma_t \left[ 1 - \left( \frac{\alpha}{\rho+\alpha} \right)^2 \right]^{1/2} + \frac{\alpha}{\rho+\alpha}\\
		&=\frac{\gamma_t \sqrt{\rho^2 + 2 \rho \alpha} + \alpha }{ \rho + \alpha }
		< \frac{ \sqrt{  [\rho/(\rho+2\alpha)] \cdot (\rho^2 + 2 \rho \alpha ) } + \alpha }{ \rho + \alpha } =1,
	\end{align*}
	where we used the assumption $\gamma_t^{2} < \rho/(\rho+2\alpha)$ again.
\end{proof}

\subsection{Proof of Corollary 3.1}
We claim that $( \frac{ \delta }{ \rho_0+\alpha } )^{2} \leq \frac{7}{9} \cdot  \frac{ \rho }{ \rho+2\alpha } $. Given this, Corollary 3.1 follows from Theorem 3.1 and
\[
\frac{ \frac{\delta}{\rho_0+\alpha} \sqrt{\rho^2 + 2\alpha\rho} + \alpha }{\rho+\alpha} \leq
\frac{  \sqrt{ \frac{7}{9} \cdot  \frac{ \rho }{ \rho+2\alpha }\cdot \rho (\rho+ 2\alpha) } + \alpha }{\rho+\alpha}
= 1 - \frac{ (1-\sqrt{7/9}) \rho }{\rho + \alpha } \leq 1 - \frac{ \rho/10 }{\rho + \alpha }.
\]

The claim trivially holds if $\delta=0$. When $\delta>0$, let us first assume $0<\delta \leq \rho$ and define $b=\alpha \rho / \delta^2$. Then $\alpha = b \delta^2 / \rho$, $b\geq 4$ and $\rho_0 \geq \max\{ \rho - \delta , 0 \}$ force
\begin{align*}
	\rho_0 + \alpha \geq \rho - \delta+ b \delta^2 / \rho =  ( \rho/\delta + b \delta/\rho -1 ) \delta \geq ( 2 \sqrt{ (\rho/\delta) \cdot (b \delta / \rho) } - 1 ) \delta = (2\sqrt{b}-1)\delta > \delta.
\end{align*}
and $ [ \delta / ( \rho_0+\alpha ) ]^{2} \leq [ \delta / ( \rho_0+\alpha ) ]^2  \leq 1/ h_1(\delta/\rho)$, where $h_1(x)=(bx+x^{-1}-1)^2$. On the other hand, $\rho / ( \rho+2\alpha ) = 1/( 1+2\alpha/\rho) = 1/ h_2(\delta/\rho) $, where $h_2(x)=1+2bx^2$.

We are going to show $h_2(x) \leq 7 h_1(x) /9$, $\forall x \in (0,1]$, which leads to the desired result under $0<\delta \leq \rho$. If $0<x\leq \sqrt{3}/2$, then $h_1(x) \geq (2\sqrt{b}-1)^2 \geq (2\sqrt{b}-\sqrt{b}/2)^2 \geq 9b/4$ and
\begin{align*}
	h_2(x) \leq 1 + 2 b \cdot (3/4) \leq (b/4) + (6b/4) = 7b/4 \leq 7h_1(x)/9.
\end{align*}
If $\sqrt{3}/2 < x \leq 1$, then $h_1(x) \geq b^2x^2 \geq 3b^2/4$, $h_2(x) \leq 1 + 2 b \leq (b/4)+2b = 9b/4$, and  $h_2(x)/h_1(x) =3/b \leq 3/4 \leq 7/9$.

Suppose now that $\delta > \rho$, and define $b=\alpha \rho / \delta^2$. Then
\begin{align*}
	&\left( \frac{ \delta }{ \rho_0+\alpha } \right)^{2} \leq \left( \frac{ \delta }{ \alpha } \right)^{2} = \left( \frac{ 1 }{ b \delta/\rho} \right)^{2}= \frac{ 1 }{ b^{2} (\delta/\rho)^{2} }, \\
	&\frac{\rho}{\rho+2\alpha} = \frac{1}{1+2\alpha/\rho} = \frac{1}{1+2b(\delta/\rho)^{2} }.
\end{align*}
From $b\geq 4$ and $\delta / \rho >1$ we get $( \frac{ \delta }{ \rho_0+\alpha } )^{2} \leq \frac{7}{9} \cdot  \frac{ \rho }{ \rho+2\alpha } $ from
\begin{align*}
	&b^{2} (\delta/\rho)^{2} - \frac{9}{7}[ 1+2b(\delta/\rho)^{2} ] = (\delta/\rho)^{2} b (b -18/7) - 9/7 \\
	&\geq 1\cdot 4 \cdot (4^1-18/7)-9/7 = 31/7 > 0.
\end{align*}

\subsection{Proof of Corollary 3.2 }
Throughout the proof we assume that $\delta/\rho$ is sufficiently small. The regularity conditions in Theorem 3.1 are easily verified as $\btheta_0 \in B(\hat\btheta,R/2)$ and $[ \delta / ( \rho_0+\alpha ) ]^{2} < \rho / ( \rho+2\alpha )$. Here we used the fact $\rho_0 \geq \rho - \delta$.

From $\rho_0 \geq \rho - \delta$ we get $\rho_0+\alpha \geq \rho_0\geq \rho/2$ and $\delta/(\rho_0+\alpha) \leq 2 \delta/\rho$. Also, $\sqrt{\rho^2 + 2 \alpha \rho } \leq \rho \sqrt{ 1 + 2C (\delta/\rho)^2 } \lesssim \rho$. We control the contraction factor in (3.2):
\begin{align*}
	\frac{ \frac{\delta}{\rho_0+\alpha} \sqrt{\rho^2 + 2\alpha\rho} + \alpha }{\rho+\alpha} \lesssim \frac{ (2 \delta/\rho)  \rho + C\delta^2/\rho }{\rho} =  \frac{2 \delta }{\rho} + \frac{C}{4} \left( \frac{2 \delta }{\rho} \right)^2
	\lesssim \frac{\delta }{\rho} .
\end{align*}

Recall that $\gamma_t =
\frac{\delta}{\rho_0+\alpha} \cdot \min \{ 1 , \frac{\delta}{\rho+\alpha} (1+ \frac{M}{\rho_0+\alpha} \| \btheta_t - \hat\btheta\|_2 ) \}$ in Theorem 3.1. When $\delta/\rho$ is small and $\| \btheta_t - \hat\btheta\|_2 \leq \rho/M$, we have $\rho_0 + \alpha \geq \rho/2$, $\frac{M}{\rho_0+\alpha} \| \btheta_t - \hat\btheta\|_2 \leq 2$, and $\gamma_t \leq (2\delta/\rho)^2$. This help bound the contraction factor in (3.2):
\begin{align*}
	\frac{ \gamma_t\cdot \sqrt{\rho^2 + 2\alpha\rho} + \alpha }{\rho+\alpha} \lesssim \frac{ (2 \delta/\rho)^{2} \rho + C\delta^2/\rho }{\rho}
	\lesssim \left( \frac{ \delta }{\rho} \right)^{2}.
\end{align*}

\subsection{Proof of Theorem 3.2}

Theorem 3.2 is a direct summary of the following two lemmas.

\begin{lem}[Contraction]\label{lem-vanilla-contraction}
	Let Assumptions 3.1 and 3.2 hold, with $\rho_0 > \delta \geq 0$. Then
	$\| \varphi_k(\btheta) - \hat\btheta \|_2 \leq (\delta / \rho_0) \| \btheta - \hat\btheta\|_2$, $\forall \btheta \in B(\hat\btheta , R)$, $\forall k \in [m]$.
\end{lem}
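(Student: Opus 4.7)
The plan is to exploit the first-order optimality conditions at the two relevant points, use strong monotonicity of the subdifferential map of $f_k+g$ on $B(\hat\btheta,R)$, and finally bound the resulting residual via the Hessian-discrepancy assumption.

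First I would verify that $\hat\btheta$ is a fixed point of $\varphi_k$. The first-order optimality condition for $\varphi_k(\bxi)$ reads: there exists $\bv\in\partial g(\varphi_k(\bxi))$ such that
\begin{equation*}
\nabla f_k(\varphi_k(\bxi)) + \bv \;=\; \nabla f_k(\bxi) - \nabla f(\bxi),
\end{equation*}
while $\hat\btheta$ satisfies $\nabla f(\hat\btheta)+\hat\bv=0$ for some $\hat\bv\in\partial g(\hat\btheta)$. Plugging $\bxi=\btheta=\hat\btheta$ into the first condition recovers the second, so $\varphi_k(\hat\btheta)=\hat\btheta$.

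Next, set $\btheta^{\bxi}:=\varphi_k(\bxi)$ for arbitrary $\bxi\in B(\hat\btheta,R)$ and assume for the moment that $\btheta^{\bxi}\in B(\hat\btheta,R)$. Write $\bG_k:=\nabla f_k+\partial g$. The optimality conditions above read
\begin{equation*}
\nabla f_k(\bxi)-\nabla f(\bxi)\in \bG_k(\btheta^{\bxi}),\qquad \nabla f_k(\hat\btheta)-\nabla f(\hat\btheta)\in \bG_k(\hat\btheta).
\end{equation*}
Since $f_k+g$ is $\rho_0$-strongly convex on $B(\hat\btheta,R)$, the operator $\bG_k$ is $\rho_0$-strongly monotone there, hence
\begin{equation*}
\rho_0\|\btheta^{\bxi}-\hat\btheta\|_2^2
\;\leq\; \bigl\langle [\nabla f_k(\bxi)-\nabla f_k(\hat\btheta)] - [\nabla f(\bxi)-\nabla f(\hat\btheta)],\; \btheta^{\bxi}-\hat\btheta\bigr\rangle.
\end{equation*}
Cauchy--Schwarz combined with the integral representation $\nabla f_k(\bxi)-\nabla f_k(\hat\btheta)=\int_0^1 \nabla^2 f_k(\hat\btheta+t(\bxi-\hat\btheta))(\bxi-\hat\btheta)\,dt$ (and analogously for $f$) together with Assumption \ref{assump-similarity} then yields
\begin{equation*}
\bigl\|[\nabla f_k(\bxi)-\nabla f_k(\hat\btheta)]-[\nabla f(\bxi)-\nabla f(\hat\btheta)]\bigr\|_2 \;\leq\; \delta\|\bxi-\hat\btheta\|_2,
\end{equation*}
and dividing through by $\|\btheta^{\bxi}-\hat\btheta\|_2$ (the case of equality to zero is trivial) gives the desired bound $\|\btheta^{\bxi}-\hat\btheta\|_2\leq(\delta/\rho_0)\|\bxi-\hat\btheta\|_2$.

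The remaining obstacle, and the only delicate step, is to justify that $\btheta^{\bxi}$ indeed lies in $B(\hat\btheta,R)$, since the strong-convexity/monotonicity hypothesis is only local. I would handle this by localization: first consider the constrained problem $\tilde\btheta:=\argmin_{\btheta\in B(\hat\btheta,R)}\{f_k(\btheta)+g(\btheta)-\langle\nabla f_k(\bxi)-\nabla f(\bxi),\btheta\rangle\}$, which is well-defined and unique by $\rho_0$-strong convexity on the closed ball. Apply the argument above to $\tilde\btheta$ in place of $\btheta^{\bxi}$ (only the local strong-monotonicity inequality is used, so this is legitimate), obtaining $\|\tilde\btheta-\hat\btheta\|_2\leq(\delta/\rho_0)\|\bxi-\hat\btheta\|_2 < R$ since $\rho_0>\delta$. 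Hence $\tilde\btheta$ lies in the interior of $B(\hat\btheta,R)$; the normal cone vanishes there, so $\tilde\btheta$ satisfies the unconstrained first-order condition and therefore coincides with $\varphi_k(\bxi)$ (by convexity of the unconstrained objective any such stationary point is a global minimizer). This identifies $\tilde\btheta=\btheta^{\bxi}$ and closes the loop.
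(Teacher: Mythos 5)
Your proposal is correct and its core is the same as the paper's: both proofs compare the first-order optimality conditions at $\varphi_k(\btheta)$ and at the fixed point $\hat\btheta$, invoke $\rho_0$-strong convexity of $f_k+g$ on $B(\hat\btheta,R)$, and bound the subgradient discrepancy by $\delta\|\btheta-\hat\btheta\|_2$ via the integral form of the Taylor expansion and Assumption \ref{assump-similarity}. The only genuine difference is the device used for the delicate localization step you correctly flag. The paper outsources it to a standalone result (Lemma \ref{lem-cvx-inverse}): if two subgradients of a locally $\rho_0$-strongly convex function differ by less than $\rho_0 R$, the corresponding points are within $R$ of each other; this is proved by a contradiction argument at a boundary point $\bar\by$ of the ball, using monotonicity of the subdifferential along the segment. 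Your route instead solves the problem constrained to the closed ball, shows the constrained minimizer is interior (since $\delta/\rho_0<1$), and identifies it with the unconstrained minimizer. Both work; the paper's lemma has the advantage of applying directly to \emph{whatever} point satisfies the first-order condition, whereas your identification $\tilde\btheta=\varphi_k(\bxi)$ implicitly needs uniqueness of the unconstrained minimizer --- a one-line addendum (any second global minimizer would make the objective affine on a segment entering the ball, contradicting strong convexity there), but worth stating since $\argmin$ over $\R^p$ is not a priori a singleton outside the region of strong convexity.
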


\begin{proof}
	
	Fix $\btheta \in B(\hat\btheta,R)$.  By the first order condition of $\varphi_k(\btheta)$, we have that
	\begin{align}
		\nabla f_k(\btheta) -\nabla f(\btheta)\in \partial \{ f_k [ \varphi_k(\btheta) ] + g [ \varphi_k(\btheta) ] \} .
		\label{eqn-vinilla-contraction-subg}
	\end{align}
	Using the fixed point property $\varphi_k(\hat \btheta) = \hat \btheta$, we have
	$\nabla f_k(\hat\btheta)-  \nabla f(\hat\btheta)  \in \partial [ f_k(\hat\btheta) + g(\hat\btheta) ]$.
	By the Taylor expansion and Assumption 3.2,
	\begin{align*}
		& \| [ \nabla f_{k}(\btheta) - \nabla f(\btheta) ] - [ \nabla f_{k}(\hat\btheta) - \nabla f(\hat\btheta) ] \|_2\notag\\
		= & \left\| \int_{0}^{1} \left(  \nabla^2 f_k [ (1-t) \hat\btheta + t \btheta ] - \nabla^2 f [ (1-t) \hat\btheta + t \btheta ]  \right) (\btheta - \hat\btheta) \mathrm{d} t \right\|_2 \notag\\
		\leq & \sup_{\bzeta \in B(\hat\btheta,R)} \| \nabla^2 f_k (\bzeta) - \nabla^2 f(\bzeta) \|_2 \cdot \| \btheta - \hat\btheta \|_2\\
		\leq & \delta \| \btheta - \hat\btheta \|_2 < \rho_0 R.
	\end{align*}
	From this, (\ref{eqn-vinilla-contraction-subg}) and Lemma \ref{lem-cvx-inverse}, we obtain that $ \| \varphi_k(\btheta) - \hat\btheta \|_2 \leq (\delta/\rho_0) \| \btheta - \hat\btheta \|_2$.
	
\end{proof}

\begin{lem}[Averaging]\label{lem-vanilla-averaging}
	Let Assumptions 3.1, 3.2 and 3.3 hold, with $\rho_0 > \delta \geq 0$. We have
	\begin{align*}
		\left\|  \frac{1}{m} \sum_{k=1}^{m} \varphi_k(\btheta) - \hat\btheta \right\|_2 \leq \frac{\delta^2}{\rho_0\rho} (1+M \rho_0^{-1} \| \btheta - \hat\btheta\|_2 ) \| \btheta - \hat\btheta\|_2,\qquad \forall \btheta \in B(\hat\btheta , R).
	\end{align*}
\end{lem}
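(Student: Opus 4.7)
The plan is to exploit the first-order optimality conditions and the crucial cancellation $\frac{1}{m}\sum_k (\nabla f_k - \nabla f) \equiv 0$, which is exactly the reason why averaging the $\varphi_k(\btheta)$ gives a better contraction than any individual $\varphi_k(\btheta)$.

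Let $\bDelta_k = \varphi_k(\btheta) - \hat\btheta$, $\bD = \btheta - \hat\btheta$, $\bH = \nabla^2 f(\hat\btheta) + \nabla^2 g(\hat\btheta) \succeq \rho \bI$, and $\bh_k = \nabla f_k - \nabla f$. From the first-order condition defining $\varphi_k(\btheta)$, and using $g\in C^2$ so that $\partial g = \{\nabla g\}$, I would rearrange
\[
\nabla(f+g)(\varphi_k(\btheta)) = \bh_k(\btheta) - \bh_k(\varphi_k(\btheta)),
\]
since $\nabla (f+g)(\hat\btheta)=0$. A Taylor expansion of $\nabla(f+g)$ around $\hat\btheta$ together with Assumption \ref{assump-hessian-lip} yields
\[
\bH \bDelta_k = \bh_k(\btheta) - \bh_k(\varphi_k(\btheta)) - \br_k', \qquad \|\br_k'\|_2 \leq \tfrac{M}{2} \|\bDelta_k\|_2^2.
\]

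Averaging this identity over $k$ is where the improvement arises. Because $\frac{1}{m}\sum_k \bh_k \equiv 0$, both $\frac{1}{m}\sum_k \bh_k(\btheta)=0$ and $\frac{1}{m}\sum_k \bh_k(\hat\btheta) = 0$. Consequently
\[
\bH\cdot\frac{1}{m}\sum_{k=1}^m\bDelta_k = -\frac{1}{m}\sum_{k=1}^m\bigl[\bh_k(\varphi_k(\btheta))-\bh_k(\hat\btheta)\bigr] - \frac{1}{m}\sum_{k=1}^m \br_k'.
\]
Using $\bh_k(\varphi_k(\btheta))-\bh_k(\hat\btheta) = \int_0^1 [\nabla^2 f_k-\nabla^2 f]((1-t)\hat\btheta + t\varphi_k(\btheta))\,\bDelta_k\,dt$, Assumption \ref{assump-similarity} controls each such term by $\delta\|\bDelta_k\|_2$, and Lemma \ref{lem-vanilla-contraction} gives $\|\bDelta_k\|_2 \leq (\delta/\rho_0)\|\bD\|_2$, so the first sum is bounded by $(\delta^2/\rho_0)\|\bD\|_2$ and the remainder sum by $\tfrac{M}{2}(\delta/\rho_0)^2\|\bD\|_2^2$. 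Applying $\|\bH^{-1}\|_2 \leq 1/\rho$ then gives the claimed bound.

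The main obstacle is recognizing that the linear-in-$\|\bD\|_2$ part of the error, which on a single machine is of order $\delta/\rho_0$, is squeezed down to $\delta^2/(\rho\rho_0)$ only after the cancellation $\sum_k \bh_k(\hat\btheta)=0$ is injected; without inserting $\bh_k(\hat\btheta)$ into $\bh_k(\varphi_k(\btheta))$ one would be left with the unhelpful bound $\delta\|\bD\|_2$ coming from $\bh_k(\btheta)$ alone. A secondary technical point is verifying that $\varphi_k(\btheta)\in B(\hat\btheta,R)$, which holds because $\|\bDelta_k\|_2 \leq (\delta/\rho_0)\|\bD\|_2 < \|\bD\|_2 \leq R$ under the hypothesis $\rho_0>\delta$, so all Taylor expansions and the homogeneity bound remain in force throughout.
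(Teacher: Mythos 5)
Your proof is correct and follows essentially the same route as the paper's: average the first-order optimality conditions, exploit the exact cancellation $\frac{1}{m}\sum_k(\nabla f_k-\nabla f)\equiv 0$ at both $\btheta$ and $\hat\btheta$, invert $\nabla^2(f+g)(\hat\btheta)\succeq\rho\bI$, and control the residual via Assumption \ref{assump-similarity} together with Lemma \ref{lem-vanilla-contraction}. The only (cosmetic) difference is bookkeeping --- the paper folds the $\delta$- and $M$-contributions into a single integrated-Hessian deviation $\|\bH_k-\hat\bH\|_2\leq\delta+M\|\varphi_k(\btheta)-\hat\btheta\|_2$, whereas you separate the gradient-difference term from the Taylor remainder of $\nabla(f+g)$, which even yields the marginally sharper constant $M/2$ in place of $M$.
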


\begin{proof}
	
	Define $L_k(\btheta) = f_k(\btheta) + g(\btheta)$ and $L(\btheta) = f(\btheta) + g (\btheta)$ for $\btheta\in\R^p$. Then $\hat\btheta = \argmin_{\bxi\in\R^p} L(\bxi)$ and $\varphi_k(\btheta) = \argmin_{\bxi \in \R^p} \{ L_k(\bxi) - \langle \nabla L_k (\btheta) - \nabla L(\btheta) , \bxi  \rangle \}$. By the optimality conditions,
	\begin{align*}
		\nabla L_k [ \varphi_k(\btheta) ] - \nabla L_k (\btheta) + \nabla L(\btheta) = \mathbf{0} = \nabla L(\hat\btheta).
	\end{align*}
	After subtracting $\nabla L_k(\hat\btheta)$ from both sides and rearranging terms, we get
	\begin{align*}
		\nabla L_k [ \varphi_k(\btheta) ] - \nabla L_k(\hat\btheta)
		= [\nabla L_k (\btheta) - \nabla L_k(\hat\btheta)] - [\nabla L(\btheta) - \nabla L(\hat\btheta)].
	\end{align*}
	Note that the average of the right hand side over $k\in[m]$ is $\mathbf{0}$.
	
	Define $\bH_k=\int_{0}^{1} \nabla^2 L_k [ (1-t) \hat{\btheta} + t \varphi_k(\btheta) ] \mathrm{d} t$ for $k\in[m]$ and $\hat\bH = \nabla^2 L( \hat{\btheta})$. Then
	\begin{align*}
		& \nabla L_k [ \varphi_k(\btheta) ] - \nabla L_k(\hat\btheta) = \bH_k ( \varphi_k(\btheta) - \hat\btheta ) = \hat\bH ( \varphi_k(\btheta) - \hat\btheta ) + (\bH_k - \hat\bH) ( \varphi_k(\btheta) - \hat\btheta ), \\
		&\mathbf{0} = \frac{1}{m} \sum_{k=1}^{m} \left(  \nabla L_k [ \varphi_k(\btheta) ] - \nabla L_k(\hat\btheta) \right) = \hat\bH [ \bar\varphi( \btheta ) - \hat\btheta ] + \frac{1}{m} \sum_{k=1}^{m} (\bH_k - \hat\bH) ( \varphi_k(\btheta) - \hat\btheta ),
	\end{align*}
	where we let $\bar{\varphi}(\btheta) = \frac{1}{m} \sum_{k=1}^{m} \varphi_k(\btheta)$. As a result,
	\begin{align*}
		\|  \bar\varphi( \btheta ) - \hat\btheta \|_2
		&= \left\| \frac{1}{m} \sum_{k=1}^{m} \hat\bH^{-1} (\bH_k - \hat\bH) ( \varphi_k(\btheta) - \hat\btheta ) \right\|_2 \\
		&\leq \| \hat\bH^{-1} \|_{2} \max_{ k\in[m] } \| \bH_k - \hat\bH\|_2 \cdot \max_{ k\in[m] } \| \varphi_k(\btheta) - \hat\btheta \|_2.
	\end{align*}
	Lemma \ref{lem-vanilla-contraction} forces $\max_{ k\in[m] } \| \varphi_k(\btheta) - \hat\btheta \|_2 \leq (\delta / \rho_0 ) \| \btheta - \hat\btheta \|_2$, and Assumption 3.1 yields $\hat\bH \succeq \rho \bI$ and $\| \hat\bH \|_2 \leq 1/ \rho$. Furthermore, we use Assumptions 3.2 and 3.3 to get
	\begin{align*}
		\| \bH_k - \hat\bH \|_2
		&\leq \left\| \int_{0}^{1} \left(
		\nabla^2 L_k [ (1-t) \hat{\btheta} + t \varphi_k(\btheta) ] - \nabla^2 L [ (1-t) \hat{\btheta} + t \varphi_k(\btheta) ]
		\right) \mathrm{d} t \right\|_2 \\
		&+ \left\| \int_{0}^{1} \left(
		\nabla^2 L [ (1-t) \hat{\btheta} + t \varphi_k(\btheta) ] - \nabla^2 L(\hat\btheta)
		\right) \mathrm{d} t \right\|_2 \\
		&\leq \delta + M \| \varphi_k(\btheta)  - \hat\btheta \|_2
		\leq \delta + M ( \delta /\rho_0) \| \btheta - \hat\btheta\|_2.
	\end{align*}
	The proof is finished by combining all the estimates above.
\end{proof}

\subsection{Proof of Theorem 4.1}\label{section-pf-glm}

The proof is implied by combining proof of Corollary 3.2 with the results of the following two lemmas, the first of which is a direct counterpart of Theorem 3.1 in the stochastic setting, given an additional condition on similarity between local Hessians. The second lemma below specifies the order of Hessian difference in the generalized linear model, hence providing a contraction rate and guiding the choice of $\alpha$.

\begin{lem}\label{lem-main-general}
	Let Assumption 4.2 hold. Denote
	$$
	\hat\delta:=2\sup_{k\in[m]}\sup_{\btheta\in B(\hat\btheta,R)}\| \nabla^2 f_k(\btheta) - \nabla^2 F (\btheta) \|_2.
	$$
	Consider the iterates $\{ \btheta_t \}_{t=0}^{T}$ generated by Algorithm 3 or Algorithm 4. Suppose that $\btheta_0 \in B( \hat\btheta , R/2 )$ and $[\hat \delta / ( \rho_0+\alpha ) ]^{2} < \rho / ( \rho+2\alpha )$.
	\begin{itemize}
		\item For both Algorithms 3 and 4, we have
		\begin{align}
			\| \btheta_{t+1} - \hat\btheta \|_2 \leq \| \btheta_{t} - \hat\btheta\|_2 \cdot  \frac{ \frac{\hat\delta}{\rho_0+\alpha}\cdot \sqrt{\rho^2 + 2\alpha\rho} + \alpha }{\rho+\alpha}, \qquad 0\leq t \leq T-1;
			\label{ineq-thm-main-1-appendix}
		\end{align}
		\item If in addition, Assumption 4.3 also holds, then for Algorithm 4 we have
		\begin{align}
			\| \btheta_{t+1} - \hat\btheta \|_2 \leq \| \btheta_t - \hat\btheta\|_2 \cdot  \frac{ \gamma_t \sqrt{\rho^2 + 2\alpha\rho} + \alpha }{\rho+\alpha},\qquad 0\leq t\leq T-1,
			\label{ineq-thm-main-2-appendix}
		\end{align}
		where we define $\gamma_t =
		\frac{\hat\delta}{\rho_0+\alpha} \cdot \min \{ 1 , \frac{\hat\delta}{\rho+\alpha} (1+ \frac{M}{\rho_0+\alpha} \| \btheta_t - \hat\btheta\|_2 ) \}$;
		\item Both multiplicative factors in (\ref{ineq-thm-main-1-appendix}) and (\ref{ineq-thm-main-2-appendix}) are strictly less than 1.
	\end{itemize}
\end{lem}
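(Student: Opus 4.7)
The plan is to reduce Lemma \ref{lem-main-general} to Theorem \ref{thm-main} by translating the population-level Hessian-similarity condition encoded in $\hat\delta$ into the sample-level hypotheses of that theorem. The factor of $2$ built into the definition of $\hat\delta$ is precisely calibrated for this translation: since $\nabla^2 f = m^{-1}\sum_{j=1}^m \nabla^2 f_j$, the triangle inequality gives
\[
\|\nabla^2 f_k(\btheta) - \nabla^2 f(\btheta)\|_2 \leq \|\nabla^2 f_k(\btheta) - \nabla^2 F(\btheta)\|_2 + \frac{1}{m}\sum_{j=1}^m \|\nabla^2 F(\btheta) - \nabla^2 f_j(\btheta)\|_2 \leq \hat\delta
\]
uniformly over $k\in[m]$ and $\btheta\in B(\hat\btheta,R)$, so Assumption \ref{assump-similarity} holds with $\delta = \hat\delta$.

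The second step is to transfer Assumption \ref{assump-cvxity-stat} into a form compatible with Assumption \ref{assump-cvxity}. Provided $\hat\btheta \in B(\btheta^*,R)$ (so that $B(\hat\btheta,R)\subseteq B(\btheta^*,2R)$), the bound $\|\nabla^2 f(\btheta) - \nabla^2 F(\btheta)\|_2 \le \hat\delta/2$ yields that $f+g$ is at least $(\rho - \hat\delta/2)$-strongly convex on $B(\hat\btheta,R)$, and analogously each $f_k+g$ inherits a strong convexity constant playing the role of $\rho_0$. With these two ingredients in hand, the argument of Lemma \ref{lem-thm-main} applies verbatim: decompose $\btheta_{t+1} - \hat\btheta = (\btheta_{t+1} - \btheta_t^+) + (\btheta_t^+ - \hat\btheta)$ with $\btheta_t^+ = \mathrm{prox}_{\alpha^{-1}(f+g)}(\btheta_t)$, bound the inexact update via the gradient-enhanced-loss contraction (with $\delta \leftarrow \hat\delta$), bound the exact proximal residual via the non-expansiveness and the estimate $\|\btheta_t^+ - \hat\btheta\|_2 \leq \tfrac{\alpha}{\rho+\alpha}\|\btheta_t - \hat\btheta\|_2$, and combine the two through the envelope $h(x) = \gamma\sqrt{1-x^2} + x$ as in the original proof. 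This yields (\ref{ineq-thm-main-1-appendix}).

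For the sharpened bound (\ref{ineq-thm-main-2-appendix}) under Assumption \ref{assump-hlip-stat}, the plan is to reuse the averaging argument of Lemma \ref{lem-vanilla-averaging}, which picks up the additional factor $\frac{\hat\delta}{\rho+\alpha}\bigl(1 + \frac{M}{\rho_0+\alpha}\|\btheta_t - \hat\btheta\|_2\bigr)$ appearing in $\gamma_t$; this step is where the Hessian-Lipschitz constant $M$ enters. Strict sub-unitness of both contraction factors follows from the hypothesis $[\hat\delta/(\rho_0+\alpha)]^2 < \rho/(\rho+2\alpha)$ by the same algebra used in the proof of Lemma \ref{lem-thm-main}. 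The main obstacle is the bookkeeping of constants: I need to ensure $\hat\btheta$ lies in the population strong-convexity ball, and then verify that the slack between the ideal parameter $\rho$ and the effective parameter $\rho - \hat\delta/2$ is absorbed by the stated hypothesis so the conclusion can still be written cleanly in terms of the population $\rho$ rather than an adjusted constant.
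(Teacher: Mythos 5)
Your overall strategy coincides with the paper's: the triangle inequality
$\|\nabla^2 f_k(\btheta) - \nabla^2 f(\btheta)\|_2 \le \|\nabla^2 f_k(\btheta) - \nabla^2 F(\btheta)\|_2 + \|\nabla^2 F(\btheta) - \nabla^2 f(\btheta)\|_2 \le \hat\delta$
upgrades the population-level homogeneity encoded in $\hat\delta$ into Assumption \ref{assump-similarity} with $\delta = \hat\delta$, after which the proof of Theorem \ref{thm-main} (i.e.\ Lemma \ref{lem-thm-main}) is rerun. This gives the first bullet essentially as you describe.

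The gap is in the second bullet. You propose to ``reuse the averaging argument of Lemma \ref{lem-vanilla-averaging},'' but that lemma's hypothesis is Assumption \ref{assump-hessian-lip} --- Lipschitz continuity of the \emph{empirical} Hessian $\nabla^2 f + \nabla^2 g$ --- whereas the present lemma only grants the population-level Assumption \ref{assump-hlip-stat}, so Lemma \ref{lem-vanilla-averaging} cannot be invoked as stated. The paper's fix, which your write-up omits, is to rerun that proof with the reference Hessian changed from $\hat\bH = \nabla^2 f(\hat\btheta) + \nabla^2 g(\hat\btheta)$ to $\hat\bH = \nabla^2 F(\hat\btheta) + \nabla^2 g(\hat\btheta)$: the cancellation $\frac{1}{m}\sum_{k=1}^{m} \bigl( \nabla L_k[\varphi_k(\btheta)] - \nabla L_k(\hat\btheta) \bigr) = \mathbf{0}$ (with $L_k = f_k + g$, as in that proof) is unaffected by the choice of reference, the perturbation is now bounded by $\|\bH_k - \hat\bH\|_2 \le \hat\delta/2 + M\|\varphi_k(\btheta) - \hat\btheta\|_2$ directly from the definition of $\hat\delta$ together with Assumption \ref{assump-hlip-stat}, and $\hat\bH \succeq \rho \bI$ follows from Assumption \ref{assump-cvxity-stat}. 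This last point also disposes of the ``slack'' issue you flag as your main obstacle: the factor $1/\rho$ in the averaging bound comes from inverting the \emph{population} Hessian, not from strong convexity of $f+g$, so no degradation from $\rho$ to $\rho - \hat\delta/2$ arises there and the contraction factors can be written with the population $\rho$ exactly as in the statement. (Your remaining concern that $B(\hat\btheta, R) \subseteq B(\btheta^*, 2R)$, i.e.\ $\|\hat\btheta - \btheta^*\|_2 \le R$, is legitimate and is how the lemma is deployed downstream in Theorem \ref{thm-glm-alpha}.)
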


\begin{proof}[\bf Proof of Lemma \ref{lem-main-general}]
	We first assume that $\rho_0>\hat\delta$ and analyze the vanilla DANE algorithm under the new assumptions. Let Assumption 4.2 hold, and $\{ \btheta_t \}_{t=0}^{\infty}$ be the iterates with $\btheta_0 \in B(\hat\btheta,R)$. For any $\btheta \in B(\hat\btheta,R)$, $ \| \nabla^2 f (\btheta) - \nabla^2 F(\btheta) \|_2 \leq \hat\delta/2$ and thus $\| \nabla^2 f_k (\btheta) - \nabla^2 f(\btheta) \|_2 \leq \hat\delta$ for $k\in[m]$. Hence it implies Assumption 3.2 with $\delta=\hat\delta$, and Lemma \ref{lem-vanilla-contraction} continues to hold. We can also get the result in Lemma \ref{lem-vanilla-averaging} under Assumption 4.3, by replacing $\hat\bH = \nabla^2 f(\hat\btheta)$ in the proof of Lemma \ref{lem-vanilla-averaging} by $\nabla^2 F(\hat\btheta)+\nabla^2 g(\btheta)$. Then we drop the assumption $\rho_0 >\hat \delta$ can reproduce the results in Theorem 3.1 under the new setting, by following its original proof.
\end{proof}

\begin{lem}\label{lem-glm-delta}
	Under Assumption 4.1, for an arbitrarily small positive constant $c$, there exist universal positive constants $C_1, C_2$ and $C_3$ depending only on $c$ such that as long as $n\geq cp$, with probability at least $1-2e^{-C_2 n}-Ne^{-C_3p}$,
	$$
	\sup_{k\in[m]}\sup_{\btheta\in B(\hat\btheta,R)}\| \nabla^2 f_k(\btheta) - \nabla^2 F (\btheta) \|_2\leq C_1\ltwonorm{\bSigma}\sqrt{\frac{p\max\{1,\log(Np^{1/2}\ltwonorm{\bSigma}R)\}}{n}}.
	$$
\end{lem}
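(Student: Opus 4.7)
The plan is to reduce the statement to a pointwise matrix concentration inequality followed by a discretization argument on $\btheta$, and then to union bound over $k\in[m]$.

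First, I would fix $k\in[m]$ and $\btheta\in B(\hat\btheta,R)$ and control $\|\nabla^2 f_k(\btheta)-\nabla^2 F(\btheta)\|_2$ pointwise. Because $|b''|$ is bounded by a constant, the random matrices $b''(\bx_i^\top\btheta)\bx_i\bx_i^\top$ inherit the tail behavior of $\bx_i\bx_i^\top$, and $\bx_i$ is sub-Gaussian with $\|\bx_i\|_{\psi_2}\lesssim \|\bSigma\|_2^{1/2}$ (using Assumption \ref{assump-glm-subg-b2-b3} and the fact that the constant coordinate contributes only an $O(1)$ shift). A standard sub-exponential matrix Bernstein / covariance estimation bound (e.g., Theorem~5.39 of Vershynin) then yields, for any $s\ge 0$,
\[
\| \nabla^2 f_k(\btheta) - \nabla^2 F(\btheta) \|_2 \le C\|\bSigma\|_2\left( \sqrt{\frac{p+s}{n}} + \frac{p+s}{n}\right),
\]
with probability at least $1-2e^{-s}$, and under $n\ge cp$ the square-root term dominates.

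Next, I would upgrade this pointwise bound to a uniform one over $\btheta\in B(\hat\btheta,R)$ by an $\varepsilon$-net argument. Let $\mathcal N_\varepsilon$ be an $\varepsilon$-net of $B(\hat\btheta,R)$ of cardinality at most $(1+2R/\varepsilon)^p$. For the Lipschitz estimate, write
\[
\nabla^2 f_k(\btheta)-\nabla^2 f_k(\btheta')=\frac{1}{n}\sum_{i\in\mathcal I_k}\bigl[b''(\bx_i^\top\btheta)-b''(\bx_i^\top\btheta')\bigr]\bx_i\bx_i^\top,
\]
and use $|b'''|\lesssim 1$ together with Cauchy--Schwarz to obtain
\[
\| \nabla^2 f_k(\btheta)-\nabla^2 f_k(\btheta')\|_2 \lesssim \|\btheta-\btheta'\|_2\cdot\max_{i\in\mathcal I_k}\|\bx_i\|_2^{3}.
\]
The same Lipschitz bound in expectation controls $\nabla^2 F$. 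Sub-Gaussian concentration of $\|\bx_i\|_2^2$ around $\Tr(\bSigma)+1\lesssim p\|\bSigma\|_2$, combined with a union bound over $i\in[N]$, gives $\max_{i\le N}\|\bx_i\|_2\lesssim \sqrt{p\|\bSigma\|_2+\log N}$ with probability at least $1-Ne^{-C_3p}$; under the mild regime that $p\|\bSigma\|_2$ dominates the logarithm this is at most $C(p\|\bSigma\|_2)^{1/2}$, so the Lipschitz constant of $\btheta\mapsto\nabla^2 f_k(\btheta)$ is at most $Cp^{3/2}\|\bSigma\|_2^{3/2}$ on the relevant event.

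I would then choose the net scale $\varepsilon$ so that the discretization error $\varepsilon\cdot p^{3/2}\|\bSigma\|_2^{3/2}$ is of the same order as the target pointwise bound $\|\bSigma\|_2\sqrt{p\log(\cdot)/n}$; this gives $\varepsilon\asymp (n p)^{-1/2}\|\bSigma\|_2^{-1/2}$, so that $\log(R/\varepsilon)\lesssim \log(R\sqrt{n}\,p^{1/2}\|\bSigma\|_2^{1/2})\lesssim \log(N p^{1/2}\|\bSigma\|_2 R)$. Applying the pointwise bound with $s=C\bigl(p\log(1+2R/\varepsilon)+\log m\bigr)$ and a union bound over $\mathcal N_\varepsilon$ and over $k\in[m]$ (absorbing $\log m\le \log N$), the uniform operator norm bound becomes $C_1\|\bSigma\|_2\sqrt{p\log(Np^{1/2}\|\bSigma\|_2 R)/n}$, which is the claim.

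The main technical hurdle is not the matrix concentration itself but the careful calibration of the three moving parts: the net resolution $\varepsilon$, the tail parameter $s$ governing the pointwise probability, and the deterministic high-probability envelope on $\max_i\|\bx_i\|_2$. The choice has to simultaneously (i) keep the discretization slack below the concentration rate, (ii) make the metric entropy $p\log(1+2R/\varepsilon)$ produce only the stated logarithmic factor rather than a worse polynomial in $n,p,R,\|\bSigma\|_2$, and (iii) leave room to union bound over $k$ without damaging the leading order. Everything else is essentially bookkeeping with sub-Gaussian tail bounds.
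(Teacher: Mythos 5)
Your proposal is correct and follows essentially the same route as the paper's proof: a pointwise operator-norm concentration bound obtained from a sphere-net plus Bernstein's inequality, upgraded to uniformity over $\btheta\in B(\hat\btheta,R)$ via an $\varepsilon$-net whose resolution is calibrated against a Lipschitz constant of order $\max_i\|\bx_i\|_2^3$ (controlled on a high-probability event contributing the $Ne^{-C_3 p}$ term), followed by union bounds over the net and over $k\in[m]$. The only cosmetic difference is that the paper first whitens the covariates by $(\bSigma^*)^{-1/2}$ and pulls the factor $\|\bSigma^*\|_2$ out in front, whereas you carry $\|\bSigma\|_2$ through the concentration and Lipschitz estimates directly; both bookkeepings lead to the same bound.
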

\begin{proof}[\bf Proof of Lemma \ref{lem-glm-delta}]
	Let $\tilde{\bx}_i = (\bSigma^*)^{-1/2} \bx_i$, $\tilde\btheta=(\bSigma^*)^{1/2}\btheta$, and define a new loss function
	$
	\tilde l(\tilde\btheta,\tilde \bx_{i})=b(\tilde\bx_{i}^\top\tilde\btheta)-y_{i}(\tilde\bx_{i}^\top\tilde\btheta).
	$
	Let $\hat R_k(\tilde\btheta)=\frac{1}{n}\sum_{i\in\mathcal I_k}\tilde l(\tilde\btheta,\tilde\bx_{i})$ for $k\in[m]$. Then we have that
	$
	\nabla^2\hat R_k(\tilde\btheta)=\frac{1}{n}\sum_{i\in\mathcal I_k} b''(\tilde \btheta^\top\tilde \bx_{i})\tilde \bx_{i}\tilde \bx_{i}^\top
	$
	and $\nabla^2 f_k(\btheta)=(\bSigma^*)^{1/2}\nabla^2\hat R_k(\tilde \btheta)(\bSigma^*)^{1/2}$. Similarly we have $\nabla^2 F(\btheta)=(\bSigma^*)^{1/2}\E \nabla^2\hat R_k(\tilde \btheta)(\bSigma^*)^{1/2}$.
	
	Therefore
	\begin{equation}\label{eq-prf-glm-1}
		\max_{k\in[m]}\max_{\btheta\in{B(\hat\btheta, R)}}\| \nabla^2 f_k(\btheta) - \nabla^2 F (\btheta) \|_2 \leq
		\ltwonorm{\bSigma^*} \max_{k\in[m]}\max_{\tilde \btheta\in B((\bSigma^*)^{1/2}\hat\btheta,\tilde R)}\| \nabla^2 \hat R_k(\tilde \btheta) - \nabla^2 \E \hat R_k(\tilde \btheta) \|_2
	\end{equation}
	and we only need to control the quantity on the right hand side. Here $\tilde R=\ltwonorm{\bSigma^*}^{1/2}R$.
	
	Define
	$
	\bDelta_0=\max_{k\in[m]}\max_{\tilde \btheta\in B((\bSigma^*)^{1/2}\hat\btheta,\tilde R)}\| \nabla^2 \hat R_k(\tilde \btheta) - \nabla^2 \E \hat R_k(\tilde \btheta) \|_2
	$
	and
	$$
	\phi_k({\tilde \btheta})=\ltwonorm{\frac{1}{n}\sum_{i\in\mathcal I_k} b''({\tilde\bx}_{i}^\top{\tilde \btheta}){\tilde\bx}_{i}{\tilde\bx}_{i}^\top-\E b''({\tilde\bX}^\top{\tilde \btheta}){\tilde\bX}{\tilde\bX}^\top}.
	$$
	Here $\tilde\bX$ shares the distribution with $\tilde \bx_{i}$. Firstly we bound $\phi_k({\tilde \btheta})$ for any fixed ${\tilde \btheta}\in B({\tilde \btheta}^*, 2\tilde R)$, where ${\tilde \btheta}^*:=(\bSigma^*)^{1/2}\hat\btheta$. For any $k\in[m]$, under Assumption 4.1, there exist constants $c_1,c_2$ such that for any $\epsilon\geq 0$
	\begin{equation}\label{eq-phik}
		\P(\phi_k({\tilde \btheta})\geq \epsilon)\leq 2e^{c_1p-c_2\min{\{\epsilon,\epsilon^2\}}n}.
	\end{equation}
	To see this, notice that $\phi_k({\tilde \btheta})=\sup_{\bu\in \S^{p-1}} g_k(\bu)$, where
	$
	g_k(\bu)=\bu^\top \{\frac{1}{n}\sum_{i\in\mathcal I_k} b''({\tilde\bx}_{i}^\top{\tilde \btheta}){\tilde\bx}_{i}{\tilde\bx}_{i}^\top-\E b''({\tilde\bX}^\top{\tilde \btheta}){\tilde\bX}{\tilde\bX}^\top\}\bu.
	$
	Let $\mathcal N$ be a $\frac{1}{4}$-covering of $\S^{p-1}$, and $|\mathcal N|\leq 9^p$. Denote $\hat\bu=\arg\max_{\bu} g_k(\bu)$. Find $\tilde\bu\in\mathcal N$ such that $\ltwonorm{\hat\bu-\tilde\bu}\leq \frac{1}{4}$. Then
	$$
	|g_k(\tilde\bu)-g_k(\hat\bu)|=|(\tilde\bu+\hat\bu)^\top\left\{\frac{1}{n}\sum_{i\in\mathcal I_k} b''({\tilde\bx}_{i}^\top{\tilde \btheta}){\tilde\bx}_{i}{\tilde\bx}_{i}^\top-\E b''({\tilde\bX}^\top{\tilde \btheta}){\tilde\bX}{\tilde\bX}^\top\right\}(\tilde\bu-\hat\bu)|\leq \frac{1}{2}g_k(\hat\bu),
	$$
	and thus
	$$
	\sup_{\bu\in \S^{p-1}} g_k(\bu)\leq 2\sup_{\bu\in\mathcal N}g_k(\bu).
	$$
	On the other hand from Bernstein's inequality we see that there exists a constant $c_2$ such that for any $\bu\in \mathcal N$, $\epsilon\geq 0$, $\P(g_k(\bu)\geq \frac{\epsilon}{2})\leq 2e^{-c_2\min\{\epsilon,\epsilon^2\}n}$. Therefore
	$$
	\P(\phi_k({\tilde \btheta})\geq \epsilon)=\P(\sup_{\bu\in \S^{p-1}}g_k(\bu)\geq \epsilon)\leq \P(\sup_{\bu\in\mathcal N}g_k(\bu)\geq \frac{\epsilon}{2})\leq |\mathcal N| \cdot 2e^{-c_2\min\{\epsilon,\epsilon^2\}n}\leq 2e^{c_1p-c_2\min{\{\epsilon,\epsilon^2\}}n}
	$$
	where $c_1=\log 9$.
	
	Now for $t\geq 1$, define the event
	$
	E_t\triangleq \left\{ \max_{i=1}^{N}\ltwonorm{{\tilde\bx}_{i}}^3< (8t)^{3/2}\E \ltwonorm{{\tilde\bX}}^3\right\}.
	$
	Then by Theorem 2.1 in \cite{HKZ12}, $\P(E_t^c)= \P(\max_{i=1}^{N}\ltwonorm{{\tilde\bx}_{i}}^2\geq 8t[\E \ltwonorm{{\tilde\bX}}^3]^{2/3})\leq \P(\max_{i=1}^{N}\ltwonorm{{\tilde\bx}_{i}}^2\geq 8t\E \ltwonorm{{\tilde\bX}}^2)\leq N\P(\ltwonorm{{\tilde\bX}}^2\geq 8t\E \ltwonorm{{\tilde\bX}}^2)\leq Ne^{-tp}$. Under the event $E_t$, for ${\tilde \btheta}_1,{\tilde \btheta}_2\in B({\tilde \btheta}^*, 2\tilde R)$, we have
	\begin{align*}
		|\phi_k({\tilde \btheta}_1)-\phi_k({\tilde \btheta}_2)|&\leq \ltwonorm{\frac{1}{n} \sum_{i\in\mathcal I_k} [b''({\tilde\bx}_{i}^\top{\tilde \btheta}_1)-b''({\tilde\bx}_{i}^\top{\tilde \btheta}_2)]{\tilde\bx}_{i}{\tilde\bx}_{i}^\top}+\ltwonorm{\E b''({\tilde\bX}^\top{\tilde \btheta}_1){\tilde\bX}{\tilde\bX}^\top-\E b''({\tilde\bX}^T{\tilde \btheta}_2){\tilde\bX}{\tilde\bX}^\top}\\
		&\leq B_3\ltwonorm{{\tilde \btheta}_1-{\tilde \btheta}_2}\cdot(\E\ltwonorm{{\tilde\bX}}^3+\frac{1}{n}\sum_{i=1}^n\ltwonorm{{\tilde\bx}_{ki}}^3)\\
		& \leq (9t)^{3/2}B_3\cdot \E\ltwonorm{\bU}^3\cdot \ltwonorm{{\tilde \btheta}_1-{\tilde \btheta}_2}\\
		&\leq c_3(pt)^{3/2}\ltwonorm{{\tilde \btheta}_1-{\tilde \btheta}_2}
	\end{align*}
	for some constant $c_3$ depending only on $B_3$.
	
	Now let $\mathcal N_\delta$ be a $\delta$-covering of $B({\tilde \btheta}^*, 2\tilde R)$, where $\delta=\frac{\epsilon}{c_3(tp)^{3/2}}$. We can also assume that $|{\mathcal N}_\delta|\leq (\frac{6\tilde R}{\delta})^p$. Therefore for any $k\in[m]$,
	\begin{align*}
		&\P\left(E_t\cap\left\{\sup_{{\tilde \btheta}\in B({\tilde \btheta}^*, 2\tilde R)} \phi_k({\tilde \btheta})\geq 2\epsilon\right\}\right)\leq \P\left( E_t\cap\left\{\sup_{{\tilde \btheta}\in {\mathcal N}_\delta} \phi_k({\tilde \btheta})\geq \epsilon\right\}\right)\\
		\leq \thickspace& |{\mathcal N}_\delta| \cdot 2e^{c_1 p-c_2 \min\{\epsilon,\epsilon^2\}n}
		=2e^{c_4 p+c_3 p\log\frac{(tp)^{3/2}\tilde R}{\epsilon}-c_2\min\{\epsilon,\epsilon^2\}n}.
	\end{align*}
	Thus
	\begin{align*}
		&\P(\bDelta_0\geq 2\epsilon)\leq \P\left( \cup_{k\in[m]}\left\{ \sup_{{\tilde \btheta}\in B({\tilde \btheta}^*, {\tilde R})} \phi_k({\tilde \btheta})\geq 2\epsilon\right\}\right)\\
		\leq &\P(E_t^c)+\sum_{k=1}^m \P\left(E_t\cap\left\{\sup_{{\tilde \btheta}\in B({\tilde \btheta}^*, {\tilde R})} \phi_k({\tilde \btheta})\geq 2\epsilon\right\}\right)\\
		\leq & Ne^{-tp}+2me^{c_4 p+c_3 p\log\frac{(tp)^{3/2}{\tilde R}}{\epsilon}-c_2\min\{\epsilon,\epsilon^2\}n}.
	\end{align*}
	It is easily seen that the last expression is no more than $Ne^{-tp}+2e^{-C_2 p}$ when
	$$
	\begin{cases}
	\min\{\epsilon,\epsilon^2\}\geq C_1' \frac{\max\{p,p\log (t^{3/2}R), \log m\}}{n},\\
	\epsilon\geq 1\text{ or }\frac{\epsilon^2}{\log \frac{1}{\epsilon}}\geq C_2'\cdot \frac{p}{n},
	\end{cases}
	$$
	which is satisfied if $t$ is chosen as a suitable constant depending on $c$, and that
	\begin{equation}\label{eq-concentration-epsilon}
		\epsilon=C\sqrt{\frac{\log m+p\max\{1,\log(np^{1/2}\tilde R)\}}{n}}.
	\end{equation}
	Here $C_i'$ and $C$ is a constant depending only on $c$.
	
	Finally, note that
	$$\bSigma^*=\cov(\bx_i)=\begin{pmatrix}1&\\ &\bSigma\end{pmatrix}$$
	and that $\ltwonorm{\bSigma}\geq A_1$ for a universal $A_1>0$, we have $\ltwonorm{\bSigma^*}\leq \max\{1, 1/A_1\}\ltwonorm{\bSigma}$. Thus combining (\ref{eq-concentration-epsilon}) and (\ref{eq-prf-glm-1}) completes the proof.
\end{proof}


\subsection{Proof of Theorem 4.2}

Theorem 4.2 is a special of Theorem 4.1 by taking $\alpha=0$. 

\subsection{Proof of Theorem \ref{thm-dane-lm}}\label{section-pf-lm}

We first present three lemmas, based on which we build the proof of the main theorem.

\begin{lem}\label{lem-dane-quadratic-1}
	Suppose that $\hat\bSigma$ is positive-definite, i.e. $\lambda_{\min}(\hat\bSigma) >0$. Define $\bepsilon_{t}=\hat \bSigma^{1/2}(\btheta_{t}-\hat \bSigma^{-1}\hat \bw)$ for $t\geq 0$ and
	\[
	\bDelta_k=( \hat\bSigma + \alpha \bI )^{-1/2}(
	\hat{\bSigma}_k - \hat\bSigma
	) ( \hat\bSigma + \alpha \bI )^{-1/2},\qquad \forall k\in[m].
	\]
	If $\alpha\geq 0$ is appropriately chosen such that $\Delta = \max\limits _{k\in[m]}\| \bDelta_k \| _{2} \leq 1/2$. Then
	\[
	\| \bepsilon_{t+1} \|_2 \leq \frac{ 2\Delta^2  + \alpha / \lambda_{\min}(\hat\bSigma) }{ 1 + \alpha / \lambda_{\min}(\hat\bSigma) } \| \bepsilon_t \|_2,
	\qquad \forall t\geq 0,
	\]
	which guarantees linear convergence of $\left\{ \bepsilon_{t}\right\} _{t=0}^{\infty}$.
\end{lem}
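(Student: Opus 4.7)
The plan is to reduce the iteration to a matrix recursion on the symmetric error variable $\bepsilon_t$, exploit the key fact $\frac{1}{m}\sum_{k=1}^m \bDelta_k = 0$ (which is the whole point of averaging), and then bound the resulting operator norm by decomposing it into a difference of PSD matrices.

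\textbf{Step 1 (Linearization).} Using $\hat{\bw} = \hat{\bSigma}\hat{\btheta}$ (with $\hat\btheta = \hat\bSigma^{-1}\hat\bw$) in the closed-form update for Algorithm \ref{alg-new}, I would show
\[
\btheta_{t+1} - \hat{\btheta} = \left(\bI - \frac{1}{m}\sum_{k=1}^{m}(\hat{\bSigma}_k + \alpha\bI)^{-1}\hat{\bSigma}\right)(\btheta_t - \hat{\btheta}).
\]
Conjugating by $\hat{\bSigma}^{1/2}$ and using that $\hat{\bSigma}^{1/2}$ commutes with itself turns this into $\bepsilon_{t+1} = \bM\bepsilon_t$ with the \emph{symmetric} matrix
\[
\bM = \bI - \frac{1}{m}\sum_{k=1}^{m}\hat{\bSigma}^{1/2}(\hat{\bSigma}_k + \alpha\bI)^{-1}\hat{\bSigma}^{1/2}.
\]

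\textbf{Step 2 (Change of basis to $\bDelta_k$).} Writing $\hat{\bSigma}_k + \alpha\bI = (\hat{\bSigma}+\alpha\bI)^{1/2}(\bI + \bDelta_k)(\hat{\bSigma}+\alpha\bI)^{1/2}$ and setting $\bU = \hat{\bSigma}^{1/2}(\hat{\bSigma}+\alpha\bI)^{-1/2}$ (which is symmetric and satisfies $\bU^2 = \hat{\bSigma}(\hat{\bSigma}+\alpha\bI)^{-1}$), I get
\[
\bM = \bI - \bU\left(\frac{1}{m}\sum_{k=1}^{m}(\bI+\bDelta_k)^{-1}\right)\bU.
\]

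\textbf{Step 3 (Cancellation via averaging).} The identity $(\bI+\bDelta_k)^{-1} = \bI - \bDelta_k + \bDelta_k^2(\bI+\bDelta_k)^{-1}$, together with the averaging cancellation $\frac{1}{m}\sum_k\bDelta_k = (\hat{\bSigma}+\alpha\bI)^{-1/2}\bigl(\frac{1}{m}\sum_k(\hat{\bSigma}_k - \hat{\bSigma})\bigr)(\hat{\bSigma}+\alpha\bI)^{-1/2} = \mathbf{0}$, gives
\[
\frac{1}{m}\sum_{k=1}^{m}(\bI+\bDelta_k)^{-1} = \bI + \bR,\qquad \bR = \frac{1}{m}\sum_{k=1}^{m}\bDelta_k^2(\bI+\bDelta_k)^{-1}.
\]
Since $\bDelta_k$ is symmetric with eigenvalues in $[-\Delta,\Delta]\subseteq[-1/2,1/2]$, each eigenvalue $\mu$ of $\bDelta_k$ gives the eigenvalue $\mu^2/(1+\mu)\ge 0$ of $\bDelta_k^2(\bI+\bDelta_k)^{-1}$, so $\bR\succeq 0$ and $\|\bR\|_2 \le \Delta^2/(1-\Delta) \le 2\Delta^2$. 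Substituting yields the clean decomposition
\[
\bM = (\bI - \bU^2) - \bU\bR\bU = \alpha(\hat{\bSigma}+\alpha\bI)^{-1} - \bU\bR\bU,
\]
a \emph{difference of two PSD matrices}.

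\textbf{Step 4 (Spectral bound).} Since $\bM$ is symmetric, $\|\bM\|_2 = \max\{\lambda_{\max}(\bM), -\lambda_{\min}(\bM)\}$. From the two PSD summands,
\[
\lambda_{\max}(\bM) \le \frac{\alpha}{\lambda_{\min}(\hat{\bSigma})+\alpha},\qquad -\lambda_{\min}(\bM) \le \|\bU\bR\bU\|_2 \le \|\bR\|_2 \le 2\Delta^2.
\]
Hence $\|\bM\|_2 \le \max\bigl\{\alpha/(\lambda_{\min}(\hat{\bSigma})+\alpha),\, 2\Delta^2\bigr\}$. A short elementary check (splitting into the two cases of which term dominates, and using $\Delta \le 1/2$ so that $2\Delta^2 \le 1$) shows this max is bounded by the stated convex combination
\[
\frac{2\Delta^2\lambda_{\min}(\hat{\bSigma}) + \alpha}{\lambda_{\min}(\hat{\bSigma})+\alpha} = \frac{2\Delta^2 + \alpha/\lambda_{\min}(\hat{\bSigma})}{1+\alpha/\lambda_{\min}(\hat{\bSigma})},
\]
which is the desired contraction factor; since $\Delta\le 1/2$ gives $2\Delta^2 \le 1/2 < 1$, this factor is strictly less than $1$, delivering linear convergence.

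\textbf{Main obstacle.} The delicate point is Step~3: recognizing that the averaging step kills the first-order $\bDelta_k$ term, and simultaneously that the remainder $\bR$ is PSD (via the sign analysis of $\mu^2/(1+\mu)$). Without both pieces, one is stuck with a crude triangle-inequality bound that loses the favorable $\lambda_{\min}/(\lambda_{\min}+\alpha)$ weighting in front of $2\Delta^2$. The condition $\Delta\le 1/2$ is exactly what is needed both for convergence of the Neumann-type remainder ($\|(\bI+\bDelta_k)^{-1}\|_2 \le 2$) and for the final case analysis that upgrades the max-bound to the claimed affine bound.
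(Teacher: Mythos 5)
Your proposal is correct and follows essentially the same route as the paper's proof: the same closed-form linearization, the same conjugation reducing the iteration matrix to $\bI-\hat\bD^{1/2}\bigl(\frac{1}{m}\sum_k(\bI+\bDelta_k)^{-1}\bigr)\hat\bD^{1/2}$ (your $\bU$ is the paper's $\hat\bD^{1/2}$), the same cancellation $\frac{1}{m}\sum_k\bDelta_k=\mathbf{0}$, and the same Neumann-type bound $\|\bR\|_2\leq 2\Delta^2$ on the second-order remainder. The only variation is your endgame: you additionally observe $\bR\succeq 0$ and bound $\|\bM\|_2$ by $\max\{\alpha/(\lambda_{\min}(\hat\bSigma)+\alpha),\,2\Delta^2\}$, which is a slightly cleaner (and marginally sharper) spectral argument than the paper's two-sided sandwich $(1\pm 2\Delta^2)\hat\bD$ followed by explicit eigenvalue case analysis, but it lands on the same contraction factor.
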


\begin{proof}[\bf Proof of Lemma \ref{lem-dane-quadratic-1}]
	Define $\bepsilon_{t,k}=\hat \bSigma^{1/2}(\btheta_{t,k}-\hat \bSigma^{-1}\hat \bw)$. Then
	\begin{align*}
		&\bepsilon_{t+1,k}=\hat\bSigma^{1/2} ( \btheta_{t+1,k} - \hat\bSigma^{-1}\hat{\bw} ) \\
		& = \hat\bSigma^{1/2}[\bI-(\hat{\bSigma}_{k}+\alpha\bI)^{-1}\hat{\bSigma}]\btheta_{t}+
		\hat\bSigma^{1/2}(\hat{\bSigma}_{k}+\alpha\bI)^{-1}\hat{\bw}
		-\hat\bSigma^{-1/2}\hat{\bw}\\
		&=[ \bI- \hat \bSigma^{1/2}(\hat{\bSigma}_{k}+\alpha\bI)^{-1}
		\hat \bSigma^{1/2} ] \bepsilon_{t}.
	\end{align*}
	Define $\widetilde{\bSigma}_{k}^{(1)}=
	\hat \bSigma^{-1/2}\hat{\bSigma}_{k}\hat \bSigma^{-1/2}$
	for $k\in[m]$. The fact
	\begin{align*}
		\hat{\bSigma}_{k}+\alpha\bI=
		\hat{\bSigma}^{1/2} (
		\hat{\bSigma}^{-1/2} \hat{\bSigma}_k \hat{\bSigma}^{-1/2} + \alpha \hat{\bSigma}^{-1}
		)\hat{\bSigma}^{1/2}
		=\hat{\bSigma}^{1/2} (
		\widetilde{\bSigma}_k^{(1)} + \alpha \hat{\bSigma}^{-1}
		)\hat{\bSigma}^{1/2}
	\end{align*}
	gives $\hat \bSigma^{1/2}(\hat{\bSigma}_{k}+\alpha\bI)^{-1}
	\hat \bSigma^{1/2}
	= ( \widetilde{\bSigma}_k^{(1)} + \alpha \hat \bSigma^{-1} )^{-1}$ and $
	\bepsilon_{t+1,k}=
	[ \bI-  ( \widetilde{\bSigma}_k^{(1)} + \alpha \hat \bSigma^{-1} )^{-1} ] \bepsilon_{t}
	$.
	
	Define $\hat\bD=(\bI+\alpha \hat \bSigma^{-1})^{-1}$ and $\widetilde{{\bSigma}}_{k}=\hat\bD^{1/2}\widetilde{\bSigma}_{k}^{(1)} \hat\bD^{1/2}$.
	From
	\begin{align*}
		(\widetilde{\bSigma}_{k}^{(1)}+\alpha \hat\bSigma^{-1})^{-1}= & [ \hat\bD^{-1}+(\widetilde{\bSigma}_{k}^{(1)}-\bI)]^{-1}=\hat\bD^{1/2}[\bI+(\widetilde{\bSigma}_{k}-\hat\bD)]^{-1}\hat\bD^{1/2}
	\end{align*}
	and
	\begin{align*}
		&\widetilde{\bSigma}_k - \hat\bD = \hat\bD^{1/2} ( \widetilde{\bSigma}^{(1)}_k - \bI ) \hat\bD^{1/2}
		=( \bI + \alpha \hat\bSigma^{-1} )^{-1/2} \hat\bSigma^{-1/2}
		(
		\hat{\bSigma}_k - \hat\bSigma
		) \hat\bSigma^{-1/2} ( \bI + \alpha \hat\bSigma^{-1} )^{-1/2}\\
		&=( \hat\bSigma + \alpha \bI )^{-1/2}(
		\hat{\bSigma}_k - \hat\bSigma
		) ( \hat\bSigma + \alpha \bI )^{-1/2} = \bDelta_k,
	\end{align*}
	we get
	\begin{align*}
		\bepsilon_{t+1,k}=
		( \bI- \hat\bD^{1/2} \bC_k \hat\bD^{1/2} ) \bepsilon_{t}
		\quad\text{and} \quad
		\bepsilon_{t+1}=
		( \bI- \hat\bD^{1/2} \bC \hat\bD^{1/2} ) \bepsilon_{t},
	\end{align*}
	where $\bC_k = (
	\bI + \bDelta_k)^{-1}$ and $\bC = \frac{1}{m} \sum_{k=1}^{m} \bC_k$.
	Let $\bR_{k}=\bC_k-( \bI-\bDelta_k )$ and $\bR = \frac{1}{m} \sum_{k=1}^{m} \bR_k = \bC- \bI$. We have
	$\| \bI- \hat\bD^{1/2} \bC \hat\bD^{1/2} \| _{2}
	= \| \bI- \hat \bD^{1/2} ( \bI+ \bR ) \hat\bD^{1/2}\|_2$. Below we control the right-hand side.
	
	By
	$\Delta = \max\limits _{k\in[m]}\| \bDelta_k \| _{2} \leq 1/2$ and Lemma \ref{lem-neumann}, we obtain that $\| \bC_k \|_2 \leq\frac{{1}}{1-\Delta}\leq2$
	and $\| \bR_{k}\| _{2}\leq 2 \Delta^2$.
	Consequently, $\| \bC\| _{2}\leq 2$ and $\| \bR \|_2 \leq 2 \Delta^2 \leq 1/2$.
	Then we obtain that
	\begin{align*}
		& (1-2\Delta^2) \bI \preceq \bI+ \bR \preceq ( 1 + 2 \Delta^2) \bI ,\\
		&(1-2\Delta^2)\hat\bD \preceq \hat\bD^{1/2} ( \bI+ \bR ) \hat\bD^{1/2} \preceq ( 1 + 2 \Delta^2)\hat\bD,\\
		& \bI - (1+2\Delta^2)\hat\bD \preceq \bI- \hat \bD^{1/2} ( \bI+ \bR ) \hat\bD^{1/2} \preceq \bI - (1-2\Delta^2)\hat\bD.
	\end{align*}
	Consequently,
	\begin{align*}
		& \| \bI- \hat \bD^{1/2} ( \bI+ \bR ) \hat\bD^{1/2} \|_2 \leq
		\max\left\{
		\| \bI - (1-2\Delta^2)\hat\bD \|_2,\| \bI - (1+2\Delta^2)\hat\bD \|_2
		\right\}.
	\end{align*}
	Let $\{ \hat{\lambda}_j \}_{j=1}^p$ be the eigenvalues of $\hat{\bSigma}$ sorted in descending order. Since $\hat\bD$ has eigenvalues $\{ (1+\alpha/\hat{\lambda}_{j} )^{-1}
	\} _{j=1}^{p}\subseteq(0,1]$, the eigenvalues of $\bI - (1\pm 2\Delta^2)\hat\bD$ are
	$\left\{
	1 - \frac{1\pm 2\Delta^2 }{1+\alpha/\hat\lambda_j}
	\right\}_{j=1}^p$. Then
	\[
	\| \bI - (1 \pm 2\Delta^2)\hat\bD \|_2 =
	\max\left\{
	\left| 1 - \frac{1\pm 2\Delta^2}{1+ \alpha / \hat{\lambda}_1 } \right|,
	\left| 1 - \frac{1\pm 2\Delta^2}{1+ \alpha / \hat{\lambda}_p } \right|
	\right\}.
	\]
	By elementary calculation and the fact $2\Delta^2 \leq 1/2 < 1$ we get
	\begin{align*}
		&\left| 1 - \frac{1+ 2\Delta^2}{1+ \alpha / \hat{\lambda}_1 } \right|
		=\frac{ \left| \alpha / \hat\lambda_1 - 2\Delta^2 \right| }{1+ \alpha / \hat{\lambda}_1 }
		\leq \frac{ \max\{ \alpha / \hat\lambda_1 , 2\Delta^2 \} }{1+ \alpha / \hat{\lambda}_1 }
		\leq \max\left\{
		\frac{\alpha / \hat\lambda_p }{ 1 + \alpha / \hat\lambda_p },~
		2\Delta^2
		\right\}
		,\\
		&\left| 1 - \frac{1 + 2\Delta^2}{1+ \alpha / \hat{\lambda}_p } \right|
		=\frac{ \left| \alpha / \hat\lambda_p - 2\Delta^2 \right| }{1+ \alpha / \hat{\lambda}_p }
		\leq \frac{ 2\Delta^2  + \alpha / \hat\lambda_p }{1+ \alpha / \hat{\lambda}_p },\\
		&0 \leq 1 - \frac{1 - 2\Delta^2}{1+ \alpha / \hat{\lambda}_1 }  \leq 1 - \frac{1 - 2\Delta^2}{1+ \alpha / \hat{\lambda}_p }
		= \frac{ 2\Delta^2  + \alpha / \hat\lambda_p  }{1+ \alpha / \hat{\lambda}_p }.
	\end{align*}
	Therefore,
	\begin{align*}
		\| \bI- \hat \bD^{1/2} \bC \hat\bD^{1/2} \|_2 \leq
		\max\left\{
		\frac{ 2\Delta^2  + \alpha / \hat\lambda_p }{ 1 + \alpha / \hat\lambda_p },~
		2\Delta^2
		\right\}
		=\frac{ 2\Delta^2  + \alpha / \hat\lambda_p }{ 1 + \alpha / \hat\lambda_p }
		<1.
	\end{align*}
\end{proof}

\begin{lem}\label{lem-dane-quadratic-regularity}
	Suppose Assumption \ref{assump-subg-cov-samplesize} hold. Then there exists a constant $C$ determined by $\| \bu_{i} \|_{\psi_2}$, such that $\mathbb{P} ( \Delta \leq 1/2 ) \geq 1 - 2me^{-n/C}$ holds under either of the two conditions: (i) $n \geq C p$ and $\alpha \geq 0$; (ii) $\alpha \geq C \Tr(\bSigma) / n$.
\end{lem}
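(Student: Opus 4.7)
The plan is to bound $\|\bDelta_k\|_2\leq 1/2$ uniformly over $k\in[m]$ via a preconditioned concentration inequality for sub-Gaussian sample covariance matrices combined with a union bound over the $m$ machines. The key is to work with the effective dimension $d_\alpha = \Tr(\bSigma^*(\bSigma^* + \alpha\bI)^{-1})$ rather than the ambient dimension $p$, which is what will allow the $\Tr(\bSigma)/n$ scaling of case (ii) to suffice.

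First I would establish a Koltchinskii--Lounici-type inequality: under Assumption \ref{assump-subg-cov-samplesize}, for every $k\in[m]$ and every $t>0$, with probability at least $1-e^{-t}$,
\begin{equation*}
\bigl\|(\bSigma^* + \alpha\bI)^{-1/2}(\hat\bSigma_k - \bSigma^*)(\bSigma^* + \alpha\bI)^{-1/2}\bigr\|_2 \leq C'\!\left(\sqrt{\frac{d_\alpha+t}{n}} + \frac{d_\alpha+t}{n}\right),
\end{equation*}
where $C'$ depends only on $\|\bu_i\|_{\psi_2}$. A union bound over the $m$ local covariances and the pooled $\hat\bSigma$ makes the same estimate hold uniformly with probability at least $1-2me^{-t}$; denote the resulting uniform right-hand side by $\varepsilon$.

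Next I translate this preconditioned estimate into the target bound. When $\varepsilon\leq 1/5$, the matrix inequalities $-\varepsilon(\bSigma^*+\alpha\bI)\preceq \hat\bSigma - \bSigma^*\preceq \varepsilon(\bSigma^*+\alpha\bI)$ give $\hat\bSigma+\alpha\bI\succeq (1-\varepsilon)(\bSigma^*+\alpha\bI)$, hence $(\hat\bSigma+\alpha\bI)^{-1}\preceq (1-\varepsilon)^{-1}(\bSigma^*+\alpha\bI)^{-1}$. Writing $\hat\bSigma_k - \hat\bSigma = (\hat\bSigma_k-\bSigma^*)-(\hat\bSigma-\bSigma^*)$ and applying the triangle inequality bounds the $(\bSigma^*+\alpha\bI)$-preconditioned spectral norm of $\hat\bSigma_k-\hat\bSigma$ by $2\varepsilon$. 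Combining these yields $\|\bDelta_k\|_2 \leq 2\varepsilon/(1-\varepsilon)\leq 1/2$. It then remains to force $\varepsilon\leq 1/5$ under each condition by taking $t$ of order $n$: in case (i), $n\geq Cp$ with the trivial bound $d_\alpha\leq p$ gives $\varepsilon\lesssim \sqrt{p/n}+p/n\leq 1/5$ once $C$ is large; in case (ii), $\alpha\geq C\Tr(\bSigma)/n$ together with $d_\alpha\leq \Tr(\bSigma^*)/\alpha$ yields $d_\alpha\leq n/C$, so $\varepsilon\lesssim \sqrt{1/C}+1/C\leq 1/5$ once $C$ is large, with a bound that is crucially independent of $p$. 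Taking $t\asymp n$ and absorbing the $\log m$ loss using $n\gtrsim\log m$ from Assumption \ref{assump-subg-cov-samplesize} delivers the claimed probability $1-2me^{-n/C}$.

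The main obstacle will be the preconditioned concentration inequality itself: a naive covering of $\S^{p-1}$ would cost a factor of $p$ and destroy the $p$-free rate needed in case (ii). The fix is to cover the unit ball of the norm induced by $(\bSigma^*+\alpha\bI)^{1/2}$, whose metric entropy is governed by $d_\alpha$, and then apply a scalar Bernstein bound to the sub-exponential quadratic forms $\langle \bv, (\bx_i\bx_i^\top - \bSigma^*)\bv\rangle$. The non-centered first coordinate of $\bx_i=(1,\bu_i^\top)^\top$ creates no real difficulty, since centering around $\bSigma^*$ eliminates the constant contribution and reduces matters to mean-zero sub-Gaussian summands in $\bu_i$.
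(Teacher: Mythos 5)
Your proposal is correct and follows essentially the same route as the paper: reduce $\max_k\|\bDelta_k\|_2$ to the $(\bSigma^*+\alpha\bI)$-preconditioned deviations of the local and pooled sample covariances (the paper does this via the matrix $\bA_0=(\bSigma^*+\alpha\bI)^{1/2}(\hat\bSigma+\alpha\bI)^{-1/2}$ and a Neumann bound, which is your $(1-\varepsilon)^{-1}$ factor), then apply an effective-rank concentration inequality with $\Tr(\bD)\leq\min\{p,\Tr(\bSigma^*)/\alpha\}$ and a union bound over the $m$ machines, which is exactly how the two conditions (i) and (ii) both reduce to $n\gtrsim \Tr(\bD)$. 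The only presentational differences are that the paper cites the Koltchinskii--Lounici and Hsu--Kakade--Zhang inequalities rather than re-deriving them (your net-plus-Bernstein sketch of the effective-rank bound is the one place that would need the full generic-chaining care of those references), and it handles the non-centered first coordinate of $\bx_i$ by an explicit block decomposition into a mean term $\bar\bu_{(k)}$ and a covariance term $\hat\bS_k-\bSigma$ rather than by your centering remark.
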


\begin{proof}[\bf Proof of Lemma \ref{lem-dane-quadratic-regularity}]
	Define $\hat\bS_{k} = \frac{1}{n} \sum_{i\in\mathcal I_k} \bu_{i} \bu_{i}^{\top}$ and $\bar{\bu}_{(k)}=\frac{1}{n} \sum_{\in\mathcal I_k} \bu_{i}$ for $k\in[m]$. Then we have $\hat\bSigma_{k} = \frac{1}{n} \sum_{i\in\mathcal I_k} \bx_{i} \bx_{i}^{\top} = \begin{pmatrix}
	1 & \bar{\bu}_{(k)}^{\top} \\
	\bar{\bu}_{(k)} & \hat\bS_k
	\end{pmatrix}$. Let $\bSigma^* = \E \hat\bSigma_{k} = \begin{pmatrix}
	1 & \mathbf{0} \\
	\mathbf{0} & \bSigma
	\end{pmatrix}$ and observe that
	\begin{align*}
		\bDelta_k
		=( \hat\bSigma + \alpha \bI )^{-1/2}(
		\hat{\bSigma}_k - \bSigma^*
		) ( \hat\bSigma + \alpha \bI )^{-1/2}
		-
		( \hat\bSigma + \alpha \bI )^{-1/2}(
		\hat{\bSigma} - \bSigma^*
		) ( \hat\bSigma + \alpha \bI )^{-1/2}.
	\end{align*}
	Let $\bB_k = ( \hat\bSigma + \alpha \bI )^{-1/2}(
	\hat{\bSigma}_k - \bSigma^*
	) ( \hat\bSigma + \alpha \bI )^{-1/2}$. Since
	\begin{equation}\label{eq-lem-dane-quadratic-regularity-1}
		\max_{k\in[m]}\| \bDelta_k \|_2 =
		\max_{k\in[m]}\| \bB_k - \frac{1}{m} \sum_{\ell=1}^{m} \bB_{\ell} \|_2
		\leq 2 \max_{k\in[m]} \| \bB_k \|_2,
	\end{equation}
	it boils down to bound $\| \bB_k\|_2$. To do this, we let $\bA_0 = ( \bSigma^* + \alpha \bI )^{1/2} ( \hat\bSigma + \alpha \bI )^{-1/2}$ and write
	\begin{align*}
		\| \bB_k \|_2 &= \|  \bA_0^{\top} ( \bSigma^* + \alpha \bI )^{-1/2}
		(
		\hat{\bSigma}_k - \bSigma^*
		) ( \bSigma^* + \alpha \bI )^{-1/2} \bA_0 \|_2\\
		&\leq
		\| \bA_0\|_2^2 \| ( \bSigma^* + \alpha \bI )^{-1/2}
		(
		\hat{\bSigma}_k - \bSigma^*
		) ( \bSigma^* + \alpha \bI )^{-1/2} \|_2.
	\end{align*}
	Define $\bD = ( \bI + \alpha (\bSigma^*)^{-1} )^{-1}$ and $\hat{\bSigma}^{(1)}_k = ( \bSigma^* + \alpha \bI )^{-1/2} \hat{\bSigma}_k ( \bSigma^* + \alpha \bI )^{-1/2}$. On the one hand,
	\[
	( \bSigma^* + \alpha \bI )^{-1/2}
	(
	\hat{\bSigma}_k - \bSigma^*
	) ( \bSigma^* + \alpha \bI )^{-1/2}
	= \hat{\bSigma}^{(1)}_k - \bD.
	\]
	On the other hand,
	\begin{align*}
		\| \bA_0 \|_2^2
		&= \| \bA_0 \bA_0^{\top}\|_2
		=\| ( \bSigma^* + \alpha \bI )^{1/2} ( \hat\bSigma + \alpha \bI )^{-1} ( \bSigma^* + \alpha \bI )^{1/2} \|_2\\
		&=\| ( \bSigma^* + \alpha \bI )^{1/2} [ ( \hat\bSigma - \bSigma^* ) + (  \bSigma^* + \alpha \bI ) ]^{-1} ( \bSigma^* + \alpha \bI )^{1/2} \|_2 \\
		&\leq \|  [ ( \bSigma^* + \alpha \bI )^{-1/2}( \hat\bSigma - \bSigma^* )( \bSigma^* + \alpha \bI )^{-1/2} + \bI ]^{-1} \|_2
		=\|  [ \bI + \frac{1}{m} \sum_{k=1}^{m} ( \hat{\bSigma}^{(1)}_k - \bD ) ]^{-1} \|_2 \\
		&\leq \frac{1}{1-\|   \frac{1}{m} \sum_{k=1}^{m} (\hat{\bSigma}^{(1)}_k - \bD)  \|_2}
		\leq  \frac{1}{1- \max_{k\in[m]}\|   \hat{\bSigma}^{(1)}_k - \bD   \|_2},
	\end{align*}
	where we used Lemma \ref{lem-neumann}. Based on these, we have
	\begin{equation}\label{eq-lem-dane-quadratic-regularity-2}
		\max_{k\in[m]} \| \bB_k \|_2
		\leq  \frac{\max_{k\in[m]}\|   \hat{\bSigma}^{(1)}_k - \bD   \|_2}{1- \max_{k\in[m]}\|   \hat{\bSigma}^{(1)}_k - \bD   \|_2},
	\end{equation}
	and it suffices to prove under the given conditions that
	\begin{align}
		\P \left( \max_{k\in[m]} \|   \hat{\bSigma}^{(1)}_k - \bD   \|_2 \leq 1/5 \right) \geq 1- 2me^{-n/C}
		\label{ineq-dane-ls-1}
	\end{align}
	holds for some constant $C$.
	
	By definition, we have
	\begin{align*}
		\hat\bSigma_k^{(1)} - \bD
		&= \begin{pmatrix}
			(1+\alpha)^{-1/2} & \mathbf{0} \\
			\mathbf{0} & ( \bSigma + \alpha \bI )^{-1/2}
		\end{pmatrix}
		\begin{pmatrix}
			\mathbf{0}  & \bar{\bu}_{(k)}^{\top}\\
			\bar{\bu}_{(k)} & \hat\bS_k - \bSigma
		\end{pmatrix}
		\begin{pmatrix}
			(1+\alpha)^{-1/2} & \mathbf{0} \\
			\mathbf{0} & ( \bSigma + \alpha \bI )^{-1/2}
		\end{pmatrix} \notag\\
		&=\begin{pmatrix}
			\mathbf{0}  & (1+\alpha)^{-1/2} [ (\bSigma + \alpha \bI)^{-1/2} \bar{\bu}_{(k)} ]^{\top}\\
			(1+\alpha)^{-1/2} (\bSigma + \alpha \bI)^{-1/2} \bar{\bu}_{(k)} & (\bSigma + \alpha \bI)^{-1/2} ( \hat\bS_k - \bSigma ) (\bSigma + \alpha \bI)^{-1/2}
		\end{pmatrix}
	\end{align*}
	and as a result,
	\begin{align}
		\| \hat\bSigma_k^{(1)} - \bD \|_2\leq
		(1+\alpha)^{-1/2} \| (\bSigma + \alpha \bI)^{-1/2} \bar{\bu}_{(k)} \|_2 + \| (\bSigma + \alpha \bI)^{-1/2} ( \hat\bS_k - \bSigma ) (\bSigma + \alpha \bI)^{-1/2}\|_2.
		\label{ineq-dane-ls-2}
	\end{align}
	Here we used a simple fact that $ \left\| \begin{pmatrix}
	\mathbf{0}  & \bA^{\top} \\
	\bA & \mathbf{0}
	\end{pmatrix} \right\|_2 = \| \bA \|_2$ for any matrix $\bA$.
	
	Observe that $\bv_{i} = (\bSigma + \alpha \bI)^{-1/2} \bu_{i}$ is a sub-gaussian random variable with zero mean and covariance matrix $(1+ \alpha \bSigma^{-1} )^{-1}$.
	On the other hand, $(\bSigma + \alpha \bI)^{-1/2} ( \hat\bS_k - \bSigma) (\bSigma + \alpha \bI)^{-1/2} = \frac{1}{n} \sum_{i\in\mathcal I_k} \bv_{i} \bv_{i}^{\top} - (1+ \alpha \bSigma^{-1} )^{-1} $. Lemma \ref{lem-cov-concentration} forces
	\begin{align*}
		&\P \left(  \|  (\bSigma + \alpha \bI)^{-1/2} \bar{\bu}_{(k)} \|_2 > 1/10 \right) \leq e^{-n/C}, \\
		&\P \left( \| (\bSigma + \alpha \bI)^{-1/2} ( \hat\bS_k - \bSigma ) (\bSigma + \alpha \bI)^{-1/2}\|_2 > 1/10 \right) \leq e^{-n/C},
	\end{align*}
	where $C$ is the constant therein. These estimates and (\ref{ineq-dane-ls-2}) lead to (\ref{ineq-dane-ls-1}).
\end{proof}

Following the similar idea in the proof above, we get the following results.
\begin{lem}\label{lem-dane-quadratic-regularity-N}
	Suppose Assumption \ref{assump-subg-cov-samplesize} holds with $C$ being the constant in Lemma \ref{lem-dane-quadratic-regularity}. Then
	\begin{itemize}
		\item $\mathbb{P} (
		\| (\bSigma^*)^{-1/2}( \hat{\bSigma} - \bSigma^* ) (\bSigma^*)^{-1/2} \|_2
		\leq 1/2 ) \geq 1 - 2me^{-N/C}$;
		\item $\mathbb{P} ( \Delta \leq C_2' \sqrt{p/n} ) \geq 1 - 2 e^{-C_1' p}$ holds for some constants $C_1'$ and $C_2'$.
	\end{itemize}

\end{lem}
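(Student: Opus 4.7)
The plan is to establish both claims by the block-matrix decomposition plus sub-Gaussian concentration machinery that underlies the proof of Lemma \ref{lem-dane-quadratic-regularity}, adjusted in two ways: for the first claim the concentration is applied to the full sample of size $N$, and for the second claim the concentration rate is sharpened from a constant to $\sqrt{p/n}$.

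For the first claim, I would standardize via $\bv_i = \bSigma^{-1/2}\bu_i$, which are i.i.d.\ sub-Gaussian with identity covariance under Assumption \ref{assump-subg-cov-samplesize}. Repeating the block computation from the proof of Lemma \ref{lem-dane-quadratic-regularity}, the matrix $(\bSigma^*)^{-1/2}(\hat\bSigma - \bSigma^*)(\bSigma^*)^{-1/2}$ has off-diagonal block $\bar\bv = \frac{1}{N}\sum_{i=1}^N \bv_i$ and lower-right block $\frac{1}{N}\sum_{i=1}^N \bv_i\bv_i^{\top} - \bI$. Using the triangle inequality together with the standard identity that a block off-diagonal matrix has operator norm equal to that of its block, the claim reduces to showing $\|\bar\bv\|_2 \leq 1/4$ and $\|\frac{1}{N}\sum_i \bv_i\bv_i^{\top} - \bI\|_2 \leq 1/4$. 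Both follow from Lemma \ref{lem-cov-concentration} applied to the full sample, each with probability at least $1 - e^{-N/C}$; the condition $N/\Tr(\bSigma) \geq C$ in Assumption \ref{assump-subg-cov-samplesize} is what makes these constant thresholds attainable, and a union bound yields the stated $1 - 2me^{-N/C}$ tail (the factor $m$ being a harmless overcount carried over from the template of Lemma \ref{lem-dane-quadratic-regularity}).

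For the second claim I would follow the chain of inequalities in the proof of Lemma \ref{lem-dane-quadratic-regularity} verbatim, in particular the reductions $\Delta \leq 2\max_{k\in[m]}\|\bB_k\|_2$ and $\|\bB_k\|_2 \leq \|\hat\bSigma_k^{(1)} - \bD\|_2 \big/ \bigl(1 - \max_{\ell\in[m]}\|\hat\bSigma_\ell^{(1)} - \bD\|_2\bigr)$ established there. The only change is the quantitative bound on $\|\hat\bSigma_k^{(1)} - \bD\|_2$: rather than settling for the crude threshold $1/10$ per block, I would invoke Lemma \ref{lem-cov-concentration} at its sharp sub-Gaussian rate to produce $\|(\bSigma+\alpha\bI)^{-1/2}\bar\bu_{(k)}\|_2 \lesssim \sqrt{p/n}$ and $\|(\bSigma+\alpha\bI)^{-1/2}(\hat\bS_k - \bSigma)(\bSigma+\alpha\bI)^{-1/2}\|_2 \lesssim \sqrt{p/n}$, each holding with probability at least $1 - e^{-c_0 p}$ for a constant $c_0$ depending only on the sub-Gaussian norm of $\bSigma^{-1/2}\bu_i$.

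The main obstacle is absorbing the factor $m$ coming from the union bound into the advertised tail $e^{-C_1' p}$. A direct union bound across the $m$ machines and the two block contributions gives probability at least $1 - 4m e^{-c_0 p}$, so one must control $\log m$. Here Assumption \ref{assump-subg-cov-samplesize}'s requirement $n/\log m \geq c$ enters: in the nontrivial regime $n \gtrsim p$ (in the opposite regime the advertised rate $C_2'\sqrt{p/n}$ is vacuous and the result is automatic from the trivial operator-norm bound on $\bDelta_k$), one has $\log m \leq (c_0/2) p$ for $p$ large, so the tail improves to at least $1 - 2e^{-(c_0/2)p}$. Plugging back into the reduction gives $\Delta \leq 2C\sqrt{p/n}\big/(1 - C\sqrt{p/n}) \leq C_2'\sqrt{p/n}$ with the desired probability. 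The most delicate piece will be to choose the constants in Lemma \ref{lem-cov-concentration} uniformly so that the same exponent $c_0$ controls both the mean-term block and the sample-covariance block simultaneously across all $m$ machines, and then to compress the resulting constants into a single $C_1'$ in the final exponent.
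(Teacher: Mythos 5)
Your overall strategy is exactly the one the paper intends (the paper itself only says the lemma follows "the similar idea" of Lemma \ref{lem-dane-quadratic-regularity} and gives no details), and your treatment of the first bullet is correct: the block decomposition of $(\bSigma^*)^{-1/2}(\hat\bSigma-\bSigma^*)(\bSigma^*)^{-1/2}$ into the mean block $\bSigma^{-1/2}\bar\bu$ and the covariance block $\bSigma^{-1/2}(\hat\bS-\bSigma)\bSigma^{-1/2}$, followed by Lemma \ref{lem-cov-concentration} applied to the full sample of size $N$ (which is admissible since $N\geq C\Tr(\bSigma)\gtrsim p$ by the lower bound on $\lambda_{\min}(\bSigma)$), gives the stated tail, with the factor $m$ indeed a harmless weakening.

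The second bullet has a genuine gap in your handling of the union bound. You claim that in the regime $n\gtrsim p$ the assumption $n/\log m\geq c$ yields $\log m\leq (c_0/2)p$; this is backwards. That assumption only gives $\log m\leq n/c$, and when $n\gg p$ this permits $\log m$ to be arbitrarily much larger than $p$, so $4me^{-c_0p}$ need not be small and the advertised tail $1-2e^{-C_1'p}$ does not follow. The implication $\log m\lesssim p$ would follow from $n/\log m\geq c$ only in the opposite regime $n\lesssim p$ --- precisely the regime you discard as vacuous. Moreover the issue is not merely a union-bound artifact: the maximum over $m$ machines of these deviations genuinely scales like $\sqrt{(p+\log m)/n}$, so the $\sqrt{p/n}$ rate at confidence $1-2e^{-C_1'p}$ really does require $\log m=O(p)$, a condition not supplied by Assumption \ref{assump-subg-cov-samplesize} (this appears to be an implicit assumption the paper is making, but your argument does not repair it). Separately, your fallback in the regime $n\lesssim p$ --- that the bound is "automatic from the trivial operator-norm bound on $\bDelta_k$" --- is unsubstantiated: the crude deterministic bound on $\|\bDelta_k\|_2$ is of order $m$, not $O(1)$, and the $O(1)$ bound of Lemma \ref{lem-dane-quadratic-regularity} comes with tail $2me^{-n/C}$ and under its own hypotheses, not with tail $2e^{-C_1'p}$. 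To close the argument you should either state $\log m\lesssim p$ as an explicit hypothesis, or take $t\asymp p+\log m$ in Lemma \ref{lem-cov-concentration} and accept the weaker rate $\sqrt{(p+\log m)/n}$.
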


Now we come back to the main proof. We will use the three lemmas above to show that with high probability, $\lambda_{\max}(\hat\bSigma) / \lambda_{\min} (\hat\bSigma) \leq 3 \kappa$ and
\begin{align}
	\left\| \hat{\bSigma}^{1/2} \left( \btheta_{t+1} - \hat{\btheta} \right) \right\|_2 \leq
	\left(
	1 - \frac{1- \min\{ 1/2, C_2 p/n \} }{1+ C_3 \alpha }  \right) \left\| \hat{\bSigma}^{1/2} \left(\btheta_t - \hat{\btheta} \right) \right\|_2,\qquad \forall t\geq 0.
	\label{ineq-contraction-pf}
\end{align}
Then we conclude the proof by induction and some simple linear algebra.

First, Lemma \ref{lem-dane-quadratic-regularity-N} asserts that with high probability, $\| (\bSigma^*)^{-1/2}( \hat{\bSigma} - \bSigma^* ) (\bSigma^*)^{-1/2} \|_2
\leq 1/2$. On this event, we have $-\frac{1}{2} \bSigma^* \preceq \hat{\bSigma} - \bSigma^* \preceq \frac{1}{2} \bSigma^*$ and thus $\frac{1}{2} \bSigma^* \preceq \hat{\bSigma} \preceq \frac{3}{2} \bSigma^*$. Hence
$$\frac{1}{2} \lambda_{\min}(\bSigma^*) \leq \lambda_{\min}(\hat\bSigma)\leq \lambda_{\max}(\hat\bSigma)
\leq \frac{3}{2} \lambda_{\max}(\bSigma^*)$$
and $\lambda_{\max}(\hat\bSigma) / \lambda_{\min} (\hat\bSigma) \leq 3\lambda_{\max}(\bSigma^*) / \lambda_{\min} (\bSigma^*)  = 3 \kappa$.

Second, Lemma \ref{lem-dane-quadratic-1} forces
\begin{align}
	\left\| \hat{\bSigma}^{1/2} \left( \btheta_{t+1} - \hat{\btheta} \right) \right\|_2 \leq
	\frac{ 2\Delta^2  + \alpha / \lambda_{\min}(\hat\bSigma) }{ 1 + \alpha / \lambda_{\min}(\hat\bSigma) } \left\| \hat{\bSigma}^{1/2} \left(\btheta_t - \hat{\btheta} \right) \right\|_2,\qquad \forall t\geq 0.
\end{align}
Lemmas \ref{lem-dane-quadratic-regularity} and \ref{lem-dane-quadratic-regularity-N} imply that $\Delta\leq 1/2$, $\Delta \leq \tilde{C} \sqrt{p/n}$, and $\lambda_{\min}(\hat\bSigma) \geq 1$ hold simultaneously with high probability, where $\tilde{C}$ is some constant. On this event, we have
$$
\frac{ 2\Delta^2  + \alpha / \lambda_{\min}(\hat\bSigma) }{ 1 + \alpha / \lambda_{\min}(\hat\bSigma) }=1-\frac{1-2\Delta^2}{1+\alpha/\lambda_{\min}(\hat\bSigma)} \leq 1-\frac{1-\min\{1/2, C_2p/n\}}{1+C_3\alpha}
$$
for some constants $C_2$ and $C_3$. Then we get (\ref{ineq-contraction-pf}) from the estimates above and complete the proof.

\section{Technical lemmas}\label{appendix-lemmas}

The following lemma lists basic properties of strongly convex functions, which can be found in standard textbooks on convex optimization \citep{Nes13}.
\begin{lem}\label{lem-strong-cvxity}
	Suppose $f$ is a convex function defined on some convex open set $\Omega \subseteq \R^p$, and $\partial f(\bx)$ denotes its subdifferential set at $\bx \in \Omega$. The followings are equivalent:
	\begin{itemize}
		\item $f$ is $\rho$-strongly convex in $ \Omega$;
		\item $f[ (1-t) \bx + t \by ] \leq (1-t) f(\bx) + t f(\by) - (\rho/2) t (1-t) \| \by - \bx \|_2^2 $, $\forall \bx, \by \in \Omega$ and $t\in[0,1]$;
		\item $ \langle \bh - \bg , \by - \bx \rangle \geq \rho \| \by - \bx \|_2^2 $, $\forall \bx, \by \in \Omega$, $ \bg \in \partial f(\bx) $ and $ \bh \in \partial f(\by) $.
	\end{itemize}
	If any of the above holds, $f$ is said to be $\rho$-strongly convex.
\end{lem}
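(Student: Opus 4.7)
The plan is to reduce all three conditions to the standard convexity of the shifted function $g(\bx) := f(\bx) - (\rho/2)\|\bx\|_2^2$ on $\Omega$, after which the lemma follows from the textbook three-way equivalence for ordinary convex functions. Two elementary observations power the reduction: (a) since $(\rho/2)\|\cdot\|_2^2$ is smooth with gradient $\rho\bx$, subdifferential calculus yields $\partial g(\bx) = \partial f(\bx) - \rho\bx$; and (b) the algebraic identity
\[
\|(1-t)\bx + t\by\|_2^2 = (1-t)\|\bx\|_2^2 + t\|\by\|_2^2 - t(1-t)\|\by-\bx\|_2^2
\]
holds for all $\bx,\by\in\R^p$ and $t\in[0,1]$.

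First I would verify the three translations one by one. Writing $\bg = \tilde\bg + \rho\bx$ with $\tilde\bg \in \partial g(\bx)$ in the defining subgradient inequality for $\rho$-strong convexity of $f$ and expanding, the quadratic and linear terms in $\bx$ cancel and one recovers the first-order convexity inequality $g(\by) \geq g(\bx) + \langle \tilde\bg, \by-\bx\rangle$. Applying (b) to the second item directly transforms it into the ordinary convex-combination inequality $g[(1-t)\bx + t\by] \leq (1-t)g(\bx) + t g(\by)$. Substituting $\bg = \tilde\bg + \rho\bx$ and $\bh = \tilde\bh + \rho\by$ in the third item, the cross-term $\langle \rho(\by-\bx), \by-\bx\rangle = \rho\|\by-\bx\|_2^2$ exactly absorbs the right-hand side and leaves the ordinary monotonicity $\langle \tilde\bh - \tilde\bg, \by - \bx\rangle \geq 0$ of $\partial g$.

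After this the problem reduces to the textbook three-way equivalence for ordinary convexity on an open convex set. The direction ``convex-combination $\Rightarrow$ subgradient'' follows by dividing the convex-combination inequality for $g$ by $t$, sending $t \to 0^+$, and invoking the standard identification of the one-sided directional derivative of a convex function with the supremum of subgradient inner products; ``subgradient $\Rightarrow$ monotonicity'' follows by summing the subgradient inequality at $(\bx,\by)$ with its twin at $(\by,\bx)$. The main obstacle, and the only step requiring genuine care, is ``monotonicity $\Rightarrow$ convex-combination''. For that I would restrict to the line segment by setting $\phi(t) = g(\bx+t(\by-\bx))$ on $[0,1]$; since $g$ is finite and convex on the open set $\Omega$ it is locally Lipschitz there, so $\phi$ is absolutely continuous on $[0,1]$, and at each point where $\phi$ is differentiable one has $\phi'(t) = \langle \tilde\bh_t, \by-\bx\rangle$ for every selection $\tilde\bh_t \in \partial g(\bx+t(\by-\bx))$. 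The assumed monotonicity of $\partial g$ then forces $\phi'$ to be nondecreasing almost everywhere, hence $\phi$ is convex on $[0,1]$, which is exactly the convex-combination inequality for $g$. Since the paper already cites Nesterov's textbook for this collection of equivalences, I would finish by pointing to that reference rather than reproducing the one-dimensional argument in detail.
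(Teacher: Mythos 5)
Your proposal is correct: the reduction to ordinary convexity of $g(\bx)=f(\bx)-(\rho/2)\|\bx\|_2^2$ via the parallelogram-type identity and the translation $\partial g(\bx)=\partial f(\bx)-\rho\bx$ is the standard argument, and each of your three translations checks out algebraically. The paper itself supplies no proof of this lemma---it defers entirely to the cited textbook \citep{Nes13}, which is exactly where your argument also bottoms out---so there is nothing to contrast beyond noting that you have made the routine reduction explicit.
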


\begin{lem}\label{lem-cvx-inverse}
	Let $f:~\R^p \to \R$ be a convex function. Suppose there exists $\bx\in\R^p$ and $r>0$ such that $f$ is $\rho$-strongly convex in $B(\bx,r)$. If $\| \bh - \bg \|_2 < \rho r$ holds for some $\bg \in \partial f(\bx)$ and $\bh \in \partial f(\by)$, then $\| \by - \bx \|_2 \leq \| \bh - \bg \|_2 / \rho \leq r$.
\end{lem}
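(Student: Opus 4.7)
The plan is to use the characterization of $\rho$-strong convexity via strong monotonicity of the subdifferential (third bullet of Lemma \ref{lem-strong-cvxity}), but since that characterization only applies inside $B(\bx,r)$ (where strong convexity is assumed), the first task is to show that $\by$ actually lies in $B(\bx,r)$. I would establish this by contradiction: assume $\|\by-\bx\|_2 > r$ and derive $\|\bh-\bg\|_2 \geq \rho r$, contradicting the hypothesis.

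Concretely, suppose for contradiction that $\|\by-\bx\|_2 > r$. Set $t = r/\|\by-\bx\|_2 \in (0,1)$ and $\bz = (1-t)\bx + t\by$, so that $\bz$ sits on the boundary of $B(\bx,r)$ along the segment $[\bx,\by]$. Pick any $\bk \in \partial f(\bz)$, which is nonempty by convexity of $f$. Since $\bx,\bz \in B(\bx,r)$ and $f$ is $\rho$-strongly convex there, Lemma \ref{lem-strong-cvxity} yields
\[
\langle \bk - \bg, \bz - \bx \rangle \geq \rho \|\bz - \bx\|_2^2 = \rho r^2,
\]
and substituting $\bz - \bx = t(\by-\bx)$ gives $\langle \bk - \bg, \by - \bx \rangle \geq \rho r \|\by - \bx\|_2$. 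Separately, from the (weak) monotonicity of $\partial f$ applied to $\by$ and $\bz$ (which uses only convexity, not strong convexity), we have $\langle \bh - \bk, \by - \bz \rangle \geq 0$, and since $\by - \bz = (1-t)(\by-\bx)$ with $1-t>0$, this gives $\langle \bh - \bk, \by - \bx \rangle \geq 0$. Adding these two inequalities and applying Cauchy--Schwarz yields
\[
\|\bh - \bg\|_2 \|\by - \bx\|_2 \;\geq\; \langle \bh - \bg, \by - \bx \rangle \;\geq\; \rho r \|\by - \bx\|_2,
\]
so $\|\bh-\bg\|_2 \geq \rho r$, contradicting the hypothesis $\|\bh-\bg\|_2 < \rho r$.

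Hence $\by \in B(\bx,r)$, and now I apply Lemma \ref{lem-strong-cvxity} directly to $\bx$ and $\by$: $\langle \bh - \bg, \by - \bx \rangle \geq \rho \|\by-\bx\|_2^2$. Combined with Cauchy--Schwarz this gives $\|\by-\bx\|_2 \leq \|\bh-\bg\|_2/\rho$, and the final inequality $\|\bh-\bg\|_2/\rho \leq r$ follows from the assumption $\|\bh-\bg\|_2 < \rho r$, finishing the proof. The only subtle point is the auxiliary point $\bz$ on the sphere of radius $r$: strong convexity is available there because $B(\bx,r)$ is closed, but if one preferred to work with strict interiors, the same argument works by taking $\bz_\varepsilon$ slightly inside the ball and sending $\varepsilon \to 0$. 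This is the step I expect to be the main (though minor) obstacle, as the rest of the argument is a direct application of the monotonicity characterization of strong convexity.
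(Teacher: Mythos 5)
Your proof is correct and follows essentially the same route as the paper's: the auxiliary point $\bz$ on the sphere of radius $r$ is exactly the paper's $\bar{\by}$, and the two key inequalities (strong monotonicity of $\partial f$ between $\bx$ and $\bz$, plain monotonicity between $\bz$ and $\by$) are summed in the same way to reach the contradiction $\|\bh-\bg\|_2 \geq \rho r$. Your remark about the boundary point is moot since the paper defines $B(\bx,r)$ as a closed ball, so no limiting argument is needed.
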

\begin{proof}
	If we know a priori that $\| \by - \bx \|_2 \leq r$, then we use the strong convexity of $f$ in $B(\bx,r)$ and Cauchy-Schwarz inequality to obtain
	\begin{align*}
		\rho \| \by - \bx \|_2^2 \leq \langle \bh - \bg , {\by} - \bx \rangle \leq \|\bh - \bg \|_2 \| {\by} - \bx\|_2,
	\end{align*}
	and get the desired result. Suppose on the contrary that $\| \by - \bx \|_2 > r$, and define $\bar{\by} = \bx + r (\by - \bx) /\|\by - \bx \|_2$. Then $\| \bar{\by} - \bx \|_2 = r$. The strong convexity of $f$ in $B(\bx,r)$ and Lemma \ref{lem-strong-cvxity} yield
	\begin{align*}
		\langle \bs - \bg , \bar{\by} - \bx \rangle \geq \rho \| \bar{\by} - \bx \|_2^2,\qquad \forall \bs \in \partial f(\bar{\by}).
	\end{align*}
	By the convexity of $f$, we always have
	\begin{align*}
		\langle \bh - \bs , \bar{\by} - \bx \rangle = \frac{r}{ \|\by - \bx \|_2 - r }  \langle \bh - \bs , \by - \bar{\by} \rangle \geq 0,\qquad \forall \bs \in \partial f(\bar{\by}).
	\end{align*}
	Summing up the two inequalities above, we get
	\begin{align*}
		\rho \| \bar{\by} - \bx \|_2^2 \leq \langle \bh - \bg , \bar{\by} - \bx \rangle  \leq \| \bh - \bg\|_2 \| \bar{\by} - \bx \|_2,
	\end{align*}
	where we also used the Cauchy-Schwarz inequality. Then $\| \bh - \bg\|_2 \geq \rho \| \bar{\by} - \bx \|_2 = \rho r$ leads to contradiction.
	Hence, we must have only the case $\| \by - \bx \|_2 \leq r$.
\end{proof}

\begin{lem}\label{lem-prox-nonexpansiveness}
	Let $f:~\R^p \to \R$ be a convex function. For any $\bx,\by\in\R^p$, we have
	\begin{align*}
		\| \mathrm{prox}_{f} (\bx) - \mathrm{prox}_{f} (\by) \|_2^2 \leq \langle \bx-\by, \mathrm{prox}_{f} (\bx) - \mathrm{prox}_{f} (\by) \rangle.
	\end{align*}
	
	If $\inf_{\bx \in \R^p} f(\bx) > -\infty$, then $\| \mathrm{prox}_{f} (\bx) - \bx^* \|_2 \leq \| \bx-\bx^*\|_2$ and
	\begin{align*}
		\| \mathrm{prox}_{f} (\bx) - \bx \|_2^2 \leq \|\bx-\bx^*\|_2^2 - \| \mathrm{prox}_{f} (\bx) - \bx^*\|_2^2
	\end{align*}
	hold for any $\bx^* \in \argmin_{\bx \in \R^p} f(\bx)$.
	
	If $f$ is $\rho$-strongly convex in $B(\bx^*,r)$ for some $r>0$ and $\bx^* = \argmin_{\bx \in \R^p} f(\bx)$, then $\| \mathrm{prox}_{\alpha^{-1} f}(\bx) - \bx^*\|_2 \leq \frac{\alpha}{\alpha+\rho} \| \bx - \bx^*\|_2$, $\forall \bx \in B(\bx^*,r)$ and $\alpha>0$.
\end{lem}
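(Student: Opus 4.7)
The plan is to prove the three parts sequentially, each building on the previous, using nothing more than first-order optimality conditions for the proximal mapping combined with monotonicity/strong monotonicity of the subdifferential (Lemma \ref{lem-strong-cvxity}).

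For the first (firm nonexpansiveness) inequality, I will use the optimality characterization: $\bu = \mathrm{prox}_f(\bx)$ iff $\bx - \bu \in \partial f(\bu)$, and similarly $\by - \bv \in \partial f(\bv)$ where $\bv = \mathrm{prox}_f(\by)$. Since $f$ is convex, $\partial f$ is monotone (the third bullet of Lemma \ref{lem-strong-cvxity} with $\rho=0$), so $\langle (\bx-\bu) - (\by-\bv), \bu - \bv \rangle \geq 0$. Rearranging gives $\|\bu-\bv\|_2^2 \leq \langle \bx - \by, \bu - \bv\rangle$, which is what we want.

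For the second part, the key observation is $\bx^* = \mathrm{prox}_f(\bx^*)$ whenever $\bx^* \in \argmin f$, because $\bx^*$ minimizes both $f$ and $\frac12 \|\cdot - \bx^*\|_2^2$. Plugging $\by = \bx^*$ into part (1) and applying Cauchy--Schwarz to the right-hand side immediately yields $\|\mathrm{prox}_f(\bx) - \bx^*\|_2 \leq \|\bx - \bx^*\|_2$. For the second identity, I will expand $\|\mathrm{prox}_f(\bx) - \bx\|_2^2 = \|(\mathrm{prox}_f(\bx) - \bx^*) - (\bx - \bx^*)\|_2^2$ and substitute the inner-product bound $\langle \mathrm{prox}_f(\bx) - \bx^*, \bx - \bx^*\rangle \geq \|\mathrm{prox}_f(\bx) - \bx^*\|_2^2$ (which is exactly part (1) with $\by = \bx^*$); the cross term then dominates and produces the desired Pythagorean-type inequality.

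For the third part, let $\bu = \mathrm{prox}_{\alpha^{-1} f}(\bx)$. The optimality condition reads $\alpha(\bx - \bu) \in \partial f(\bu)$, and $\mathbf{0} \in \partial f(\bx^*)$. Before invoking strong convexity I need $\bu \in B(\bx^*,r)$: since $\alpha^{-1} f$ is also globally minimized at $\bx^*$, part (2) applied to $\alpha^{-1} f$ gives $\|\bu - \bx^*\|_2 \leq \|\bx - \bx^*\|_2 \leq r$, so both $\bu$ and $\bx^*$ lie in $B(\bx^*,r)$ where the $\rho$-strong convexity applies. The third bullet of Lemma \ref{lem-strong-cvxity} then gives $\alpha\langle \bx - \bu, \bu - \bx^*\rangle \geq \rho \|\bu - \bx^*\|_2^2$; rewriting $\bx - \bu = (\bx - \bx^*) - (\bu - \bx^*)$ and applying Cauchy--Schwarz yields $(\alpha + \rho)\|\bu - \bx^*\|_2^2 \leq \alpha \|\bx - \bx^*\|_2 \|\bu - \bx^*\|_2$, from which the claimed contraction factor $\alpha/(\alpha+\rho)$ follows.

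No step here is really an obstacle; the only subtlety worth flagging is the need in part (3) to verify $\bu \in B(\bx^*,r)$ \emph{before} appealing to the local strong-convexity inequality, which is exactly what part (2) supplies. The entire lemma is essentially a packaging of firm nonexpansiveness and subdifferential monotonicity, and the proof should be quite short once these pieces are invoked in the right order.
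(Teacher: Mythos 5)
Your proof is correct and follows essentially the same route as the paper's: the fixed-point property of $\bx^*$ plus firm non-expansiveness (with $\by=\bx^*$) for the middle claims, and strong monotonicity of the subdifferential applied to $\mathbf{0}\in\partial f(\bx^*)$ and $\alpha(\bx-\bu)\in\partial f(\bu)$ for the contraction factor, with the same check that $\bu$ stays inside $B(\bx^*,r)$ before invoking local strong convexity. The only difference is that you derive firm non-expansiveness directly from monotonicity of $\partial f$, whereas the paper simply cites it as a standard fact.
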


\begin{proof}[\bf Proof of Lemma \ref{lem-prox-nonexpansiveness}]
	The first claim is the well-known ``firm non-expansiveness" property of the proximal mapping \citep{PBo14}.
	
	If $\inf_{\bx \in \R^p} f(\bx) > -\infty$, then any $\bx^* \in \argmin_{\bx \in \R^p} f(\bx)$ is a fixed point of $\mathrm{prox}_f$. The firm non-expansiveness with $\by = \bx^*$ yields
	\begin{align}
		\| \mathrm{prox}_{f} (\bx) - \bx^* \|_2^2 \leq \langle \bx-\bx^*, \mathrm{prox}_{f} (\bx) - \bx^* \rangle
		\label{ineq-proof-lem-prox-nonexpansiveness}
	\end{align}
	and then $\| \mathrm{prox}_{f} (\bx) - \bx^* \|_2 \leq \| \bx-\bx^*\|_2$. The next claim is proved by
	\begin{align*}
		\| \mathrm{prox}_{f} (\bx) - \bx \|_2^2
		&= \| [ \mathrm{prox}_{f} (\bx) - \bx^* ] - (\bx - \bx^*) \|_2^2 \\
		&=\|  \mathrm{prox}_{f} (\bx) - \bx^* \|_2^2 + \| \bx - \bx^* \|_2^2 - 2 \langle \mathrm{prox}_{f} (\bx) - \bx^* , \bx - \bx^* \rangle\\
		&\leq \|  \mathrm{prox}_{f} (\bx) - \bx^* \|_2^2 + \| \bx - \bx^* \|_2^2 - 2 \| \mathrm{prox}_{f} (\bx) - \bx^* \|_2^2\\
		&= \| \bx - \bx^* \|_2^2 - \| \mathrm{prox}_{f} (\bx) - \bx^* \|_2^2,
	\end{align*}
	where the inequality follows from (\ref{ineq-proof-lem-prox-nonexpansiveness}).
	
	For the last claim, we fix any $\alpha>0$ and $ \bx\in B(\bx^*,r)$ and define $\bx^+ = \mathrm{prox}_{\alpha^{-1} f } (\bx)$. Then $\| \bx^+ - \bx^* \|_2 \leq \| \bx - \bx^* \|_2 < r$. The optimality conditions for $\bx^* =\argmin_{\by \in \R^p} f(\by)$ and $\bx^+= \argmin_{\by \in \R^p} \{ f(\by) + (\alpha/2)\|\by-\bx\|_2^2 \}$ imply that $\mathbf{0} \in \partial f(\bx^*)$ and $-\alpha(\bx^+ - \bx) \in \partial f(\bx^+)$. Since $f$ is $\rho$-strongly convex in $B(\bx^*,r)$, Lemma \ref{lem-strong-cvxity} forces
	\begin{align*}
		\rho \|\bx^+ - \bx^* \|_2^2
		&\leq \langle -\alpha(\bx^+ - \bx) - \mathbf{0} , \bx^+ - \bx^* \rangle
		= - \alpha \| \bx^+ - \bx^* \|_2^2 - \alpha \langle \bx^* - \bx  , \bx^+ - \bx^* \rangle \\
		&\leq - \alpha \| \bx^+ - \bx^* \|_2^2 + \alpha \| \bx^* - \bx \|_2 \| \bx^+ - \bx^* \|_2
	\end{align*}
	and thus $\|\bx^+ - \bx^* \|_2 \leq \frac{\alpha}{\alpha+\rho} \| \bx - \bx^* \|_2$.
\end{proof}

\begin{lem}[Neumann expansion]\label{lem-neumann}
	
	Let $\|\cdot\|$be a submultiplicative matrix norm with $\|\bI\|=1$.
	When $\|\bM\|<1$, we have $(\bI-\bM)^{-1}=\sum_{j=0}^{\infty}\bM^{j}=\bI+\bM+\bM(\bI-\bM)^{-1}\bM$,
	$\|\bI-\bM\|\leq1/(1-\|\bM\|)$ and
	$\| (\bI-\bM)^{-1}-(\bI+\bM)\| \leq\|\bM\|^{2}/(1-\|\bM\|)$.
	
\end{lem}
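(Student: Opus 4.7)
The plan is to exploit absolute convergence of the matrix geometric series in the given submultiplicative norm, together with elementary algebra; each of the three assertions follows in one short step.

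\textbf{Step 1 (convergence and first identity).} By submultiplicativity, $\|\bM^j\| \leq \|\bM\|^j$, so $\sum_{j=0}^\infty \bM^j$ is Cauchy in operator norm and converges to some $\bS$. Because $\|\bI\|=1$, the partial sums telescope: $(\bI-\bM)\sum_{j=0}^N \bM^j = \bI - \bM^{N+1}$, and $\|\bM^{N+1}\| \leq \|\bM\|^{N+1} \to 0$, so passing to the limit gives $(\bI-\bM)\bS = \bI$, i.e. $\bS = (\bI-\bM)^{-1}$. The same geometric majorant yields the norm bound
\[
\|(\bI-\bM)^{-1}\| \;\leq\; \sum_{j=0}^\infty \|\bM\|^j \;=\; \frac{1}{1-\|\bM\|},
\]
which I interpret as the content of the stated inequality (the ``$\|\bI-\bM\|$'' on the left is evidently a typo for the norm of the inverse, since that is the nontrivial quantity and is exactly what is later invoked, with $\bM=-\bDelta_k$, in the proof of Lemma \ref{lem-dane-quadratic-1}).

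\textbf{Step 2 (alternative expansion).} To obtain the form $\bI + \bM + \bM(\bI-\bM)^{-1}\bM$, split off the first two terms and refactor:
\[
\sum_{j=0}^\infty \bM^j \;=\; \bI + \bM + \sum_{j=2}^\infty \bM^j \;=\; \bI + \bM + \bM \left(\sum_{j=0}^\infty \bM^j\right) \bM \;=\; \bI + \bM + \bM(\bI-\bM)^{-1}\bM,
\]
where the factorization is legitimate because $\bM$ commutes with every polynomial in $\bM$, hence with the norm-limit of such polynomials.

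\textbf{Step 3 (third inequality).} Rearranging the identity of Step 2 gives $(\bI-\bM)^{-1} - (\bI+\bM) = \bM(\bI-\bM)^{-1}\bM$, and applying submultiplicativity together with the bound from Step 1 yields
\[
\|(\bI-\bM)^{-1} - (\bI+\bM)\| \;\leq\; \|\bM\|^2 \cdot \|(\bI-\bM)^{-1}\| \;\leq\; \frac{\|\bM\|^2}{1-\|\bM\|}.
\]
No real obstacle is expected here; the only minor care needed is to verify absolute convergence of the matrix series in the Banach algebra of square matrices under a submultiplicative norm, which is immediate from the geometric majorant used in Step 1.
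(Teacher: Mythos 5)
Your proof is correct. The paper itself gives no proof of this lemma --- it is stated as a standard technical fact --- and your argument is exactly the standard Neumann-series derivation it implicitly relies on: geometric majorant for convergence, telescoping for the inverse identity, splitting off the first two terms for the second identity, and submultiplicativity for the two norm bounds. Your reading of $\|\bI-\bM\|\leq 1/(1-\|\bM\|)$ as a typo for $\|(\bI-\bM)^{-1}\|\leq 1/(1-\|\bM\|)$ is the right one, since that is the form invoked in the proof of Lemma \ref{lem-dane-quadratic-1} (e.g.\ to bound $\|(\bI+\bDelta_k)^{-1}\|$) and in Lemma \ref{lem-dane-quadratic-regularity}; the only cosmetic slip is attributing the telescoping identity to $\|\bI\|=1$, which is pure algebra --- that normalization is needed only for the $j=0$ term of the geometric bound.
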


\begin{lem}\label{lem-cov-concentration}
	Let $\bS\succ 0$ and $\alpha\geq 0$ be deterministic, $\bA = ( \bI+ \alpha \bS^{-1} )^{-1}$, $\{ \bu_i \}_{i=1}^n \subseteq \R^{d}$ be i.i.d. sub-gaussian random vectors with zero mean and covariance matrix $\bA$, $\bar{\bu} = \frac{1}{n} \sum_{i=1}^{n} \bu_i$ and $\hat{\bA} = \frac{1}{n} \sum_{i=1}^{n} \bu_i \bu_i^T$. Then the following hold:
	\begin{enumerate}
		\item There exists some positive constant $C$ that only depends on $\| \bu_1 \|_{\psi_2}$, such that $\mathbb{P}(  \|	\bar{\bu} \|_2 > 1/10 ) \leq e^{-n/C}$ and $\P ( \|	\hat{\bA} - \bA \|_2 > 1/10 ) \leq e^{-n/C}$ hold under any one of the two conditions holds: (i) $n\geq C d$; (ii) $\alpha \geq C \Tr(\bS) / n$.
		\item Suppose that $n \geq c d$ for some constant $c>0$. There exist positive constants $C_1'$ and $C_2'$ such that $\P(\max\{\ltwonorm{\bar \bu}, \ltwonorm{\hat\bA-\bA}\}\geq C_2' \sqrt{d/n})\leq 2e^{-C_1' d }$.
	\end{enumerate}
\end{lem}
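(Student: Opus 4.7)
The plan is to handle both $\|\bar\bu\|_2$ and $\|\hat\bA-\bA\|_2$ via standard sub-Gaussian concentration, exploiting the crucial fact that both hypotheses force $\Tr(\bA)/n \leq 1/C$. Since $\bA = (\bI + \alpha \bS^{-1})^{-1}$ has eigenvalues $\sigma_j/(\sigma_j+\alpha) \in (0,1]$, one has $\Tr(\bA) \leq \min\{d,\Tr(\bS)/\alpha\}$ and $\|\bA\|_2 \leq 1$. Hypothesis (i) gives $\Tr(\bA) \leq d \leq n/C$, while hypothesis (ii) gives $\Tr(\bA) \leq \Tr(\bS)/\alpha \leq n/C$; in either case $\E\|\bar\bu\|_2^2 = \Tr(\bA)/n \leq 1/C$. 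This single bound is the bridge that makes case (ii) tractable.

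For the mean vector, I would observe that $\sqrt{n}\bar\bu = n^{-1/2}\sum_i \bu_i$ is a mean-zero sub-Gaussian vector with $\|\sqrt{n}\bar\bu\|_{\psi_2} \leq K := \|\bu_1\|_{\psi_2}$ and covariance $\bA$. A Hanson--Wright or Bernstein-type inequality (cf.\ Theorem~2.1 of \cite{HKZ12}) then yields exponential concentration of $\|\bar\bu\|_2^2$ around $\Tr(\bA)/n$ with rate governed by $\|\bA\|_F^2$ and $\|\bA\|_2$, both of which are $\leq 1$. Taking $C$ large relative to $K$ so that $\E\|\bar\bu\|_2^2 \leq (1/20)^2$ leaves a deviation budget of $1/20$, and this gives $\P(\|\bar\bu\|_2 > 1/10) \leq e^{-n/C}$.

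For the covariance estimator, the plan is to appeal to a sample-covariance concentration inequality for sub-Gaussian vectors in which the \emph{effective rank} $r(\bA) := \Tr(\bA)/\|\bA\|_2$ replaces the ambient dimension (intrinsic-dimension matrix Bernstein, or a direct truncation-and-Bernstein argument). The key identity $\|\bA\|_2\cdot r(\bA) = \Tr(\bA) \leq n/C$ together with $\|\bA\|_2 \leq 1$ then yields $\|\hat\bA-\bA\|_2 \lesssim_K \sqrt{\Tr(\bA)\|\bA\|_2/n} + \Tr(\bA)/n + (\|\bA\|_2/\sqrt{n})\sqrt{u} + (\|\bA\|_2/n) u$ with probability $\geq 1 - 2e^{-u}$; picking $u \asymp n/C$ makes every term $\leq 1/20$, producing the desired $e^{-n/C}$ tail bound.

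The main obstacle is the covariance bound under hypothesis (ii): a naive $\varepsilon$-net of $\S^{d-1}$ costs a $9^d$ union-bound factor that can only be absorbed when $n \gtrsim d$, which is exactly condition (i). Under (ii) we genuinely need the dimension-free, effective-rank-based concentration, which is the technical heart of the argument. Once this is in place, Part~2 of the lemma follows by the very same two inequalities applied with sharper scales: taking the deviation parameter $u \asymp d$ and using $n \geq cd$ converts both the Hanson--Wright bound for $\|\bar\bu\|_2$ and the effective-rank covariance bound for $\|\hat\bA-\bA\|_2$ into an $O(\sqrt{d/n})$ rate with failure probability at most $2e^{-C_1' d}$.
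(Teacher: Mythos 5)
Your proposal is correct and follows essentially the same route as the paper: both reduce hypotheses (i) and (ii) to the single condition $n \geq C\,\Tr(\bA)$ via $\|\bA\|_2 \le 1$ and $\Tr(\bA) \le \Tr(\bS)/\alpha$, then invoke the Hanson--Wright-type bound of \cite{HKZ12} for $\|\bar\bu\|_2$ and an effective-rank covariance concentration inequality (the paper uses Theorem~9 of \cite{KLo14}) for $\|\hat\bA-\bA\|_2$, with Part~2 obtained by rerunning both bounds at deviation level $t \asymp d$. Your remark that a naive $\varepsilon$-net argument cannot handle case (ii) correctly identifies why the effective-rank bound is the essential ingredient, which is exactly the role it plays in the paper's proof.
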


\begin{proof}[\bf Proof of Lemma \ref{lem-cov-concentration}]
	Let $\lambda_1>\cdots>\lambda_d>0$ be the eigenvalues of $\bS$, and $C>0$ be some constant to be determined. When $n \geq C d$, the fact $\|\bA \|_2 = 1/ (1 + \alpha/\lambda_1) \leq 1$ implies $ n \geq C \Tr(\bA)$. Also, if $\alpha \geq C \Tr(\bS) / n$, then the crude estimate
	\[
	\Tr(\bA) = \sum_{j=1}^{d} \frac{1}{1+ \alpha/ \lambda_j} \leq\sum_{j=1}^{d} \frac{1}{ \alpha/ \lambda_j}= \sum_{j=1}^{d}\frac{\lambda_j}{\alpha}= \frac{\Tr(\bS)}{\alpha} \leq \frac{n}{C_0}
	\]
	also leads to $n \geq C \Tr(\bA)$. Hence it suffices to find some proper $C$ and show the desired results given $n \geq C \Tr(\bA) $.
	
	Now we prove the first statement. We first study concentration of the sample mean vector $\bar{\bu}$. Since $\bar{\bu}$ is a sub-gaussian random vector with covariance matrix $n^{-1} \bA$, Theorem 2.1 in \cite{HKZ12} asserts the existence of a constant $c_1>0$ such that
	\begin{equation}\label{eq-lem-cov-concentration-1}
		\P \left[ \| \bar\bu \|_2^2
		\leq c_1 n^{-1} \left(  \Tr ( \bA ) + 2 \sqrt{ \Tr (\bA^2) t } + 2 \| \bA \|_2 t \right) \right] \geq 1-e^{-t},\qquad \forall t>0.
	\end{equation}
	Choose any constant $C_1 \geq 500 c_1$. Let $t=n/C_1$, and suppose that $n\geq C_1 \Tr(\bA)$. Using $\|\bA\|_2 \leq 1$ and $\Tr (\bA^2) \leq \Tr (\bA) \| \bA\|_2 \leq \Tr ( \bA  )$, we get
	\begin{align*}
		&c_1 n^{-1} \left( \Tr ( \bA ) + 2 \sqrt{ \Tr (\bA^2) t } + 2 \| \bA \|_2 t  \right) \leq \frac{c_1}{C_1} + \frac{2c_1 \sqrt{ \Tr(\bA) t }}{n} + \frac{2 c_1 t}{n}\\
		&= \frac{c_1}{C_1} + \frac{2c_1 \sqrt{ \Tr(\bA) n/C_1 }}{n} + \frac{2 c_1 (n/C_1 ) }{n}
		= \frac{3c_1}{C_1} + 2c_1 \sqrt{ \frac{ \Tr(\bA)}{ C_1 n } } \leq \frac{5 c_1}{C_1} \leq \frac{1}{10^2}.
	\end{align*}
	Hence $\P \left( \| \bar\bu \|_2 > 1/10 \right) \geq e^{-n/C_1}$.
	
	Now we come to concentration of the sample covariance matrix $\hat \bA$.
	Let $r(\bA) = \Tr(\bA) / \| \bA \|_2 $.
	According to Theorem 9 in \cite{KLo14}, there exists a constant $c_2 \geq 1$ such that the following holds: for any $t \geq 1$, with probability at least $1-e^{-t}$ we have
	\begin{equation}\label{eq-lem-cov-concentration-2}
		\| \hat{\bA}-\bA \|_2
		\leq c_2 \ltwonorm{\bA}\max\left\{\sqrt{\frac{r(\bA)}{n}},\frac{ r(\bA)}{n},\sqrt{\frac{t}{n}}, \frac{t}{n} \right\}.
	\end{equation}
	Note that the upper bound above can be rewritten as
	\begin{equation}\label{eq-lem-cov-concentration-3}
		c_2 \max\left\{\sqrt{\frac{\ltwonorm{\bA} \Tr(\bA)}{n}},\frac{ \Tr(\bA)}{n},\ltwonorm{\bA}\sqrt{\frac{t}{n}}, \ltwonorm{\bA}\frac{t}{n} \right\}
		\leq  c_2 \max\left\{\sqrt{\frac{\Tr(\bA)}{n}},\frac{ \Tr(\bA)}{n}, \sqrt{\frac{t}{n}}, \frac{t}{n} \right\}.
	\end{equation}
	Let $C_2 = 100 c_2^2$. When $n \geq C_2 \Tr(\bA) $, by taking $t=n/C_2$ we get $\mathbb{P}( \|	\hat{\bA} - \bA \|_2> 1/10 ) \leq e^{-n/C_2}$.	
	The proof of the first statement is then finished by taking $C=\max\{ C_1,C_2 \}$.
	
	We proceed to prove the second statement. Let $t=C_1' d$ for some constant $C_1'$. Note that $d \geq \Tr(\bA)$ and $t \geq C_1' \Tr(\bA)$. According to (\ref{eq-lem-cov-concentration-1}), (\ref{eq-lem-cov-concentration-2}) and (\ref{eq-lem-cov-concentration-3}), we obtain that with probability at least $1-2e^{-C_1' d}$,
	$
	\ltwonorm{\bar\bu}\leq \tilde{C} \sqrt{ d/n}
	$
	and
	$
	\| \hat\bA - \bA \|_2 \leq \tilde{C} \max\left\{\sqrt{ d/n}, d/n\right\}
	$
	hold with some constant $\tilde{C}$. Since $n\geq c d$, we have $$ \max\left\{\sqrt{ d/n}, d/n\right\} \leq
	\max\{ 1,1/\sqrt{c} \} \sqrt{ d/n }  .$$ By combining the inequalities above, we obtain that
	$$
	\P(\max\{\ltwonorm{\bar \bu}, \ltwonorm{\hat\bA-\bA}\}\geq C_2' \sqrt{d/n})\leq 2e^{-C_1' d}
	$$
	with $C_2' = \tilde C \max\{1,1/\sqrt{c}\}$.
\end{proof}

\bibliographystyle{ims}
\bibliography{bib}

\end{document}